\newcommand{\cmark}{\ding{51}}
\newcommand{\xmark}{\ding{55}}
\newtheorem{theorem}{Theorem}
\newtheorem{lemma}{Lemma}
\newtheorem{remark}{Remark}
\def\cA{{\mathcal{A}}}
\def\cD{{\mathcal{D}}}
\def\cE{{\mathcal{E}}}
\def\cI{{\mathcal{I}}}
\def\cK{{\mathcal{K}}}
\def\cT{{\mathcal{T}}}
\def\cX{{\mathcal{X}}}
\def\BE{{\mathbb{E}}}
\def\BI{{\mathbb{I}}}
\newcommand{\Var}{\mathrm{Var}}
\DeclareMathOperator*{\argmax}{arg\,max}
\title{A Near-optimal, Scalable and Parallelizable Framework for Stochastic Bandits Robust to Adversarial Corruptions and Beyond}
\author{%
  Zicheng Hu \quad Cheng Chen\thanks{The corresponding author} \\
  MoE Engineering Research Center of Hardware/Software
  Co-design Technology and Application\\
  East China Normal University \\
  \texttt{51275902019@stu.ecnu.edu.cn} \quad \texttt{chchen@sei.ecnu.edu.cn} \\
}
\newcommand{\sectionnotoc}[1]{
  \refstepcounter{section}
  \section*{\thesection\quad #1}
}
\newcommand{\subsectionnotoc}[1]{
  \refstepcounter{subsection}
  \subsection*{\thesubsection\quad #1}
}
\begin{document}

\maketitle

\begin{abstract}%
     We investigate various stochastic bandit problems in the presence of adversarial corruptions. A seminal work for this problem is the BARBAR~\cite{gupta2019better} algorithm, which achieves both robustness and efficiency. However, it suffers from a regret of $O(KC)$, which does not match the lower bound of $\Omega(C)$, where $K$ denotes the number of arms and $C$ denotes the corruption level. In this paper, we first improve the BARBAR algorithm by proposing a novel framework called BARBAT, which eliminates the factor of $K$ to achieve an optimal regret bound up to a logarithmic factor. We also extend BARBAT to various settings, including multi-agent bandits, graph bandits, combinatorial semi-bandits and batched bandits. Compared with the Follow-the-Regularized-Leader framework, our methods are more amenable to parallelization, making them suitable for multi-agent and batched bandit settings, and they incur lower computational costs, particularly in semi-bandit problems. Numerical experiments verify the efficiency of the proposed methods.
\end{abstract}

\sectionnotoc{Introduction}
\label{sec:intr}
The multi-armed bandit (MAB) problem is one of the most fundamental online learning tasks, and it has a long and rich history~\cite{lai1985asymptotically,auer2002finite}. 
Recently, as security concerns have grown, many studies focus on the so-called adversarially corrupted regime where an adversary is allowed to manipulate reward observations in a stochastic environment.
Existing approaches can be broadly categorized into two classes: the Follow-the-Regularized-Leader (FTRL) family of algorithms~\cite{zimmert2021tsallis,ito2021parameter,jin2023improved,tsuchiya2023stability} and elimination-based algorithms~\cite{lykouris2018stochastic,gupta2019better,liu2021cooperative,lu2021stochastic}. FTRL-based methods perform optimally in both stochastic and adversarial settings and also achieve optimal regret in the adversarially corrupted setting. However, these methods need to solve an optimization problem in each round, which may incur high computational costs in many scenarios such as semi-bandits. Also, the FTRL framework is challenging to extend to parallel settings such as batched bandits and multi-agent bandits. On the other hand, elimination-based methods such as BARBAR~\cite{gupta2019better} are computationally efficient and easy to parallelize. However, the BARBAR algorithm suffers from a suboptimal regret of $O(KC)$, and it remains an open question how to achieve optimal regret by elimination-based methods.

In this paper, we improve upon the BARBAR~\cite{gupta2019better} method and propose a novel framework called BARBAT (\textbf{B}ad \textbf{A}rms get \textbf{R}ecourse, \textbf{B}est \textbf{A}rm gets \textbf{T}rust) for stochastic bandits with adversarial corruptions.
Unlike BARBAR, which employs dynamic epoch lengths, BARBAT adopts static epoch lengths by increasing the probability of selecting the estimated best arm. In addition, BARBAT utilizes epoch-varying failure probabilities $\delta_m$ instead of a global failure probability $\delta$ used in BARBAR, allowing us to achieve near-optimal regret bounds. We further demonstrate that our BARBAT framework is scalable and parallelizable by extending it to various scenarios, including cooperative multi-agent multi-armed bandits (CMA2B)~\cite{wang2023achieving,liu2021cooperative,ghaffari2025multi}, graph bandits~\cite{ito2022nearly,dann2023blackbox}, semi-bandits~\cite{zimmert2019beating,ito2021hybrid,tsuchiya2023further} and batched bandits~\cite{perchet2015batched,gao2019batched,esfandiari2021regret}. The regret bounds of our methods along with comparisons to recent works are summarized in Table~\ref{tab:rw}.

We summarize the contributions of this work as follows:
\begin{itemize}[leftmargin=*]
    \item \textbf{BARBAT framework$\quad$} We propose the BARBAT framework for stochastic bandits with adversarial corruptions that achieves a near-optimal regret bound. This result resolves the open problem raised in~\cite{gupta2019better}. Moreover, BARBAT does not require prior knowledge of the time horizon $T$.
    \item \textbf{Extensions of BARBAT$\quad$} We extend our BARBAT framework to various corrupted bandit settings including multi-agent bandits, batched bandits, strongly observable graph bandits and $d$-set semi-bandits. We show that all resulting algorithms achieve near-optimal regret. To the best of our knowledge, our work is the first to study a robust algorithm for batched bandits. We also provide a lower bound for batched bandits with adversarial corruptions.

    \item \textbf{Comparison with FTRL$\quad$} Compared to FTRL-based methods, our framework has advantages in three ways. First, our methods are more efficient. The FTRL framework requires solving a constrained convex optimization problem in each round. Such problems generally do not have a closed-form solution (except for FTRL with Shannon-entropy regularization) and can therefore be computationally costly, particularly in semi-bandit scenarios. Second, our BARBAT framework is parallelizable, making it suitable for batched and multi-agent settings, whereas extending FTRL to these scenarios is quite challenging. Lastly, as shown in the last column of Table~\ref{tab:rw}, our methods do not require the assumption of a unique optimal action, which is required by FTRL-based methods in environments other than MAB.
\end{itemize}

\paragraph{Related work}
In recent years, the study of stochastic bandits with adversarial corruptions has garnered significant attention. Existing approaches can be broadly classified into two main categories: elimination-based algorithms~\citep{lykouris2018stochastic, gupta2019better, liu2021cooperative, lu2021stochastic} and FTRL-based algorithms~\citep{zimmert2021tsallis, ito2021parameter, jin2023improved, tsuchiya2023stability}. Elimination-based methods, with the Active Arm Elimination (AAE) algorithm~\citep{even2006action} being a prominent example, progressively discard suboptimal arms based on empirical reward estimates. By removing poorly performing arms, these algorithms focus their sampling efforts on a shrinking set of promising candidates. \cite{lykouris2018stochastic} introduced a multi-layer extension of the AAE algorithm, achieving a regret bound of $\widetilde{O}\left(KC\sum_{\Delta_k > 0} 1 / \Delta_k\right)$. Building on this, \cite{gupta2019better} proposed the BARBAR algorithm and reduced the regret to $\widetilde{O}\left(KC + \sum_{\Delta_k > 0} 1 / \Delta_k\right)$. On the other hand, FTRL-based methods~\citep{bubeck2012best, seldin2014one, auer2016algorithm, wei2018more, zimmert2021tsallis, ito2021parameter, jin2023improved, tsuchiya2023stability} can achieve optimal regret bounds in the stochastic, adversarial and adversarially corrupted settings. However, these methods are more computationally expensive than elimination-based methods since they need to solve an optimization problem in each round.

Stochastic bandits with adversarial corruptions can be extended to graph bandits, semi-bandits, and batched bandits, each of which benefits from specialized techniques to handle challenges such as exponential action spaces, partial observability, or limited adaptivity~\citep{rouyer2022near, ito2022nearly, dann2023blackbox, zimmert2019beating, ito2021hybrid, tsuchiya2023further, perchet2015batched, gao2019batched, esfandiari2021regret}. Finally, multi-agent extensions~\citep{liu2021cooperative, wang2023achieving, chawla2023collaborative, ghaffari2025multi} enable agents to pool information and expedite the learning process but often encounter larger regret bounds under adversarial corruptions or rely on group-level performance guarantees rather than individual regrets.
\begin{table*}[!ht]
    \centering
    \footnotesize
    \renewcommand\arraystretch{2}
    \scalebox{0.96}{
        \begin{tabular}{|c|c|c|c|c|}
        \hline
        Setting & Algorithm & Regret & Prop.~1 & Prop.~2\\
        \hline
        \multirow{6}{*}{Multi-arm bandits}
        & BARBAR~\cite{gupta2019better} & $KC + \sum_{\Delta_k > 0}\frac{\log^2(T)}{\Delta_k}$ & \cmark & \cmark \\
        \cline{2-5}
        & $\frac{1}{2}$-Tsallis-FTRL~\cite{zimmert2021tsallis} & $C + \sum_{\Delta_k > 0}\frac{\log(T)}{\Delta_k}$ & \xmark & \xmark\\
        \cline{2-5}
        & $\frac{1}{2}$-Tsallis-FTRL~\cite{jin2023improved} & $C + \sum_{\Delta_k > 0}\frac{\log(T)}{\Delta_k}$ & \xmark & \cmark \\
        \cline{2-5}
        & Shannon-FTRL~\cite{jin2023improved,ito2022nearly} & $C + \sum_{\Delta_k > 0}\frac{\log^2(T)}{\Delta_k}$ & \cmark & \cmark\\
        \cline{2-5}
        & Algorithm~\ref{algs:BARBAT} & $C + \sum_{\Delta_k > 0}\frac{\log^2(T)}{\Delta_k}$ & \cmark & \cmark\\
        \cline{2-5}
        & Lower bound~\cite{gupta2019better} & $C + \sum_{\Delta_k > 0}\frac{\log(T)}{\Delta_k}$ & -- & --\\
        \cline{1-5}
    
        \multirow{4}{*}{\makecell{\makecell[c]{Multi-agent\\ multi-arm bandits}}}
        & CBARAC~\cite{liu2021cooperative} & $C + \frac{K\log^2(T)}{\Delta}$ & \cmark & \cmark \\
        \cline{2-5}
        & DRAA~\cite{ghaffari2025multi} & $\frac{C}{V} + \frac{K\log^2(T)}{V\Delta}$ & \cmark & \cmark \\
        \cline{2-5}
        & Algorithm~\ref{algs:MA-BARBAT} & $\frac{C}{V} + \sum_{\Delta_k > 0}\frac{\log^2(T)}{V\Delta_k}$ &
        \cmark & \cmark \\
        \cline{2-5}
        & Lower bound$^{\ddagger}$ & $\frac{C}{V} + \sum_{\Delta_k > 0}\frac{\log(T)}{V\Delta_k}$ & -- & --\\
        \cline{1-5}
    
        \multirow{2}{*}{\makecell{\makecell[c]{Batched bandits}}} & Algorithm~\ref{algs:BB-BARBAT} & \makecell{$C T^{\frac{1}{L+3}} 
            + T^{\frac{4}{L+3}}\Big( $\\ $
                \sum_{\Delta_k > 0} \frac{L\log(T)}{\Delta_k} 
                + \frac{K \log(T)}{L\Delta} 
            \Big)$} & \cmark & \cmark \\
        \cline{2-5}
        & Lower bound (Ours) & $T^{1/L}(K + C^{1-1/L})$ & -- & --\\
        \cline{1-5}

        \multirow{5}{*}{\makecell{\makecell[c]{Strongly observable \\graph bandits$^\dagger$ }}}
        & Elise~\cite{lu2021stochastic} & $\alpha C + \sum_{k \in \cD^*}\frac{\log^2(T)}{\Delta_k}$ & \cmark & \cmark \\
        \cline{2-5}
        & Shannon-FTRL~\cite{ito2022nearly} & $C + \frac{\alpha \log^3(T)}{\Delta}$ & \cmark & \xmark\\
        \cline{2-5}
        & $(1{-}\frac{1}{\log(K)})$-Tsallis-FTRL\cite{dann2023blackbox} & $C + \frac{\min \{\widetilde{\alpha}, \alpha\log(K)\}\log(T)}{\Delta}$ & \xmark  & \xmark\\
        \cline{2-5}
        & Algorithm~\ref{algs:SOG-BARBAT} & $C + \sum_{k \in \cI^*}\frac{\log^2(T)}{\Delta_{k}}$ & \cmark & \cmark\\
        \cline{2-5}
        & Lower bound$^{\ddagger}$ & $C + \sum_{k \in \cI^*}\frac{\log(T)}{\Delta_{k}}$ & -- & --\\
        \cline{1-5}
        
        \multirow{5}{*}{\makecell{\makecell[c]{$d$-set \\semi-bandits}}}
        & HYBRID~\cite{zimmert2019beating} & $ dC + \sum_{\Delta_k > 0}\frac{\log(T)}{\Delta_k}$ & \xmark & \xmark\\
        \cline{2-5}
        & LBINF~\citep{ito2021hybrid} & $ dC + \sum_{\Delta_k > 0}\frac{\log(T)}{\Delta_k}$ & \xmark & \xmark\\
        \cline{2-5}
        & LBINF\_LS, LBINF\_GD~\cite{tsuchiya2023further} & $ dC + \sum_{\Delta_k > 0}\frac{\log(T)}{\Delta_k}$ & \xmark & \xmark \\
        \cline{2-5}
        & Algorithm~\ref{algs:DS-BARBAT} & $dC + \sum_{\Delta_k > 0}\frac{\log^2(T)}{\Delta_{k}}$ & \cmark & \cmark \\
        \cline{2-5}
        & Lower bound$^{\ddagger}$ & $dC + \sum_{\Delta_k > 0}\frac{\log(T)}{\Delta_{k}}$ & -- & --\\
        \cline{1-5}
        
        \end{tabular}
    }
    \caption{
    ``Prop.~1" denotes that the algorithm can efficiently compute the sampling probabilities of the arms, while “Prop.~2” denotes that the assumption of a unique optimal arm is not required.
    For graph bandits, $\alpha$ denotes the \emph{independence number} of graph $G$ and $\cI^*$ denotes the set of at most $O\big(\alpha \ln\bigl(\frac{K}{\alpha} + 1\bigr)\big)$ arms with the smallest gaps. For batched bandits, $L$ denotes the number of batches.\\
    ``$\dagger$": Elise is restricted to undirected graphs, while the remaining studies address directed graphs.\\ 
    ``$\ddagger$": These lower bounds can be directly achieved by combining the corruption lower bound from~\cite{gupta2019better} with the lower bound for the stochastic setting.
    }
    \label{tab:rw}
\end{table*}

\sectionnotoc{Preliminaries}
\label{sec:ps}
We consider stochastic multi-armed bandits with adversarial corruptions. In this setting, the agent interacts with the environment over $T$ rounds by selecting an arm from a set of $K$ arms, denoted by $[K]$. In each round $t$, the environment generates an i.i.d. random reward vector $\{r_{t,k}\}_{k \in [K]}$ from an unknown fixed distribution. An adversary, having access to the reward vector, subsequently attacks these rewards to produce the corrupted reward vector $\{\widetilde{r}_{t,k}\}_{k \in [K]}$. The agent then selects an arm $I_t$ according to its strategy and observes the corresponding corrupted reward $\widetilde{r}_{t,I_t}$. Let $\mu_k$ denote the mean reward of arm $k\in[K]$, and let $k^* \in \arg\max_{k \in [K]} \mu_k$
be an optimal arm. The corruption level is defined as $C = \sum_{t=1}^T \max_{k \in [K]} \left| \widetilde{r}_{t,k} - r_{t,k} \right|$. The suboptimality gap for arm $k$ is defined as $\Delta_k = \mu_k - \mu_{k^*}$, and we denote $\Delta=\min_{\Delta_k>0}\Delta_k$ as the smallest positive suboptimality gap.

Our goal is to minimize the pseudo-regret:
\[
R(T) = \max_{k\in[K]} \BE\left[\sum_{t=1}^T r_{t,k}\right] - \BE\left[\sum_{t=1}^T r_{t,I_t}\right] = \sum_{t=1}^T \mu_{k^*} - \sum_{t=1}^T \mu_{I_t} = \sum_{t=1}^T \Delta_{I_t},
\]
which is a standard definition in stochastic bandits.
Notice that FTRL-based methods~\citep{rouyer2022near,ito2022nearly,dann2023blackbox,zimmert2019beating,ito2021hybrid,tsuchiya2023further,perchet2015batched,gao2019batched,esfandiari2021regret} typically adopt a different form of pseudo-regret, denoted as $\widetilde{R}(T) = \max_{k} \BE\big[\sum_{t=1}^T (\tilde r_{t,I_t} - \tilde r_{t,k})\big]$, which is more appropriate for adversarial settings. However, we argue that comparing against the true means, rather than the corrupted rewards, is more natural.
Theorem~3 in \cite{liu2021cooperative} presents the conversion between the two definitions of pseudo-regret in the adversarially corrupted setting as
$R(T) = \Theta(\widetilde{R}(T) + O(C))$.

\paragraph{CMA2B} In cooperative multi-agent multi-armed bandits, $V$ agents collaborate to play the bandit game. We denote the set of agents by $[V]$. All agents are allowed to pull any arm from $[K]$. When multiple agents pull the same arm, each receives an independent reward drawn from that arm's distribution. The adversary is allowed to attack all agents with a total corruption budget $C$. We only consider the centralized setting where each agent can broadcast messages to all other agents without any delays. For simplicity, we define the total communication cost for each agent $v$ as:
\begin{align*}
\textrm{Cost}_v(T)\triangleq \sum_{v \in [V]}\sum_{t = 1}^{T}\mathbb{I}\{\mathrm{agent~}v\text{ broadcasts a message to other agents in round } t\}.   
\end{align*}
We aim to minimize the individual pseudo-regret for each agent $v$ over a horizon of $T$ rounds:
\[R_v(T) = \max_{k\in[K]} \BE\left[\sum_{t=1}^T r_{v,t}(k)\right] - \BE\left[\sum_{t=1}^T r_{v,t}({I_{v,t}})\right] = \sum_{t=1}^T (\mu_{k^*} - \mu_{I_{v,t}})= \sum_{t=1}^T \Delta_{I_{v,t}}.\]

\paragraph{Batched bandits} In this case, we suppose the agent can only observe the corrupted rewards from a batch after it has concluded. The time horizon $T$ is divided into $L$ batches, represented by a grid $\cT = \{t_1, t_2, \dots, t_{L}\}$, where $1 \leq t_1 < t_2 < \cdots < t_{L} = T$. As in most previous works~\cite{gao2019batched}, the number of batches $L$ of interest is at most $\log(T)$. Additionally, the grid can be one of two types:
\begin{itemize}[leftmargin=*]
    \item \textbf{Static Grid:} The grid is fixed and predetermined before any arm sampling occurs.
    \item \textbf{Adaptive Grid:} The value of $t_j$ is determined dynamically after observing the rewards up to time $t_{j-1}$, which may incorporate external randomness.
\end{itemize}
Previous works~\citep{perchet2015batched,gao2019batched,zhang2020inference,esfandiari2021regret,jin2021double} mainly focus on stochastic batched bandits. To the best of our knowledge, no existing work studies stochastic batched bandits under adversarial corruptions.

\paragraph{Strongly observable graph bandits} In this setting, when the agent pulls arm $k$, it may observe the rewards of other arms. This reward-feedback structure is represented by a directed graph $G=([K],E)$, where pulling arm $k_i$ reveals the rewards of each arm $k_j$ such that $(k_i, k_j)\in E$. Each arm either has a self-loop or receives incoming edges from all other arms~\citep{alon2017nonstochastic}. Let $\alpha$ denote the independence number of $G$, i.e., the maximum size of an independent set. A vertex set $D$ is called an out-domination set of $G$ if every vertex in $G$ has an incoming edge from some vertex in $D$.

\paragraph{$d$-set semi-bandits} The $d$-set semi-bandits problem is a special case of semi-bandits. Given $d \in [K{-}1]$, in each round $t$ the agent selects a combinatorial action $\mathbf x_{t}$ (containing $d$ distinct arms) from the set
$\mathcal{X} = \big\{ \mathbf{x} \in \{0, 1\}^K \, \big\vert \,\sum_{k=1}^K \mathbf x(k) = d \big\}$,
where $\mathbf x(k) = 1$ indicates that arm $k$ is selected. Then the agent can observe the corrupted rewards for each arm in the action~(i.e., for each arm $k$ such that $\mathbf x_{t}(k) = 1$). Let $\mu_k$ denote the mean reward of arm $k \in [K]$. Without loss of generality, we assume $\mu_1 \geq \mu_2 \geq \cdots \geq \mu_K$ and denote $\Delta_k = \mu_d - \mu_k$ for all $k > d$. 
We aim to minimize the pseudo-regret over a horizon of $T$ rounds:
\[R(T) = \max_{\mathbf x\in\cX} \BE\left[\sum_{t=1}^T \sum_{k:\mathbf x[k]=1}r_{t,k}\right] - \BE\left[\sum_{t=1}^T \sum_{k:\mathbf x_t[k]=1}r_{t,k}\right] = \sum_{t=1}^T \bigg(\sum_{k=1}^d \mu_{k} - \sum_{k:\mathbf{x}_{t}(k)=1}\mu_k\bigg).\]

\sectionnotoc{Stochastic Multi-Armed Bandits Robust to Adversarial Corruptions}
\label{sec:mab}
\paragraph{Main idea} Notice that in the BARBAR~\citep{gupta2019better} algorithm, the arm $k$ is approximately drawn $n_k^m=O(1 / (\Delta_k^{m-1})^2)$ times in expectation during epoch $m$, where $\Delta_k^{m-1}$ is the estimated gap computed in the previous epoch. Thus, each epoch has a length of approximately $N_m \approx O(\sum_{k} 1 / (\Delta_k^{m-1})^2)$. Consequently, the adversary can significantly extend the length of an epoch by utilizing all of the corruption budget in the previous epoch, thereby implicitly increasing the number of pulls for suboptimal arms. To address this challenge, our BARBAT algorithm chooses a data-independent epoch length $N_m \approx O(K 2^{2(m-1)})$, which cannot be affected by the adversary. We still allocate $n_k^m=O(1 / (\Delta_k^{m-1})^2)$ pulls to each arm $k$, but assign all remaining pulls to the estimated optimal arm $k_m$, resulting in pulling arm $k_m$ for about $N_m - \sum_{k \neq k_m} n_k^m$ rounds. On the other hand, pulling the estimated optimal arm $k_m$ too many times may lead to additional regret, as arm $k_m$ could be suboptimal. To mitigate this additional regret, we adopt epoch-varying failure probabilities $\delta_m\approx O(1/(mK2^{2m}))$ for epoch $m$ rather than using a global failure probability $\delta$. The choice of $\delta_m$ also eliminates the need to know the time horizon $T$.



\begin{algorithm}[t]
    \caption{BARBAT: Bad Arms get Recourse, Best Arm gets Trust}
    \label{algs:BARBAT}
    \begin{algorithmic}[1]
    \STATE \textbf{Initialization:} 
     Set the initial round \( T_0 \gets 0 \), \( \Delta_k^0 \gets 1 \), and \( r_k^0 \gets 0 \) for all \( k \in [K] \).
    \FOR{epochs $m = 1,2,\cdots$}
        \STATE Set $\zeta_m \gets (m {+} 4)2^{2(m+4)}\ln (K)$ and $\delta_m \gets 1/(K\zeta_m)$.
        \STATE Set $\lambda_m \gets 2^8 \ln{\left(4K / \delta_m\right)}$ and $\beta_m \gets \delta_m / K$.
        \STATE Set $n_k^m \gets \lambda_m (\Delta_k^{m-1})^{-2}$ for all $k \in [K]$.
        \STATE Set $N_m \gets \lceil K \lambda_m 2^{2(m-1)} \rceil$ and $T_m \gets T_{m-1} + N_m$. 
        \STATE Set \( k_m \gets \mathop{\arg\max}_{k \in [K]} r_k^{m-1} \) and compute $\widetilde{n}_k^m$ according to \eqref{eq:nk}.
        \FOR{$t = T_{m-1} + 1$ to $T_m$}
            \STATE Choose arm $I_t\sim p_m$ where $p_m(k)= \widetilde{n}_k^m / N_m$.
            \STATE Observe the corrupted reward $\widetilde{r}_{t,I_t}$ and update the total reward $S_{I_t}^m \gets S_{I_t}^m + \widetilde{r}_{t,I_t}$.
        \ENDFOR
        \STATE Set $r_k^m \gets \min \{S_k^m / \widetilde{n}_k^m, 1\}$ and $r_*^m \gets \max_{k \in [K]}\Big\{r_k^m - \sqrt{\frac{4\ln(4/\beta_m)}{\widetilde{n}_k^m}}\Big\}$.
        \STATE Set $\Delta_k^m \gets \max\{2^{-m}, r_*^m - r_k^m\}$.
    \ENDFOR
    \end{algorithmic}
\end{algorithm}

\paragraph{Algorithm}
We introduce our BARBAT framework for stochastic bandits with adversarial corruptions in Algorithm~\ref{algs:BARBAT}. In each epoch $m$, BARBAT sets the failure probability as
$\delta_m = 1/(K \zeta_m)$, where $\zeta_m$ is chosen to make $\delta_m \le 1/N_m$. 
The algorithm sets the epoch length to be
$N_m=\big\lceil K\lambda_m 2^{2(m-1)}\big\rceil \ge \sum_{k \in [K]}n_k^m$. We denote $k_m$ as the arm with the maximum empirical reward in the previous epoch; in case of ties, the arm with the smallest index is chosen. Then the number of pulls for each arm is set to
\begin{equation}\label{eq:nk}
\widetilde{n}_k^m =
    \begin{cases}
    n_k^m, & k \neq k_m, \\
    N_m - \sum_{k \neq k_m} n_k^m, & k = k_m.
    \end{cases}  
\end{equation}
During epoch $m$, the agent pulls arm $I_t$ with probability $p_m(I_t)=\widetilde{n}_{I_t}^m/N_m$, observes the corrupted reward $\widetilde{r}_{t,I_t}$, and accumulates the total reward $S_{I_t}^m$. At the end of each epoch $m$, BARBAT updates the estimated reward
$r_k^m=\min\{S_k^m/\widetilde{n}_k^m,1\}$ and computes the estimated suboptimality gap $\Delta_k^m$.

\paragraph{An example showing how BARBAT avoids paying an $O(KC)$ term}
Gupta et al.~\cite{gupta2019better} provide an example where BARBAR suffers regret $\Omega(KC)$,
in which all corruptions happen in an epoch $c$ with corruption level $C=N_c=\lambda(2^{2(c-1)} + O(K))$. Then the algorithm will lose all information from the previous epochs so that the length of the next epoch becomes $N_{c+1} = K\lambda 2^{2c}=O(KC)$ and each arm will be pulled uniformly. In this scenario, BARBAR will pull each arm approximately $O(C)$ times and generate $O((K-1)C)$ regret. On the other hand, our BARBAT algorithm fixes the length of the $m$-th epoch to be $N_m=K\lambda_m 2^{2(m-1)}$.
If the adversary intends to corrupt an entire epoch $c$, the required corruption level is $C=N_c=K\lambda 2^{2(c-1)}$, which is $K$ times larger than needed to attack BARBAR. Thus our BARBAT algorithm will only suffer $O(C)$ regret from adversarial corruptions.

\paragraph{Regret Bound} BARBAT achieves the following regret bound, with proofs deferred to Appendix~\ref{ape:mab}.
\begin{theorem}
    \label{the:erb}
    The expected regret of BARBAT satisfies
    \[
    R(T) = O\bigg(C + \sum_{\Delta_k > 0}\frac{\log(T)\log(KT)}{\Delta_k} + \frac{K\log(1 / \Delta)\log(K / \Delta)}{\Delta}\bigg).
    \]
\end{theorem}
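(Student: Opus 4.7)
The plan is to follow the standard epoch-based template of BARBAR-type analyses, adapted to BARBAT's two distinguishing features---fixed-length epochs and boosted pulls of the estimated best arm. First, I would define a good event $\cE$ by applying Hoeffding's inequality plus a union bound over arms in each epoch with per-arm failure probability $\beta_m = \delta_m/K$; on $\cE$, for every $(m,k)$,
\[
\bigl|r_k^m - \mu_k\bigr| \leq \sqrt{\frac{4\ln(4/\beta_m)}{\widetilde n_k^m}} + \frac{C_k^m}{\widetilde n_k^m},
\]
where $C_k^m$ denotes the corruption mass targeting arm $k$ in epoch $m$. The tunings $n_k^m = \lambda_m(\Delta_k^{m-1})^{-2}$ and $\lambda_m = 2^8\ln(4K/\delta_m)$, combined with $\widetilde n_k^m \geq n_k^m$, collapse the stochastic error to $\Delta_k^{m-1}/16$. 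The geometric growth of $\zeta_m$ makes $\sum_m\delta_m$ summable and keeps the per-epoch bad-event regret $N_m\delta_m$ of constant order, so the contribution from $\cE^c$ is absorbed into lower-order terms.

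Next, on $\cE$, I would prove by induction on $m$ the gap-tracking inequality
\[
\max\bigl\{2^{-m},\tfrac{1}{2}\Delta_k\bigr\} \leq \Delta_k^m \leq 2^{-m} + 2\Delta_k + (\text{bounded corruption correction}),
\]
by unrolling $\Delta_k^m = \max\{2^{-m},r_*^m - r_k^m\}$ via the concentration bound. This mirrors the BARBAR gap lemma but exploits the sharper per-epoch constant $1/16$ afforded by the tuned $\lambda_m$, and ensures that $\Delta_k^{m-1}$ remains a faithful surrogate for $\Delta_k$ throughout.

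The regret is then decomposed per epoch into (i) non-leader regret $\sum_{k\neq k_m}n_k^m\Delta_k$, (ii) leader regret $\widetilde n_{k_m}^m\Delta_{k_m}$, and (iii) bad-event contribution. Part (i) is handled arm-by-arm by splitting the epoch index at $m^*_k \asymp \log_2(1/\Delta_k)$: the transient regime $m\leq m^*_k$ telescopes geometrically to $O(\lambda_{m^*_k}/\Delta_k)$, while the saturated regime contributes $\sum_{m>m^*_k}\lambda_m/\Delta_k = O(\log(T)\log(KT)/\Delta_k)$, jointly yielding the second term of the bound. For (ii), the inequality $r_{k_m}^{m-1}\geq r_{k^*}^{m-1}$ together with concentration and the gap-tracking lemma gives $\Delta_{k_m} \leq O(2^{-(m-1)}) + O\bigl(C_{k_m}^{m-1}/\widetilde n_{k_m}^{m-1} + C_{k^*}^{m-1}/\widetilde n_{k^*}^{m-1}\bigr)$; the $2^{-m}$ part summed over $m\leq O(\log(1/\Delta))$ (beyond which the bound forces $k_m=k^*$) yields the third term $O(K\log(1/\Delta)\log(K/\Delta)/\Delta)$, while the corruption part telescopes to $O(C)$.

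The main obstacle will be the clean handling of the corruption in part (ii): since the leader receives $\widetilde n_{k_m}^m \approx N_m$ pulls, each unit of corruption in epoch $m-1$ that biases leader selection can in principle be amplified by $N_m/\widetilde n_{k^*}^{m-1}$ in the next epoch's regret. The crucial calculation will show that, under the gap-tracking lemma and the fixed choice $N_m = \lceil K\lambda_m 2^{2(m-1)}\rceil$, this amplification factor sums (rather than multiplies) across epochs, yielding an $O(C)$ bound instead of $O(KC)$. This telescoping is precisely the mechanism by which BARBAT's data-independent epoch lengths remove the spurious $K$ factor present in BARBAR.
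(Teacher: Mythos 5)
Your overall skeleton (per-epoch concentration with failure probability $\beta_m=\delta_m/K$, an inductive gap-tracking lemma, and a leader/non-leader/bad-event regret decomposition) matches the paper's proof. However, your concentration inequality is the wrong one, and this is fatal to the claimed $O(C)$ corruption dependence. You bound the corruption bias of $r_k^m$ by $C_k^m/\widetilde n_k^m$, which is what a worst-case (Hoeffding-style) argument gives. The paper instead proves, via a Freedman/martingale argument on the indicators $Y_k^t$, that the bias is $O(C_m/N_m)$: because arm $k$ is sampled i.i.d.\ with probability $q_k^m=\widetilde n_k^m/N_m$ throughout the epoch, only a $q_k^m$-fraction of the corruption aimed at arm $k$ is absorbed into $S_k^m$ in expectation, so after dividing by $\widetilde n_k^m$ the relative bias is $C_k^m/N_m$, with $N_m$ the \emph{full, data-independent} epoch length. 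The gap between the two bounds is exactly a factor of $N_m/n_k^m$, which is $\approx K$ for arms whose estimated gap sits at the floor $2^{-(m-1)}$. With your bound, an adversary spending $C_k^m\approx \lambda_m/\Delta_k$ on a single suboptimal arm can bias its estimate by $\Delta_k$, get it promoted to leader, and force $N_{m+1}\Delta_k\approx K\lambda/\Delta_k\approx KC$ regret --- i.e.\ your analysis can only recover BARBAR's $O(KC)$, not the theorem's $O(C)$. The ``crucial calculation'' you defer to the end cannot succeed from the concentration bound as you stated it; the $K$-removal happens in the concentration lemma itself, not downstream.

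A second, related gap: your gap-tracking claim $\max\{2^{-m},\tfrac12\Delta_k\}\le \Delta_k^m$ cannot hold unconditionally under corruption --- the adversary can make a suboptimal arm look optimal, forcing $\Delta_k^m=2^{-m}\ll\tfrac12\Delta_k$. The paper's Lemma on $\Delta_k^m$ carries an explicit correction $-6\rho_m$ on the lower bound, where $\rho_m=\sum_{s\le m} D_s/(8^{m-s}N_s)$ is a geometrically discounted corruption rate, and the main proof adds a separate case ($0<\Delta_k\le 64\rho_{m-1}$) whose regret is charged against corruption via the bound $\sum_m \rho_{m-1}N_m\le 11\sum_m D_m=O(C)$. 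Your outline puts the corruption correction only on the upper bound and omits this regime entirely, so the regret of arms whose gap estimates have been corrupted downward is unaccounted for. The rest of your decomposition (the $m\lesssim\log_2(1/\Delta_k)$ split for non-leaders, the $\sum_{m\le\log_2(8/\Delta)}K\lambda_m/\Delta$ leader term using $\Delta_{k_m}^{m-1}=2^{-(m-1)}$, and $N_m\delta_m\le 1$ for the bad event) is consistent with the paper.
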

\begin{remark}
The regret bound of BARBAT contains an extra term $K\log^2(1/\Delta)/\Delta$. Notice that this term is independent of $T$, making it generally much smaller than the major term $\sum_{\Delta_k > 0}\frac{\log^2(T)}{\Delta_k}$. 
\end{remark}
\begin{remark}
Notice that a concurrent work~\cite{ghaffari2025multi} also employs static epoch lengths and removes the factor $K$ from the regret bound. However, they adopt a global $\delta$ across all epochs, resulting in a regret of $O(C+K\log^2(T)/\Delta)$, which is worse than our regret of $O\left(C + \sum_{\Delta_k > 0}\frac{\log^2(T)}{\Delta_k}\right)$.
\end{remark}

\sectionnotoc{Extensions}
\label{sec:ext}
In this section, we extend BARBAT to various corrupted bandit settings and introduce the necessary modifications to accommodate the specific configurations of these environments.

\subsectionnotoc{Cooperative Multi-Agent Multi-Armed Bandits}
\label{sec:cma2b}
We extend BARBAT to the multi-agent setting and propose the MA-BARBAT (\textbf{M}ulti-\textbf{A}gent \textbf{BARBAT}) algorithm in Algorithm~\ref{algs:MA-BARBAT}.
In each epoch $m$, all agents pull arms according to the same probability distribution and broadcast their total rewards $\{S_{v,k}^m\}_{k=1}^K$ at the end of the epoch. Each agent then updates the estimated suboptimality gaps $\Delta_k^{m}$ using the received messages and its own observations.
We present the regret bound and communication cost of our MA-BARBAT algorithm in the following theorem, with the proof given in Appendix \ref{ape:cma2b}.
\begin{algorithm}[t]
    \caption{MA-BARBAT}
    \label{algs:MA-BARBAT}
    \begin{algorithmic}[1]
    \STATE \textbf{Initialization:} 
    Set the initial round $ T_0 \gets 0 $, $ \Delta_k^0 \gets 1 $, and $ r_k^0 \gets 0 $ for all $ k \in [K] $.
    \FOR{epoch $ m = 1, 2, \ldots $}
        \STATE Set $ \zeta_m \gets (m + 4) 2^{2(m+4)} \ln(VK) $ and $ \delta_m \gets 1 / (VK \zeta_m) $. 
        \STATE Set $\lambda_m \gets 2^8 \ln\left( 4K / \delta_m \right) / V $ and $ \beta_m \gets \delta_m / (VK) $.
        \STATE Set $ n_k^m \gets \lambda_m (\Delta_k^{m-1})^{-2} $ for all $k \in [K]$.
        \STATE Set $ N_m \gets \lceil K \lambda_m 2^{2(m-1)} \rceil$ and $T_m \gets T_{m-1} + N_m $.
        \STATE Set $ k_m \gets \mathop{\arg\max}_{k \in [K]} r_k^{m-1} $ and compute $\widetilde{n}_k^m$ according to \eqref{eq:nk}.
        \FOR{$t = T_{m-1} + 1$ \textbf{to} $ T_m $}
            \STATE Each agent chooses arm $ I_t \sim p_m $, where $ p_m(k) = \widetilde{n}_k^m / N_m $.
            \STATE Each agent observes the corrupted reward $ \widetilde{r}_{t,I_t} $ and compute
            $S_{I_t}^m \gets S_{I_t}^m + \widetilde{r}_{t,I_t}.$
        \ENDFOR
        \STATE Each agent $v$ broadcasts the messages $\{S_{v,k}^m\}_{k \in [K]}$ to other agents.
        \STATE Set $r_k^m \gets \min \{\sum_{v \in [V]}S_{v,k}^m / (V\widetilde{n}_k^m), 1\}$ and $r_*^m {\gets} \max_{k \in [K]}\Big\{r_k^m - \sqrt{\frac{4\ln(4/\beta_m)}{V\widetilde{n}_k^m}}\Big\}$.
        \STATE Set $\Delta_k^m \gets \max\{2^{-m}, r_*^m - r_k^m\}$.
    \ENDFOR
    \end{algorithmic}
\end{algorithm}
\begin{theorem}
\label{the:ma-erb}    
    The MA-BARBAT algorithm requires only a communication cost of $V\log(VT)$ over $T$ rounds. The individual regret of each agent $v$ satisfies:
    \[R_v(T) = O\left(\frac{C}{V} + \sum_{\Delta_k > 0}\frac{\log(VT)\log(KVT)}{V\Delta_k} + \frac{K\log\left(1/\Delta\right)\log\left(KV / \Delta\right)}{V\Delta}\right).\]
\end{theorem}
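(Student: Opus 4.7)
The plan is to adapt the single-agent BARBAT analysis of Theorem~\ref{the:erb} to the $V$-agent setting. The key observation is that within an epoch all agents sample from the same distribution $p_m$, so their pooled observations act exactly like a single-agent run with $V$ times the samples per round, while the per-agent marginal sampling distribution is unchanged.

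First, I would define the standard clean event: for every epoch $m$ and every arm $k$, the aggregated estimator $r_k^m$ built from the $V\widetilde{n}_k^m$ pulls across all agents satisfies
\[
|r_k^m - \mu_k| \le \sqrt{4\ln(4/\beta_m)/(V\widetilde{n}_k^m)} + c_m/(V\widetilde{n}_k^m),
\]
where $c_m$ denotes the corruption budget spent in epoch $m$. A Hoeffding bound together with the epoch-varying failure probability $\beta_m = \delta_m/(VK)$ and a union bound over arms and agents gives failure probability at most $\delta_m$ per epoch; summing over the $O(\log(VT))$ epochs shows the clean event fails with probability $O(1)$, and the lost regret is absorbed because $\delta_m \le 1/N_m$ by the choice of $\zeta_m$.

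Second, I would show by induction on $m$ that on the clean event the estimated gap tracks the true gap as $\tfrac{1}{2}\Delta_k - O(\varphi_k^m) \le \Delta_k^m \le 2\Delta_k + O(\varphi_k^m)$, where $\varphi_k^m$ is an epoch-summable corruption slack, exactly as in the single-agent proof. The crucial identity is that $V\lambda_m = 2^8\ln(4K/\delta_m)$ matches the single-agent concentration budget, so the Hoeffding half-width $\sqrt{4\ln(4/\beta_m)/(V\widetilde{n}_k^m)}$ collapses to $\Delta_k^{m-1}/16$ exactly as in Theorem~\ref{the:erb}; the induction step then transfers verbatim after substituting $V\widetilde{n}_k^m$ for $\widetilde{n}_k^m$ throughout.

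Third, I would bound the expected individual regret of each agent. Since each agent independently samples from $p_m$, the expected per-agent regret in epoch $m$ equals $\sum_k p_m(k) N_m \Delta_k = \sum_k \widetilde{n}_k^m \Delta_k$, the same expression as in single-agent BARBAT but with $\lambda_m$ now carrying a $1/V$ factor. Splitting this sum yields (a) a gap-dependent contribution from suboptimal arms that telescopes to $\sum_{\Delta_k > 0}\log(VT)\log(KVT)/(V\Delta_k)$, (b) a ``best-arm trust'' contribution from pulling $k_m$ when $k_m$ is not yet known to be optimal, which by the $\delta_m$ schedule sums to $K\log(1/\Delta)\log(KV/\Delta)/(V\Delta)$, and (c) the adversarial contribution, which by the static-epoch argument from the paragraph ``An example showing how BARBAT avoids paying an $O(KC)$ term'' is $O(C/V)$ per agent, since $c_m$ units of corruption now perturb a shared estimator covering $V\widetilde{n}_k^m$ pulls and thus displace at most $c_m/V$ effective pulls per agent. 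The communication cost follows immediately: each agent broadcasts once per epoch and $N_m$ grows geometrically, so the number of epochs within horizon $T$ is $M = O(\log(VT))$, giving $V\log(VT)$ total messages. The main obstacle will be the bookkeeping in step three: a single corrupted reward biases the shared pooled estimator used by all $V$ agents simultaneously, and one must verify that this shared bias contributes only $O(c_m/V)$ to each agent's regret, and that the ``trust'' and gap-dependent terms genuinely inherit the $1/V$ factor from $\lambda_m$ without incurring any hidden $V$-dependence through $\delta_m$ or $\beta_m$.
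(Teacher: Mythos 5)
Your overall plan is the same as the paper's: reuse the single-agent BARBAT analysis with the pooled estimator built from $V\widetilde{n}_k^m$ samples, establish a per-epoch clean event with failure probability $\delta_m \le 1/(VN_m)$, run the inductive sandwich on $\Delta_k^m$, do the three-way regret split (corrupted arms, trusted arm $k_m$, well-separated arms), and count one broadcast per agent per epoch over $M=O(\log(VT))$ epochs. However, your formulation of the clean event is not the one that makes the argument go through. You write the corruption bias as $c_m/(V\widetilde{n}_k^m)$, i.e.\ the full epoch-$m$ corruption divided by the number of pulls of arm $k$. The paper's Lemma~\ref{lem:berm} proves the bias is $2C_m/(VN_m)$, \emph{uniformly over arms}: since arm $k$ is sampled with probability $\widetilde{n}_k^m/N_m$ each round, only that fraction of the corruption aimed at arm $k$ is absorbed in expectation, and dividing by $V\widetilde{n}_k^m$ pulls leaves a bias of order $C_m/(VN_m)$ independent of $k$. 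This uniformity is exactly what lets the corruption cost telescope: in the regret bound the Case-1 contribution is $\rho_{m-1}\bigl(N_m+\sum_k n_k^m\bigr)\le 2\rho_{m-1}N_m = O(D_{m-1}/V)$ because $\sum_k n_k^m \le N_m$. With your arm-dependent bias $c_m/(V\widetilde{n}_k^m)$, the threshold for arm $k$'s gap estimate to be corrupted scales with $n_k^m$, and summing the resulting per-arm damage over $K$ arms gives $O(KC/V)$ rather than $O(C/V)$ --- precisely the factor of $K$ the whole construction is designed to eliminate.

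A second, smaller issue: you invoke only ``a Hoeffding bound'' for the clean event, but the corruptions are chosen adaptively by an adversary who sees the realized rewards, so the corrupted part of the estimator is not a sum of independent terms. The paper splits $S_k^m$ into a stochastic component $A_k^m$ (handled by Chernoff--Hoeffding) and a corruption component $B_k^m$, which is controlled by Freedman's inequality applied to the martingale $X_t=(Y_{v,k}^t-q_k^m)C_{v,k}^t$; this martingale step is also where the factor $q_k^m=\widetilde{n}_k^m/N_m$ enters and produces the uniform $C_m/(VN_m)$ bias. Your remaining steps (the induction, the three cases, the $O(C/V)$ accounting via geometric discounting of past epochs, and the communication count) match the paper's proof once the clean event is stated correctly.
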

\begin{remark}
The regret bound demonstrates that collaboration reduces each agent’s individual regret by a factor of $V$. As summarized in Table~\ref{tab:rw}, our bound is tighter than those of previous works on CMA2B with adversarial corruptions. 
\end{remark}
\begin{remark}
Notice that some recent works~\cite{wang2023achieving} on CMA2B achieve a communication cost of only $o(\log(T))$. However, these methods rely on the assumption of a unique best arm. In the presence of multiple best arms, it remains unclear whether a communication cost of $o(\log(T))$ is sufficient.
\end{remark}

\subsectionnotoc{Batched Bandits}
\label{sec:BB}
We extend BARBAT to the batched bandit setting and propose the BB-BARBAT (\textbf{B}atched \textbf{B}andits-\textbf{BARBAT}) algorithm in Algorithm~\ref{algs:BB-BARBAT}.
The BB-BARBAT algorithm sets the epoch lengths $N_m$ to be approximately $O(T^{\frac{m}{L+1}})$ 
ensuring that the number of epochs matches the number of batches. In this way, each epoch serves as a batch, making the BB-BARBAT algorithm almost identical to the BARBAT algorithm.
\begin{algorithm}[t]
    \caption{BB-BARBAT}
    \label{algs:BB-BARBAT}
    \begin{algorithmic}[1]
    \STATE \textbf{Initialization: }Set the initial round $T_0 \gets 0$, $\Delta_k^0 \gets 1$, and $r_k^0 \gets 0 $ for all $k \in [K]$.
    \FOR{epochs $m = 1,2,\cdots$}
        \STATE Set $a \gets T^{\frac{1}{2(L+1)}}$, $\zeta_m \gets (m + 4)a^{2(m+4)}\ln (aK)$ and $\delta_m \gets 1/(K\zeta_m)$.
        \STATE Set $\lambda_m \gets a^8 \ln{\left(4K / \delta_m\right)}$ and $\beta_m \gets \delta_m / K$.
        \STATE Set $n_k^m \gets \lambda_m (\Delta_k^{m-1})^{-2}$ for all $k \in [K]$.
        \STATE Set $N_m \gets \lceil K \lambda_m a^{2(m-1)} \rceil$ and $T_m \gets T_{m-1} + N_m$.
        \STATE Set $k_m \gets \mathop{\arg\max}_{k \in [K]} r_k^{m-1}$ and compute $\widetilde{n}_k^m$ according to \eqref{eq:nk}. 
        \FOR{$t = T_{m-1} + 1$ to $T_m$}
            \STATE Choose arm $I_t\sim p_m$ where $p_m(k)= \widetilde{n}_k^m / N_m$.
        \ENDFOR
        \STATE Observe the all corrupted rewards $\widetilde{r}_{I_t}$ in this batch and compute $S_{I_t}^m \gets S_{I_t}^m + \widetilde{r}_{I_t}$.
        \STATE Set $r_k^m \gets \min \{S_k^m / \widetilde{n}_k^m, 1\}$ and $r_*^m {\gets} \max_{k \in [K]}\left\{r_k^m - \sqrt{\frac{4\ln(4/\beta_m)}{\widetilde{n}_k^m}}\right\}$.
        \STATE Set $\Delta_k^m \gets \max\{a^{-m}, r_*^m - r_k^m\}$.
    \ENDFOR
    \end{algorithmic}
\end{algorithm}
We present the regret upper bound of BB-BARBAT and the lower bound for corrupted batched bandits in Theorems~\ref{the:bb-erb} and~\ref{the:bb-lb}, respectively. The proofs are deferred to Appendix~\ref{ape:bb}.
\begin{theorem}
\label{the:bb-erb}
For corrupted batched bandits with a static grid, the regret of BB-BARBAT satisfies
\[
    R(T) = O\bigg( C T^{\frac{1}{L+3}} + T^{\frac{4}{L+3}} \bigg( \sum_{\Delta_k > 0} \frac{L \log(KT)}{\Delta_k} + \frac{K \log(T) \log(1/\Delta) \log(K/\Delta)}{L\Delta} \bigg) \bigg).
\]
\end{theorem}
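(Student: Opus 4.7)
The plan is to mirror the proof of Theorem~\ref{the:erb} for BARBAT with the base $2$ replaced throughout by $a=T^{1/(2(L+1))}$. BB-BARBAT is structurally identical to BARBAT---arms are drawn i.i.d.\ within an epoch and all updates happen at epoch boundaries---so the batched observation model is automatically compatible: every epoch already plays the role of one batch. I would first define the clean event $\cE$ that, for every arm $k$ and every epoch $m$, the empirical mean $r_k^m$ deviates from $\mu_k$ by at most $\sqrt{4\ln(4/\beta_m)/\widetilde{n}_k^m}$ plus the per-arm average corruption $C_k^m/\widetilde{n}_k^m$ attributable to epoch $m$. Hoeffding combined with a union bound over $(k,m)$, together with $\beta_m=\delta_m/K$ and $\delta_m=1/(K\zeta_m)$ where $\zeta_m$ grows like $a^{2(m+4)}\ln(aK)$, make $\Pr[\bar\cE]$ summable across epochs so that the contribution of $\bar\cE$ to the expected regret is absorbed into an $O(1)$ term.

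Under $\cE$ I would establish inductively the two-sided estimate $\Delta_k/2 - O(a^{-m}) - O(C_m^{\mathrm{avg}}) \le \Delta_k^m \le 2\Delta_k + O(a^{-m}) + O(C_m^{\mathrm{avg}})$, where $C_m^{\mathrm{avg}}$ denotes the corruption budget spent in epoch $m$ divided by the relevant pull count; this follows from the definition of $\Delta_k^m$ and the concentration bound. Hence in a corruption-free epoch $n_k^m=\lambda_m(\Delta_k^{m-1})^{-2}$ is of order $\lambda_m/\max\{\Delta_k,a^{-(m-1)}\}^2$, and the per-epoch regret splits into (i) the suboptimal-arm contribution $\sum_{\Delta_k>0}n_k^m\Delta_k=O(\lambda_m/\Delta_k)$, which sums across epochs to $O(L\lambda_m/\Delta_k)$ by the geometric growth of $\lambda_m$; (ii) the estimated-best-arm contribution $(N_m-\sum_{k\neq k_m}n_k^m)\,\Delta_{k_m}$, which is nonzero only when $k_m$ is suboptimal and is then bounded by $N_m\cdot O(a^{-(m-1)})=O(K\lambda_m a^{m-1})$, matching the $K\log(T)\log(1/\Delta)\log(K/\Delta)/(L\Delta)$ shape once the recursion is terminated at the floor $a^{-m}\asymp\Delta$; and (iii) a corruption term bounded below.

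For (iii) I would use $p_m(k)=\widetilde n_k^m/N_m$ to note that a total corruption budget $C$ inflates any single gap estimate by at most $O(C/N_m)$ plus a $\sqrt{C/\widetilde n_k^m}$ concentration term. Because $N_m$ grows like $a^{2m}$, only a single ``critical'' epoch $m^*$ determined by $a^{-m^*}\asymp C/N_{m^*}$ suffers meaningful disturbance, and its cascade into epoch $m^*{+}1$ costs $O(C)$ multiplied by the $a$-factor growth to the next epoch. Substituting $a=T^{1/(2(L+1))}$, $\lambda_m=O(a^8\ln(aK\zeta_m))$ and the bound $m\le L+O(1)$ on the terminal epoch yields the stated bound: the $CT^{1/(L+3)}$ term arises from the cascade multiplier at $m^*$ combined with the shift $+4$ in $\zeta_m$ and $+4$ in $\lambda_m$ that effectively pushes the exponent denominator from $L{+}1$ to $L{+}3$, while $T^{4/(L+3)}$ reflects the same shift entering through $\lambda_L$.

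The main obstacle is the third step: tracking how adversarial corruption propagates through batched epochs whose lengths grow by $a^2=T^{1/(L+1)}$ rather than by a constant factor, since a one-epoch cascade is amplified much more aggressively than in BARBAT. In particular, matching the exact exponents $1/(L+3)$ and $4/(L+3)$ requires carefully balancing the floor $a^{-m}$ against both the concentration radius $\sqrt{\ln(1/\beta_m)/\widetilde n_k^m}$ and the average corruption term when $m$ is close to $m^*$, and it is precisely this balance that motivates the $a^8$ factor in $\lambda_m$ and the $(m+4)$ shift in $\zeta_m$.
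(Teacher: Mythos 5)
Your high-level plan is the same as the paper's: the proof of Theorem~\ref{the:bb-erb} is literally the BARBAT proof with the base $2$ replaced by $a$, reusing the three-case per-epoch decomposition, the inductive two-sided bound on $\Delta_k^m$, and the observation that each epoch is already a batch. However, two of your steps, as written, would not deliver the $O(CT^{1/(L+3)})$ corruption term.

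First, your clean event bounds the deviation of $r_k^m$ by the concentration radius plus $C_k^m/\widetilde{n}_k^m$, and you claim Hoeffding plus a union bound suffices. The entire point of the BARBAR/BARBAT sampling scheme is that the corruption actually collected on arm $k$ is diluted by the sampling probability $p_m(k)=\widetilde{n}_k^m/N_m$, so the per-pull corruption is $O(C_m/N_m)$, not $C_k^m/\widetilde{n}_k^m$; the latter can be larger by a factor of $N_m/\widetilde{n}_k^m$ for rarely-pulled arms and would reintroduce a $KC$-type term. Establishing the diluted bound requires a martingale concentration argument (the paper uses Freedman's inequality on $X_t=(Y_k^t-q_k^m)C_k^t$ in Lemma~\ref{lem:ber}), not plain Hoeffding, because the collected corruption is a random function of which rounds arm $k$ happens to be sampled. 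You do invoke $O(C/N_m)$ correctly in your part (iii), so the event in your first step appears to be a slip, but as stated it is the wrong quantity and the subsequent induction would propagate the wrong (too large) offset. Relatedly, your spurious $\sqrt{C/\widetilde{n}_k^m}$ term does not appear in, and is not needed by, the correct concentration bound.

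Second, your treatment of how corruption propagates across epochs rests on the unproved claim that only a single critical epoch $m^*$ suffers meaningful disturbance. The adversary may spread its budget arbitrarily over epochs, and each epoch's estimation error feeds into the next epoch's pull counts through $\Delta_k^{m}$. The paper handles this with the discounted offset rate $\rho_m=\sum_{s\le m}2^{m-s}D_s/(a^{4(m-s)}N_s)$ (where $D_m$ equals $2C_m$ on the clean event and $N_m$ otherwise, so $\BE[D_m]\le 2C_m+1$), proving the inductive bound on $\Delta_k^m$ with $\rho_m$ rather than only the current epoch's corruption, and then showing $\sum_m \rho_{m-1}N_m\le \tfrac{22a^2}{5}\sum_m D_m=O(a^2C)$ via the ratio bounds $N_m/N_s\le a^{2(m-s)}\lambda_m/\lambda_s$ and $\lambda_m/\lambda_s\le(7/5)^{m-s}$. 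Your cascade heuristic gives the right order $Ca^2$ for a one-epoch attack, but without the geometric discounting you have not shown that splitting the budget cannot do worse. Your explanation of the exponent $L+3$ (the $a^8$ in $\lambda_m$ contributing four extra powers of $a^2$ to the terminal epoch length) is essentially correct, and the remaining deterministic bookkeeping via $\sum_{s\le m}\lambda_s=O(a^8\ln(a)(m^2+m\ln K))$ matches the paper.
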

\begin{theorem}
\label{the:bb-lb}
    For any algorithm, there exists an instance of batched bandits with a static grid such that 
    \[
    R(T) \geq \Omega\left(T^{\frac{1}{L}}\left(K + C^{1 - \frac{1}{L}}\right)\right).
    \]
\end{theorem}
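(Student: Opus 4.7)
The plan is to establish two separate lower bounds and combine them via $\max(a,b) \ge (a+b)/2$. The first is the classical stochastic batched bandit lower bound $\Omega(KT^{1/L})$ of~\cite{gao2019batched}, which applies directly because setting $C = 0$ reduces the problem to the uncorrupted stochastic setting, and any corruption-robust algorithm must in particular handle that case. The second bound, $\Omega(T^{1/L} C^{1-1/L})$, is a new corruption-specific bound that I would prove via a two-point (Le Cam) argument adapted to the batched setting.

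For the corruption-specific bound, I would fix a gap $\Delta \in (0,1)$ to be tuned and consider two Bernoulli instances $P_+$ and $P_-$ in which the identities of the optimal and suboptimal arms are swapped. For a static grid $t_0 < t_1 < \cdots < t_L = T$ with batch lengths $N_j = t_j - t_{j-1}$, for each instance I would design an \emph{oblivious} adversary that commits in advance to spending $c_j$ units of corruption in batch $j$, subject to $\sum_j c_j \le C$, in order to render the observation distributions under $P_+$ and $P_-$ as close as possible in total variation. The key per-batch estimate is that the KL divergence contributed by batch $j$ is at most of order $N_j \pi_j (\Delta - c_j/(N_j \pi_j))_+^2$, where $\pi_j$ is the probability the algorithm plays the relevant arm in batch $j$ (fixed at the start of the batch by the static-grid constraint). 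Chaining across $j$ via the chain rule for KL divergence and applying the Bretagnolle--Huber inequality together with Le Cam's two-point method, the algorithm must incur $\Omega(\Delta T)$ regret on at least one of $P_\pm$ whenever the total summed KL is bounded by a constant.

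The hardest step will be showing that for every algorithm the adversary can allocate its budget across batches so as to force the desired exponent on $C$. For a geometric grid $N_j \sim T^{j/L}$, the static-grid commitment means that the algorithm's batch-$(j{+}1)$ policy is determined solely by observations from batches $1,\ldots,j$, so corruption spent in batch $j$ is amplified by a factor $N_{j+1}/N_j = T^{1/L}$ in downstream regret. A careful optimization over the adversary's allocation $(c_j)$ and over $\Delta$ against this amplification yields $\Delta \sim (C/T)^{(L-1)/L}$ in the regime $C \lesssim T^{(L-1)/L}$, producing $\Omega(\Delta T) = \Omega(T^{1/L} C^{1-1/L})$; outside that regime the claimed bound falls below the trivial $\Omega(T)$ bound and is automatic. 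The restriction to oblivious adversaries costs no generality, since any adaptive adversary is at least as strong, so a lower bound against the oblivious class transfers. Combining with the stochastic lower bound via the max inequality yields the claimed $\Omega\bigl(T^{1/L}(K + C^{1-1/L})\bigr)$.
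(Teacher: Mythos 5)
Your proposal follows the same route as the paper: take the stochastic batched lower bound of Gao et al.\ for the $\Omega(K\,T^{1/L})$ term, add a corruption-specific bound in which the adversary spends its budget blinding the learner through the first several batches so that the entire next batch is played without usable information, and then optimize over the static grid to extract the exponent $1-\tfrac1L$. The difference is in how the blinding step is justified. The paper's own proof is informal: it asserts that zeroing all rewards up to $t_s$ ``forces any algorithm to pull all arms evenly'' in batch $s+1$ and writes the resulting bound with $O(\cdot)$ rather than $\Omega(\cdot)$; your two-point Le Cam argument with per-batch KL control and Bretagnolle--Huber is the correct way to make that step rigorous, and your observation that corruption in batch $j$ is amplified by the ratio of consecutive batch lengths is exactly the quantity $\max_j (t_{j+1}-t_j)/t_j$ that the paper optimizes. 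Two places where your sketch is looser than it should be: (i) the exponent $1-\tfrac1L$ does not come from the geometric grid (for which your own calculation gives the stronger $C\,T^{1/L}$) but from quantifying over \emph{all} static grids the algorithm might commit to, so the grid optimization is the load-bearing step, not an afterthought; and (ii) your dismissal of the regime $C \gtrsim T^{(L-1)/L}$ as ``automatic'' is not right, since the claimed bound there is below $T$ but not implied by the small-$C$ case --- you need the complementary branch in which the adversary blinds with $\Delta \asymp 1$ through the last batch endpoint not exceeding $C$, which together with the $\Delta<1$ branch covers all $C\le T$. You should also note that the corruption level is defined as a realized sum, so the coupling that shifts the observation distribution must be accompanied by a concentration or budget-truncation argument.
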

\begin{remark}
There is still a gap between our upper bound and the lower bound. We believe that our lower bound is nearly tight. Achieving optimal regret upper bounds for corrupted batched bandits remains an interesting open question. 
\end{remark}

\subsectionnotoc{Strongly Observable Graph Bandits}
\label{sec:sog}
We extend BARBAT to strongly observable graph bandits and propose the SOG-BARBAT (\textbf{S}trongly-\textbf{O}bservable \textbf{G}raph bandits-\textbf{BARBAT}) algorithm, presented in Algorithm~\ref{algs:SOG-BARBAT}. Our method adopts $Z^m$ and $H^m$ to denote the expected pulls and expected observations in epoch $m$, respectively. $G^m$ denotes the graph used in the $m$-th epoch.
SOG-BARBAT adopts the proposed Algorithm~\ref{algs:OODS} (in Appendix~\ref{ape:dasog}) to compute the out-domination set $D^m$ of $G^m$, whose size is at most $\alpha(1+2\ln(K/\alpha))$. Moreover, if $G^m$ is acyclic, we can guarantee that $|D^m| \le \alpha$. Then SOG-BARBAT computes $\overline H^m$, which is the minimal number of additional pulls per out-dominating arm that ensures some vertex meets its observation requirement. After that, SOG-BARBAT updates $Z^m$ and $H^m$, and removes all nodes that have now been sufficiently observed from $G^m$. Repeating these steps ensures that by the end of epoch $m$, all arms have been observed sufficiently.
\begin{algorithm}
    \caption{SOG-BARBAT: Strongly Observable Graph-BARBAT}
    \label{algs:SOG-BARBAT}
    \begin{algorithmic}[1]
        \STATE \textbf{Input: }A Strongly Observable Directed Graph $G$.
        \STATE \textbf{Initialization:} 
        Set the initial round \( T_0 \gets 0 \), \( \Delta_k^0 \gets 1 \), and \( r_k^0 \gets 0 \) for all \( k \in [K] \).
        \FOR{epochs $m = 1,2,...$}
            \STATE Set $\zeta_m \gets (m + 4)2^{2(m+4)}\ln (K)$ and $\delta_m \gets 1/(K\zeta_m)$.
            \STATE Set $\lambda_m \gets 2^8 \ln{\left(4K / \delta_m\right)}$ and $\beta_m \gets \delta_m / K$.
            \STATE Set $n_k^m \gets \lambda_m (\Delta_k^{m-1})^{-2}$ for all $k \in [K]$.
            \STATE Set $N_m \gets \lceil K \lambda_m 2^{2(m-1)} \rceil$ and $T_m \gets T_{m-1} + N_m$.
            \STATE Set $ k_m \gets \mathop{\arg\max}_{k \in [K]} r_k^{m-1}$.
            \STATE Set $Z_k^m \gets 0$ and $H_k^m \gets 0$ for all arms $k \in [K]$, $G^m \gets G$.
            \WHILE{$H_k^m \geq n_k^m$ holds for all arms $k \in [K]$}
                \STATE Compute an out-domination set $D^m$ of $G^m$ by Algorithm \ref{algs:OODS}.
                \STATE Compute $\overline H^m \gets \min_{k \in [K]} (n_k^m - H_k^m)$.
                \FOR{each arm $k_i\in D^m$}
                    \STATE Update $Z^m_{k_i} \gets Z^m_{k_i} + \overline{H}_s^{m}$ and 
                    $H^m_{k_i} \gets H^m_{k_i} + \overline{H}^{m}$
                    \STATE Update $H^m_{k_j} \gets H^m_{k_j} + \overline{H}^{m}$ for all arms $(k_i,k_j) \in E$ with $k_j\neq k_i$
                \ENDFOR
                \STATE Remove all arms which satisfy $H_k^m \geq n_k^m$ from $G^m$.
            \ENDWHILE
            \STATE Set
            $   
                \widetilde{n}_k^m = \begin{cases}
                    Z_k^m & k \neq k_m \\
                    N_m - \sum_{k \neq k_m}Z_k^m & k = k_m
                \end{cases}
            $,
            and
            $\hat{n}_{k_j}^m \gets \sum_{(k_i,k_j) \in E}\widetilde{n}_{k_i}^m$.
            \FOR{$t = T_{m-1} + 1$ to $T_m$}
                \STATE Choose arm $I_t\sim p_m$ where $p_m(k)= \widetilde{n}_k^m / N_m$.
                \STATE Observe the corrupted reward $\widetilde{r}_{t,I_t}$ and update the total reward $S_{I_t}^m \gets S_{I_t}^m + \widetilde{r}_{t,I_t}$.
            \ENDFOR
            \STATE Set $r_k^m \gets \min \{S_k^m / \hat{n}_k^m, 1\}$ and $r_*^m \gets \max_{k \in [K]}\left\{r_k^m - \sqrt{\frac{4\ln(4/\beta_m)}{\hat{n}_k^m}}\right\}$
            \STATE Set $\Delta_k^m \gets \max\{2^{-m}, r_*^m - r_k^m\}$.
        \ENDFOR
    \end{algorithmic}
\end{algorithm}
The regret bound for SOG-BARBAT is presented as follows, with the proof given in Appendix~\ref{ape:sog}.
\begin{theorem}\label{the:sog-erb}
For any strongly observable directed graph $G = ([K], E)$ with independence number $\alpha$, the expected regret of SOG-BARBAT satisfies
\[
    \BE\left[R(T)\right] = O\left(C + 
      \sum_{k\in \cI^*}
        \frac{\log(T)\log(KT)}{\Delta_k}
      +
      \frac{K\log\left(1/\Delta\right)\log\left(K/\Delta\right)}{\Delta}
    \right),
\]
where $\cI^*$ is the set of at most $\lceil \alpha(1 + 2\ln(K/\alpha)) \rceil$ arms with the smallest gaps. Especially, for directed acyclic graphs and undirected graphs, $\cI^*$ is the set of at most $\alpha$ arms with the smallest gaps.
\end{theorem}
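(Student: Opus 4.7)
The plan is to lift the epoch-wise analysis of Theorem~\ref{the:erb} to the graph-feedback setting, with the effective observation count $\hat{n}_k^m = \sum_{(k_i,k_j)\in E,\, k_j = k}\widetilde{n}_{k_i}^m$ replacing $\widetilde{n}_k^m$ in every concentration statement. I would first define the good event $\cE_m$ that for every arm $k$ the estimator $r_k^m$ lies within $O\bigl(\sqrt{\ln(1/\beta_m)/\hat{n}_k^m} + C_m/\hat{n}_k^m\bigr)$ of $\mu_k$, where $C_m$ is the corruption spent in epoch $m$. A Hoeffding-plus-union-bound argument essentially identical to the MAB case gives $\Pr[\cE_m]\ge 1-\beta_m$, and induction on $m$ then yields the recursion $\Delta_k^m \asymp \max\{\Delta_k, 2^{-m}\}$ on $\bigcap_{\ell\le m}\cE_\ell$, reusing the BARBAT bookkeeping verbatim.

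Second, I would establish the two graph-theoretic facts that drive the bound: (a) the \textbf{while} loop of Algorithm~\ref{algs:SOG-BARBAT} terminates with $H_k^m \ge n_k^m$ for every $k$, so the concentration above applies uniformly over arms; and (b) the total number of distinct arms that ever enter an out-domination set $D^m$ during the while loop is at most $\lceil \alpha(1 + 2\ln(K/\alpha)) \rceil$, sharpened to $\alpha$ for DAGs and undirected graphs. Claim (b) combines the size guarantee of Algorithm~\ref{algs:OODS} (a greedy fractional-covering argument capped by the independence number of $G^m$) with monotonicity of $\alpha$ under vertex removal, together with the standard fact that on DAGs or undirected graphs every maximal independent set is already out-dominating.

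Third, the regret in epoch $m$ decomposes as $\sum_{k} \widetilde{n}_k^m \Delta_k$. Because $\widetilde{n}_k^m = Z_k^m$ is nonzero only for $k \in D^m$ (apart from the extra mass on $k_m$), each epoch contributes at most $|D^m|$ nontrivial terms. Summing over epochs, the corruption regret remains $O(C)$ as in Theorem~\ref{the:erb}, since the static epoch lengths $N_m$ prevent the adversary from inflating any single epoch beyond its allotted budget; the ``best-arm-trust'' regret from epochs where $k_m \ne k^*$ yields the additive $K\log(1/\Delta)\log(K/\Delta)/\Delta$ tail, mirroring the vanilla BARBAT analysis; and the exploration contribution produces the $\sum_{k\in\cI^*}\log(T)\log(KT)/\Delta_k$ term after a pigeonhole-style collection of directly sampled arms across the horizon.

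The main obstacle will be this last charging step. Since $D^m$ is recomputed in each epoch through the while loop and depends on the current $n_k^m$, I need to show that only arms with the smallest $\Delta_k$ ever accumulate a nontrivial $Z_k^m$. The key observation is that arms with large gaps have small $n_k^m$ (because $\Delta_k^{m-1} \gtrsim \Delta_k$), so they are covered by side observations almost immediately and drop out of $G^m$ before entering any subsequent $D^m$ with a large $\overline H^m$; only the smallest-gap arms stay in $G^m$ long enough to accumulate substantial direct pulls. Combining this with the per-epoch bound $|D^m| \le \lceil \alpha(1+2\ln(K/\alpha))\rceil$ and the geometric sum in $\lambda_m/\Delta_k^2$ (exactly as in Theorem~\ref{the:erb}) gives the claimed $\cI^*$-indexed term, with residual arms charged only $O(1)$ times absorbed into the $O(K\log^2(1/\Delta)/\Delta)$ tail. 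The DAG/undirected refinement follows by applying the sharper $|D^m|\le\alpha$ bound throughout.
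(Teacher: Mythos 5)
Your overall strategy is the same as the paper's: concentration with respect to the observation counts $\hat n_k^m$ rather than the pull counts, the dominating-set size bound from Algorithm~\ref{algs:OODS}, and a charging argument attributing the direct-pull regret to the smallest-gap arms. However, two of your supporting claims do not hold as stated, and the step you yourself flag as ``the main obstacle'' is precisely where the paper's proof does its real work, so the proposal is incomplete exactly there.

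First, your justification of the DAG/undirected refinement — that ``every maximal independent set is already out-dominating'' — is false for directed acyclic graphs: take two vertices $a,b$, each with a self-loop, and a single edge $a\to b$; then $\{b\}$ is a maximal independent set but out-dominates neither $a$ nor anything pointing to it. The paper's Lemma~\ref{lem:sdsog} instead argues via \emph{no-root} vertices (vertices whose only incoming edge is the self-loop): in a strongly observable acyclic graph such a vertex always exists, and removing it together with its out-neighbours decreases the independence number by at least one, which is what yields $|D|\le\alpha$. Second, your claim (b) — that the number of \emph{distinct} arms ever entering an out-domination set across all iterations of the while loop is at most $\lceil\alpha(1+2\ln(K/\alpha))\rceil$ — is stronger than what Lemma~\ref{lem:sdsog} provides (a per-iteration bound on each $|D^m_s|$) and is not obviously true, since successive residual graphs can be dominated by entirely different vertices; fortunately it is also not what the proof needs. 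What the paper actually does in Eq.~\eqref{eq:sog-mab-scale} is a rearrangement/telescoping computation: ordering the arms by gap, using the increment bounds $\overline H_s^m - \overline H_{s-1}^m \le 16\lambda_m/\Delta_{s}^2$ together with the per-iteration bound $|D_s^m|\le\gamma^m$, it shows that the supremum of $\sum_s \overline H_s^m\sum_{k\in D_s^m}\Delta_k$ over all admissible assignments is at most $\sum_k 32\lambda_m/\Delta_k$ summed over only the $\gamma^m$ smallest-gap arms. Your verbal observation that large-gap arms have small $n_k^m$ and drop out of $G^m$ before $\overline H^m$ grows is the correct intuition behind this, but without the explicit telescoping inequality the $\cI^*$-indexed term is not established; as written, your argument would only yield a sum over all arms that ever receive direct pulls, which can be all $K$ of them.
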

\begin{remark}
    The regret upper bound of SOG-BARBAT depends on $|\cI^*| = O(\alpha\ln(K/\alpha + 1))$, which matches the lower bound established by \cite{chen2024interpolating}. When $G$ is an acyclic graph or an undirected graph, we have $|\cI^*| \leq \alpha$, indicating that SOG-BARBAT performs better under this condition.
    \end{remark}
 \begin{remark}   
    Compared to FTRL-based methods~\citep{ito2022nearly,dann2023blackbox}, whose regret bounds depend on the smallest suboptimality gap $1/\Delta$, our bound reveals the relationship between the regret and the suboptimality gap of all arms. In addition, these works require prior knowledge of the independence number $\alpha$, which is NP-hard to compute, whereas SOG-BARBAT does not need such prior knowledge.
\end{remark}
\begin{remark}
Compared with \cite{lu2021stochastic}, which adapts BARBAR to graph bandits with undirected graphs, our SOG-BARBAT method improves the corruption-dependent term from $O(\alpha C)$ to $O(C)$. Also, our method can be applied to bandits with directed graphs.
\end{remark}

\subsectionnotoc{$d$-set Semi-bandits}
\label{sec:ds}
\begin{algorithm}[t]
    \caption{DS-BARBAT: d-Set-BARBAT}  
    \label{algs:DS-BARBAT}
    \begin{algorithmic}[1]
    \STATE \textbf{Initialization:} 
    Set the initial round \( T_0 \gets 0 \), \( \Delta_k^0 \gets 1 \), and \( r_k^0 \gets 0 \) for all \( k \in [K] \).

    \FOR{epochs $m = 1,2,\cdots$}
        \STATE Set $\zeta_m \gets (m + 4)2^{2(m+4)}\ln (K)$, and $\delta_m \gets 1/(K\zeta_m)$
        \STATE Set $\lambda_m \gets 2^8 \ln{\left(4K / \delta_m\right)}$ and $\beta_m \gets \delta_m / K$.
        \STATE Set $n_k^m \gets \lambda_m (\Delta_k^{m-1})^{-2}$ for all arms $k \in [K]$.
        \STATE Set $N_m \gets \lceil K \lambda_m 2^{2(m-1)}/d \rceil$ and $T_m \gets T_{m-1} + N_m$.
        \STATE Set the arm subsets \(\cK_m \gets \argmax_{k \in [K]} r_k^{m-1} \) with $|\cK_m| = d$. 
        \STATE Set
        $   
            \widetilde{n}_k^m = \begin{cases}
                n_k^m & k \not\in \cK_m \\
                N_m - \sum_{k \not\in \cK_m}n_k^m / d & k \in \cK_m
            \end{cases}
        $.
        \FOR{$t = T_{m-1} + 1$ to $T_m$}
            \STATE Choose action $\mathbf{x}_t\sim p_m$ where $p_m(k)= \widetilde{n}_k^m / N_m$.
            \STATE For each arm $k$ with $\mathbf{x}_{t}(k) = 1$, observe $\widetilde{r}_{t,k}$ and update the total reward $S_{k}^m \gets S_{k}^m + \widetilde{r}_{t,k}$.
        \ENDFOR
        \STATE Set $r_k^m \gets \min \{S_k^m / \widetilde{n}_k^m, 1\}$.
        \STATE Set $r_*^m \gets \top_d \Big(\big\{r_k^{m} - \sqrt{\tfrac{4\ln(4/\beta_m)}{\widetilde n_k^{m}}}\big\}_{k \in [K]}\Big)$, where $\top_d(\cA)$ returns the $d$-th largest value in $\cA$.
        \STATE Set $\Delta_k^m \gets \max\{2^{-m}, r_*^m - r_k^m\}$.
    \ENDFOR
    \end{algorithmic}
\end{algorithm}
We extend BARBAT to the $d$-set semi-bandit setting and propose the DS-BARBAT (\textbf{d}-\textbf{S}et \textbf{BARBAT}) algorithm in Algorithm~\ref{algs:DS-BARBAT}. In this setting, the optimal action is a subset of $d$ distinct arms. Thus, DS-BARBAT estimates the best action $\cK_m$ as set of $d$ distinct arms with the highest empirical rewards in the previous epoch. When estimating the suboptimality gap, DS-BARBAT considers the arm with the $d$-th largest empirical reward, since the set of the first $d$ arms is the optimal action.
The regret bound for DS-BARBAT is presented as follows, with the proof provided in Appendix~\ref{ape:ds}:
\begin{theorem}
\label{the:ds-erb}
The regret of DS-BARBAT satisfies
\[
R(T) = O\left(dC + \sum_{k=d+1}^{K}\frac{\log(T)\log(KT)}{\Delta_{k}} + \frac{dK\log\left(1/\Delta\right)\log\left(K/\Delta\right)}{\Delta}\right).
\]
\end{theorem}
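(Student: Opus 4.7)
The plan is to adapt the proof of Theorem~\ref{the:erb} for Algorithm~\ref{algs:BARBAT} to the $d$-set semi-bandit setting. The analysis diverges from the single-arm case in three places: (i) each round yields $d$ semi-bandit observations, so the corruption bias picks up a factor of $d$; (ii) the estimated threshold is the $d$-th largest lower-confidence bound rather than the maximum; and (iii) the ``trust'' set $\cK_m$ has $d$ elements instead of one, which inflates the ``recourse vs. trust'' term by a factor of $d$ relative to BARBAT.

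I would first establish a per-epoch good event $\cE_m$ via concentration. Applying Hoeffding's inequality to arm $k$'s $\widetilde n_k^m$ observations in epoch $m$ and union-bounding over $K$ arms, with probability at least $1 - \delta_m$ we have, for every $k$, $|r_k^m - \mu_k - C_k^m/\widetilde n_k^m| \le \sqrt{4\ln(4/\beta_m)/\widetilde n_k^m}$, where $C_k^m$ collects the additive adversarial corruption charged to arm $k$'s observations during epoch $m$. Because the agent observes $d$ arms per round, $\sum_m \sum_k C_k^m \le dC$, which is the origin of the $dC$ term. Next I would prove a gap-estimation lemma: on $\cE_m$, $r_*^m = \top_d\bigl(\{r_k^m - b_k^m\}_k\bigr)$ satisfies $|r_*^m - \mu_d| = O(\max_k b_k^m + \max_k C_k^m/\widetilde n_k^m)$. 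The lower bound uses that at least $d$ arms have $\mu_k \ge \mu_d$, hence at least $d$ LCBs concentrate at or above $\mu_d$ up to corruption; the upper bound uses that any arm's LCB lies below $\mu_k + C_k^m/\widetilde n_k^m$, so any value exceeding $\mu_d + \text{corruption}$ cannot come from arms with $\mu_k < \mu_d$. Translating this to $\Delta_k^m$ yields $\Delta_k^m \in [\Delta_k/4, 4\Delta_k]$ for every $k > d$, modulo deviation and corruption terms, without any unique-optimum assumption, since only a $d$-th order statistic rather than an argmax is involved.

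Given the gap estimate, I would decompose the expected per-round regret in epoch $m$ as $\sum_{k \le d}(1 - p_m(k))(\mu_k - \mu_d) + \sum_{k > d} \Delta_k\, p_m(k)$. For the second sum, inductively applying the good-event bound to the recursion $n_k^m = \lambda_m/(\Delta_k^{m-1})^2$ produces geometric shrinkage of pull counts for suboptimal arms; summing $\Delta_k n_k^m$ over the $O(\log(1/\Delta_k))$ epochs during which arm $k$ is active yields the main term $\sum_{k > d} \log(T)\log(KT)/\Delta_k$. The first sum is only positive while $\cK_m$ misses at least one true top-$d$ arm, which can happen only while the floor $2^{-m}$ dominates the smallest active gap, i.e., for $O(\log(1/\Delta))$ epochs; in those epochs a misclassified slot absorbs roughly $N_m/d \approx K\lambda_m 2^{2(m-1)}/d^2$ pulls, and summing across the $d$ trust slots and the offending epochs yields $O(dK \log(1/\Delta)\log(K/\Delta)/\Delta)$. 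The corruption bias in $\Delta_k^m$, propagated through the pull-count recursion, accumulates to $O(dC)$ overall, and the failure probabilities $\delta_m = 1/(K\zeta_m)$ make the bad-event regret $\delta_m \cdot d N_m$ summable to a constant.

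The main obstacle will be the gap-estimation lemma, because the $\top_d$ order statistic makes the argument sensitive to near-ties among arms with means close to $\mu_d$: one must verify that the classic chain ``LCB of an optimal arm $\ge$ UCB of any suboptimal arm once gaps are resolved'' still goes through in $d$-th-order form, and that mis-rankings inside the top $d$ do not corrupt the estimate of $\mu_d$ itself. A secondary subtlety is keeping careful track of where the factor of $d$ enters: in the $dC$ term (from $d$ corrupted observations per round), in the $dK/\Delta$ term (from $d$ trust slots in $\cK_m$), but \emph{not} in the dominant $\sum_{k > d} \log(T)\log(KT)/\Delta_k$ term, where the division of $N_m$ by $d$ in line~6 exactly cancels the factor of $d$ that would otherwise appear.
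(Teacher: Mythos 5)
Your overall architecture matches the paper's: the same epoch/concentration machinery inherited from Theorem~\ref{the:erb}, the same observation that the $\top_d$ statistic concentrates around $\mu_d$ (at least $d$ lower-confidence values sit above $\mu_d$ minus the width, while the $d$-th largest of the upper envelopes is at most $\mu_d$ plus the bias --- this is exactly the paper's re-derivation of Lemma~\ref{lem:bsg} in Appendix~\ref{ape:ds}), and the same three-case split in which the $d$-element trust set produces the $dK/\Delta$ term and the $d$-fold pull mass produces the $dC$ term. Your bookkeeping of where each factor of $d$ enters is consistent with the paper's up to a small slip: each trust arm has marginal $p_m(k)\approx 1$, so a trust slot absorbs roughly $N_m$ pulls, not $N_m/d$ (your final totals still come out right).

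The genuine gap is in your treatment of the first term of your own decomposition $\sum_{k\le d}(1-p_m(k))(\mu_k-\mu_d)+\sum_{k>d}p_m(k)\Delta_k$, which is the exact regret and is more honest than what the paper writes down. You claim the first sum ``is only positive while $\cK_m$ misses at least one true top-$d$ arm.'' That is false: even when $\cK_m$ equals the true top-$d$ set, every $k\in\cK_m$ has $1-p_m(k)=\sum_{j\notin\cK_m}n_j^m/(dN_m)>0$, because the recourse arms siphon off marginal probability in every epoch. Over epoch $m$ this term contributes about $\frac{1}{d}\big(\sum_{k\le d}(\mu_k-\mu_d)\big)\sum_{j\notin\cK_m}n_j^m$, and since $n_j^m\approx\lambda_m/\Delta_j^2$, the cost attributable to a recourse arm $j$ scales like $(\mu_k-\mu_d)/\Delta_j^2$ rather than $1/\Delta_j$; this is not controlled by $\sum_{j>d}\log^2(T)/\Delta_j$ unless $\mu_k-\mu_d=O(\Delta_j)$. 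So your argument for this term does not close as stated. For comparison, the paper's proof never writes this term down at all: it sets $R_k^m=\Delta_k\widetilde n_k^m$ only for $k>d$, which implicitly charges every inclusion of a suboptimal arm $j$ exactly $\Delta_j=\mu_d-\mu_j$, i.e., it assumes the displaced trust arm is always the $d$-th best. Your decomposition exposes precisely the step that needs justification; to finish, one must either argue that the $d$-subset $\x_t$ can be sampled from the marginals so that recourse pulls only displace arms with mean (close to) $\mu_d$, or bound the exclusion cost by a different argument.
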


\begin{remark}
Notice that our DS-BARBAT algorithm can efficiently compute the sampling probability $p_m$, while FTRL-based methods~\citep{wei2018more,zimmert2019beating,ito2021hybrid,tsuchiya2023further} need to solve a complicated convex optimization problem in each round, which is rather expensive. The time comparison in Table~\ref{tab:semi} in Appendix~\ref{app:exp_ds} demonstrates that our DS-BARBAT algorithm is significantly faster than these FTRL-based methods.
\end{remark}

\sectionnotoc{Experiments}
\label{sec:exp}
\begin{figure*}[t]
    \centering
    \begin{tabular}{cccc}
            \includegraphics[width = 0.22\textwidth]{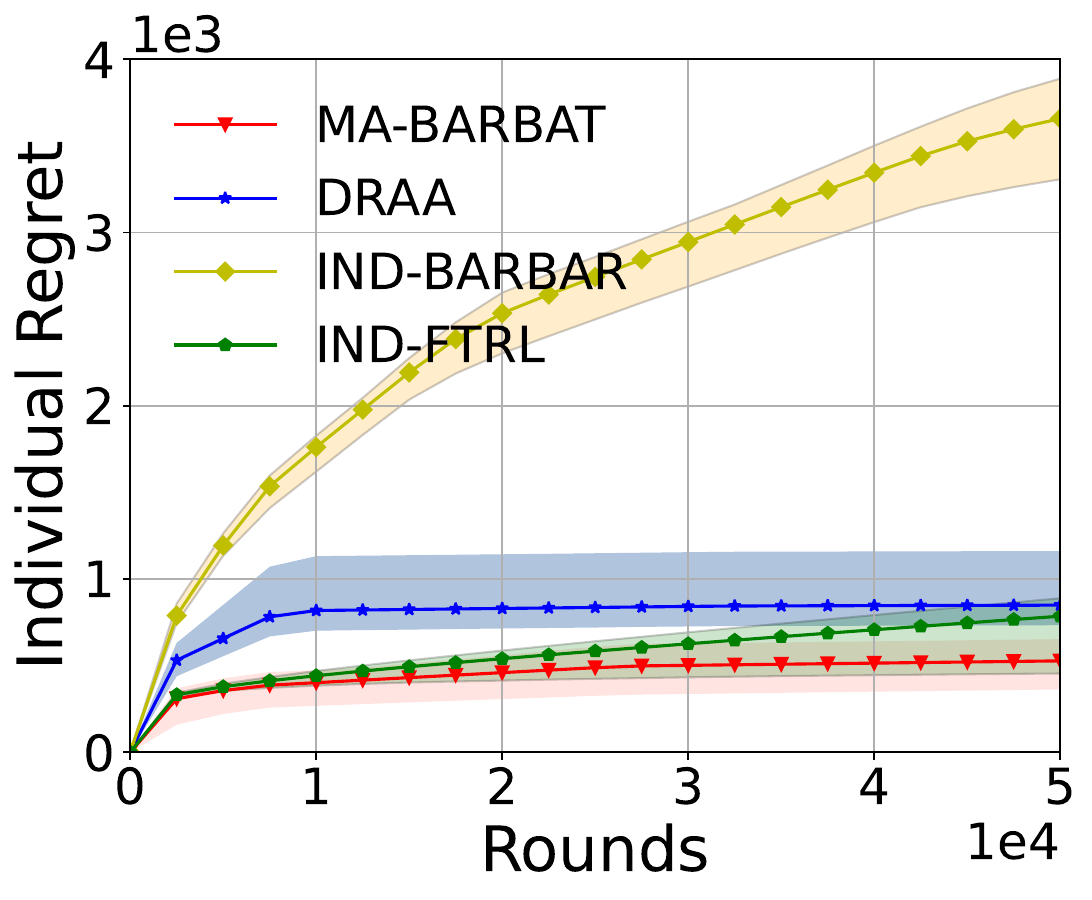} &  
            \includegraphics[width = 0.22\textwidth]{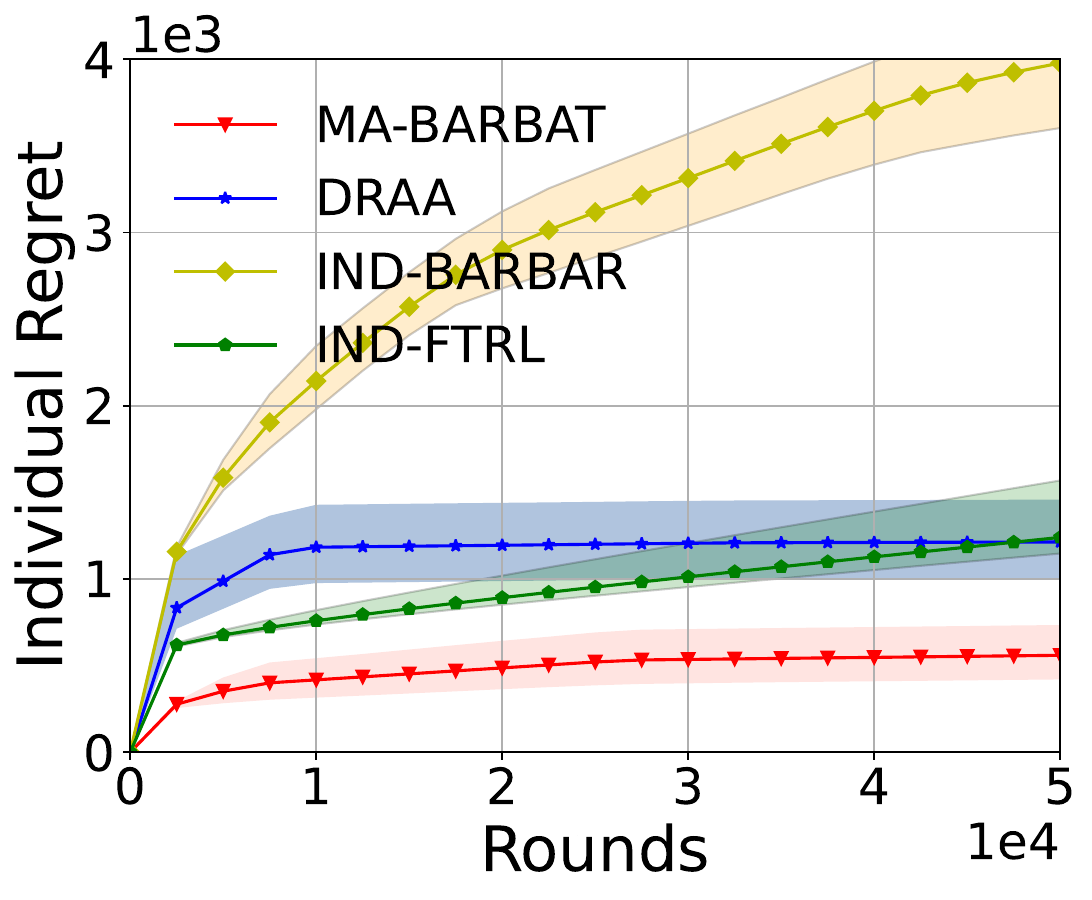} &
            \includegraphics[width = 0.22\textwidth]{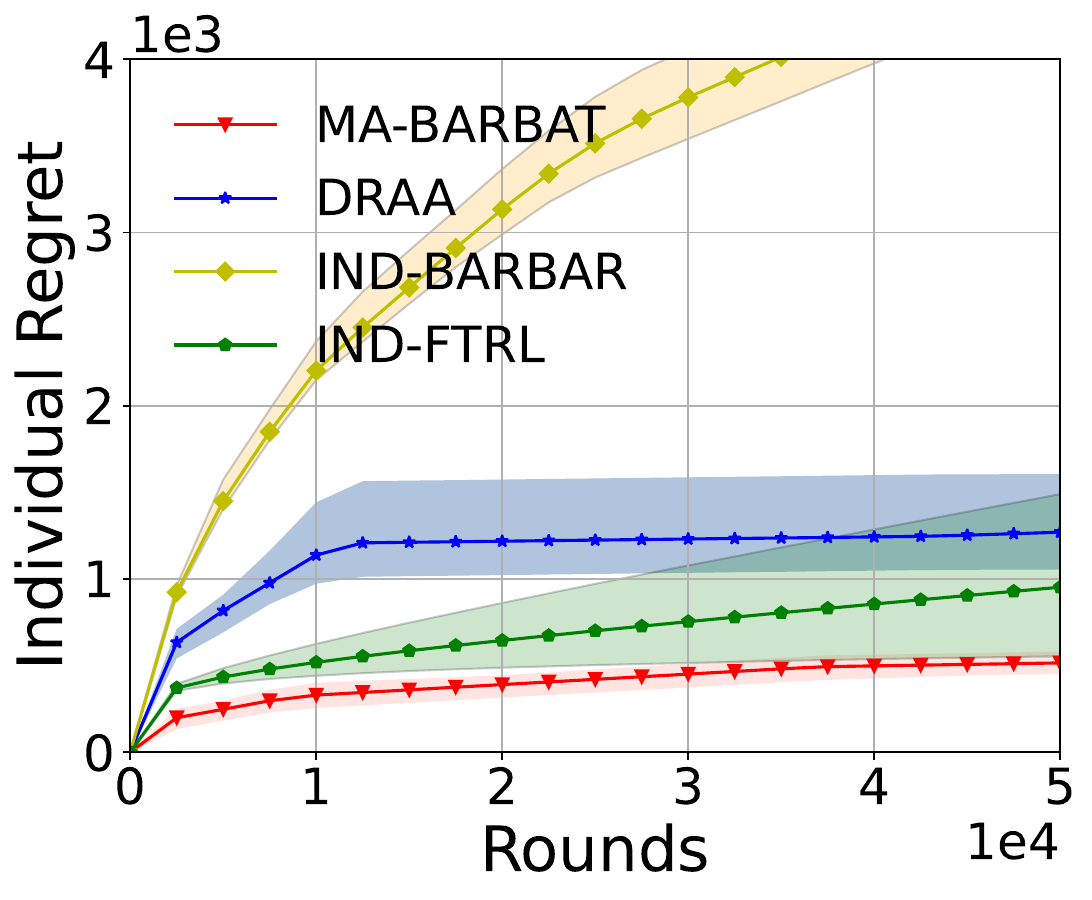} & 
            \includegraphics[width = 0.22\textwidth]{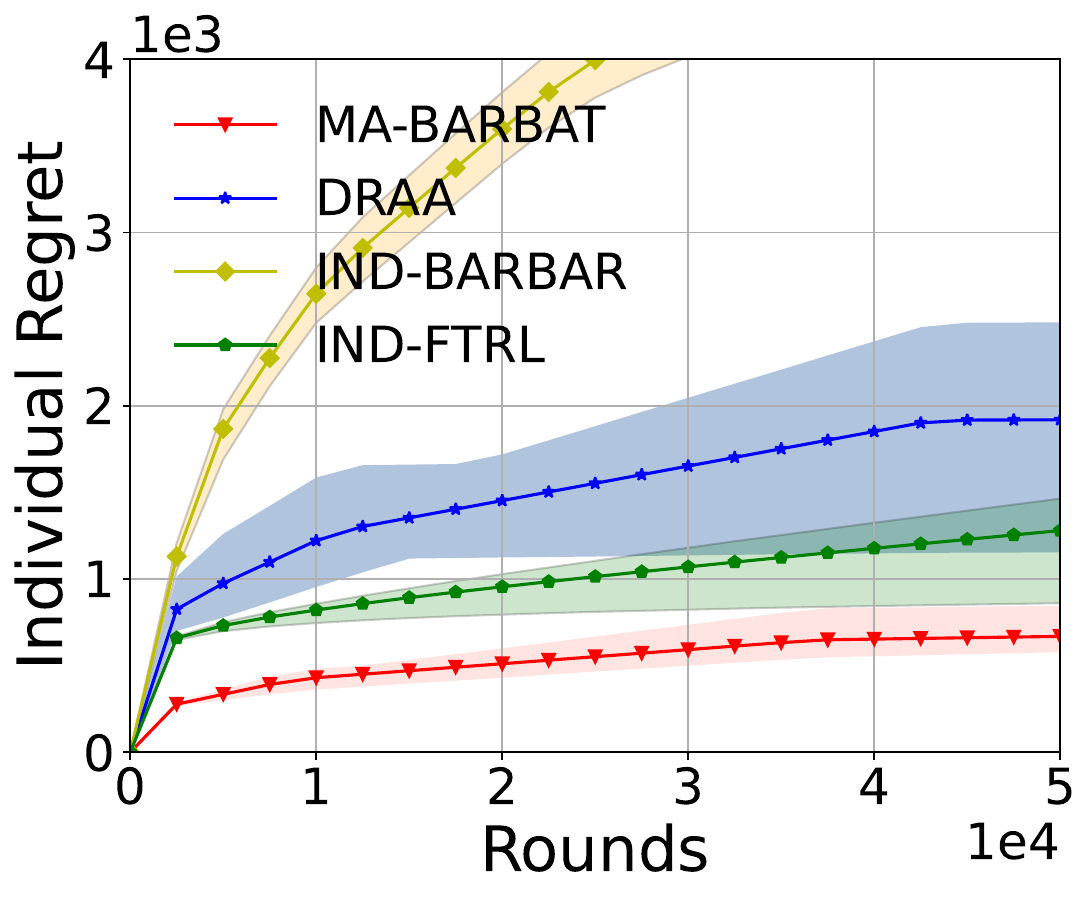}\\
        (a)  K = 12, C = 2000 &
        (b)  K = 12, C = 5000 &
        (c)  K = 16, C = 2000 &
        (d)  K = 16, C = 5000
    \end{tabular}
    \caption{Comparison between MA-BARBAT, DRAA, IND-BARBAR and IND-FTRL in cooperative multi-agent multi-armed bandits.}
    \label{fig:cma2b}
\end{figure*}
\begin{figure*}[t]
    \centering
    \begin{tabular}{cccc}
            \includegraphics[width = 0.22\textwidth]{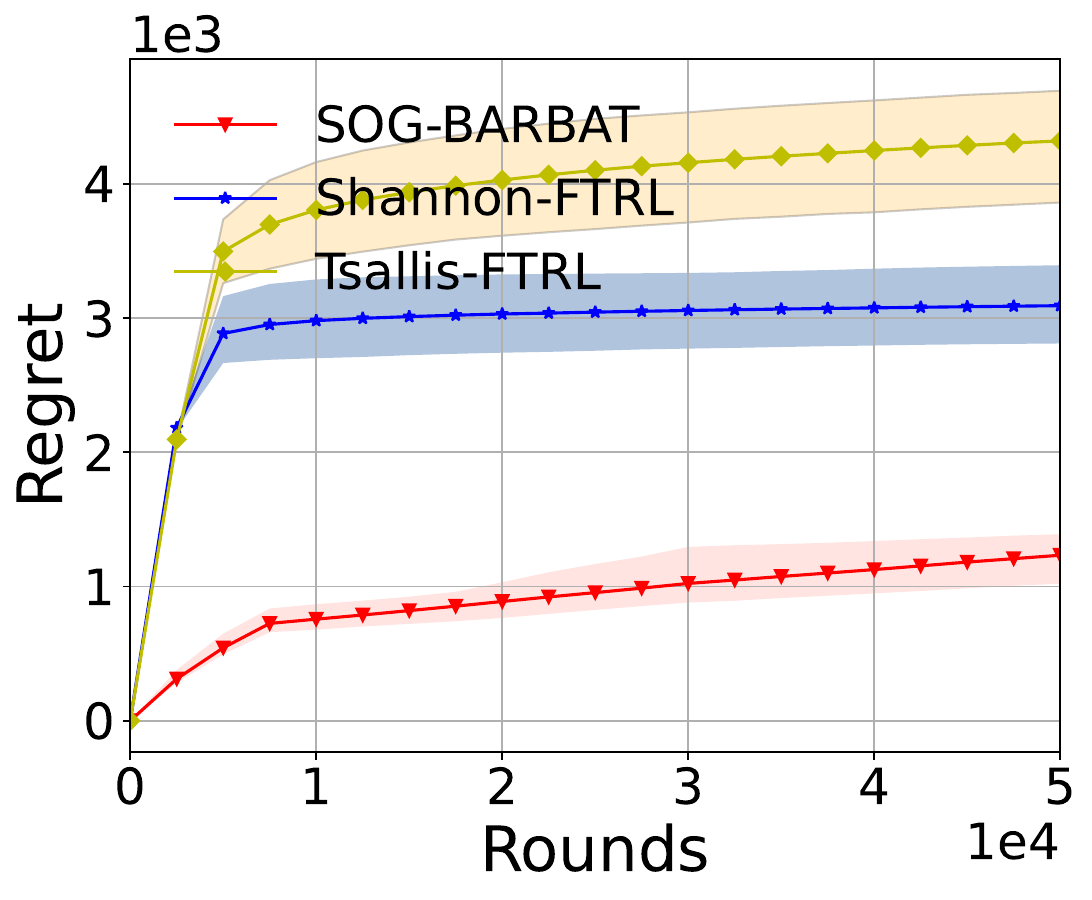} &  
            \includegraphics[width = 0.22\textwidth]{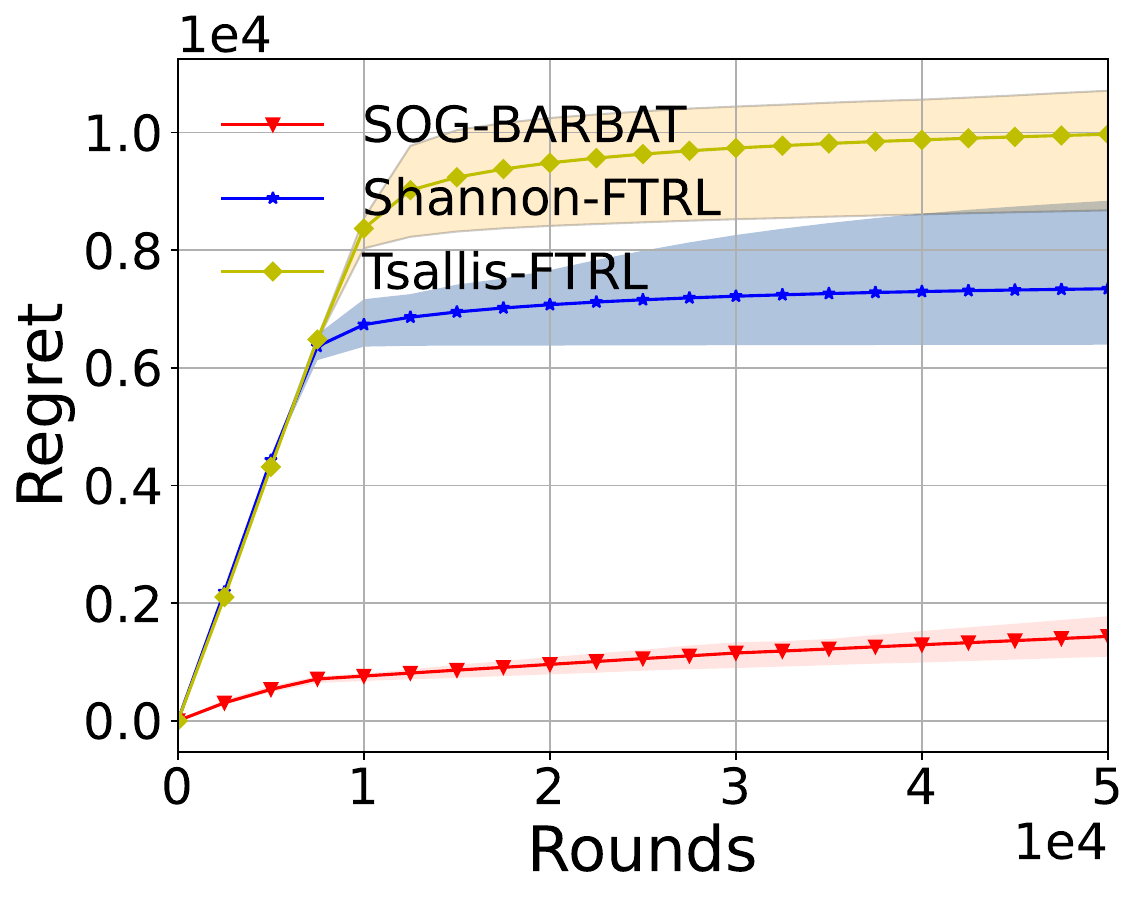} &
            \includegraphics[width = 0.22\textwidth]{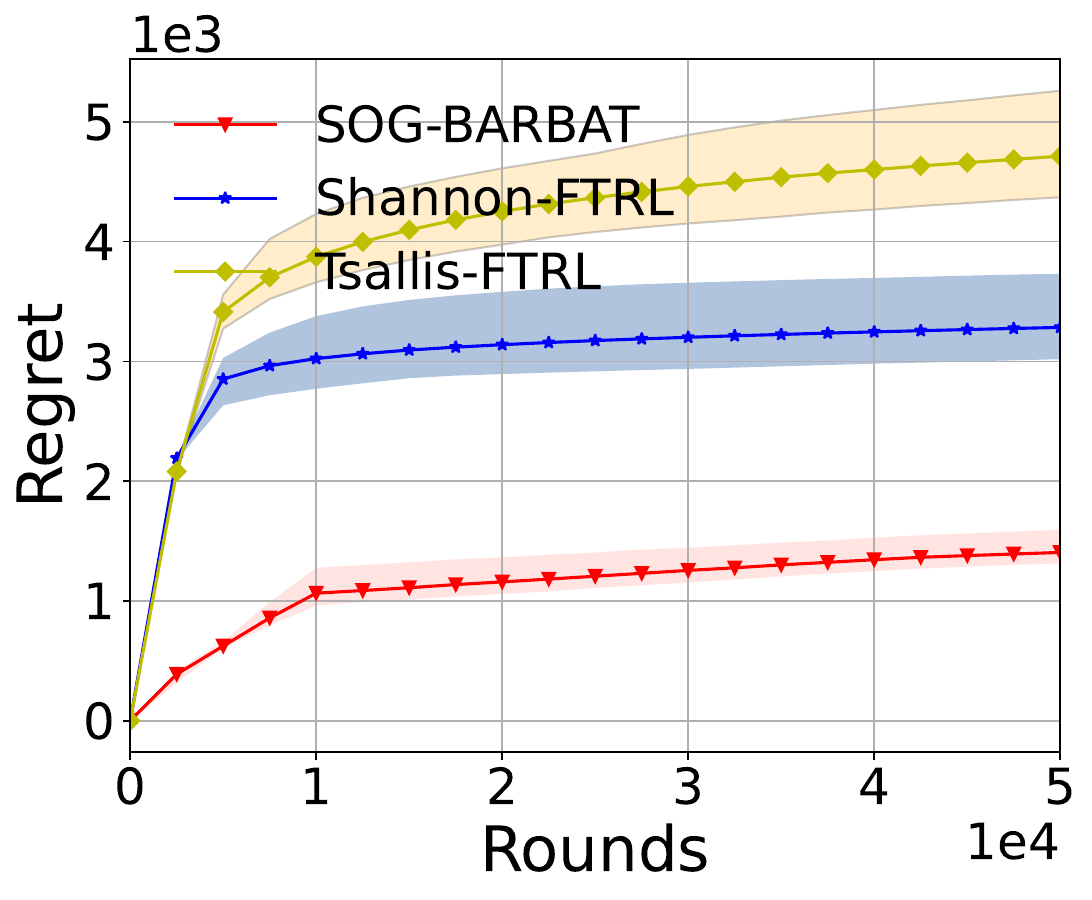} & 
            \includegraphics[width = 0.22\textwidth]{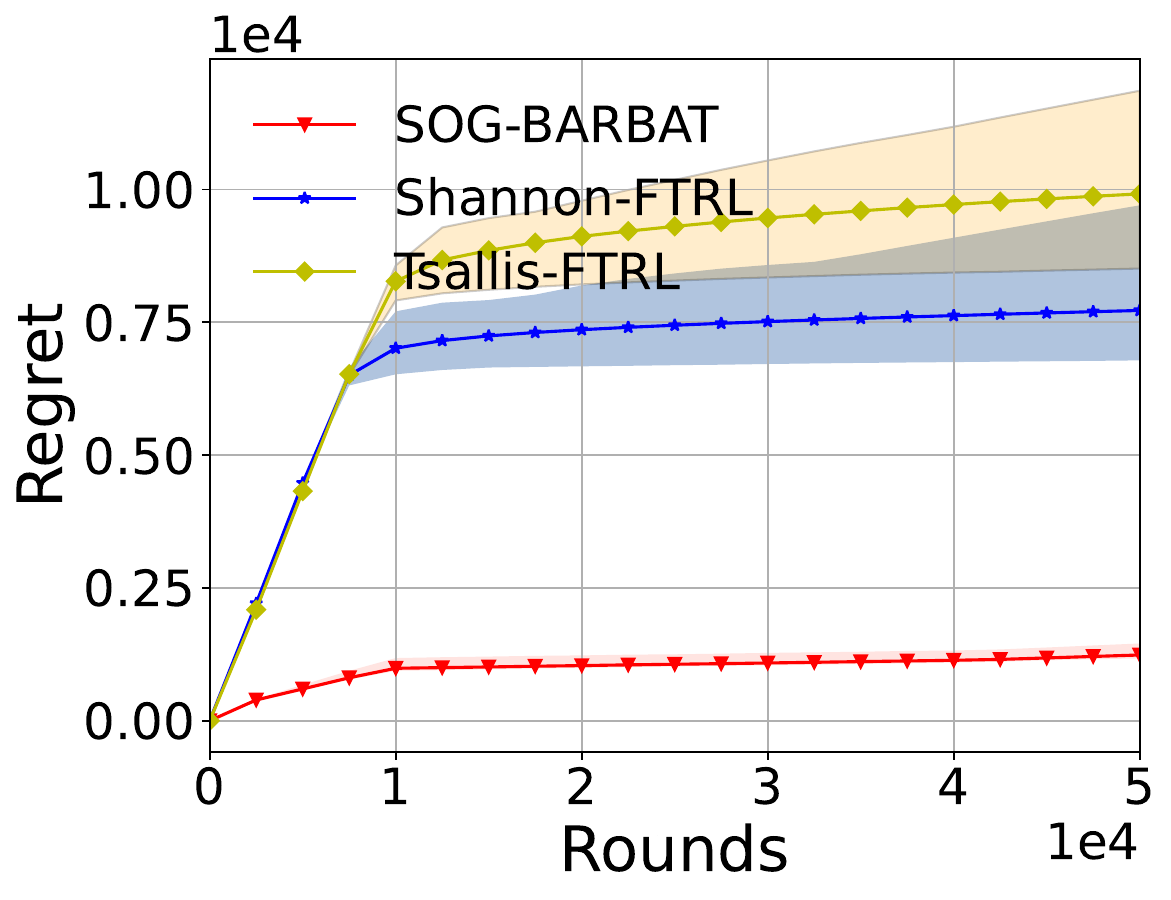}\\
        (a)  K = 12, C = 2000 &
        (b)  K = 12, C = 5000 &
        (c)  K = 16, C = 2000 &
        (d)  K = 16, C = 5000
    \end{tabular}
    \caption{Comparison between SOG-BARBAT, Shannon-FTRL, and Tsallis-FTRL in strongly observable graph bandits.}
    \label{fig:sog}
\end{figure*}
\begin{figure*}[t]
    \centering
    \begin{tabular}{cccc}
            \includegraphics[width = 0.22\textwidth]{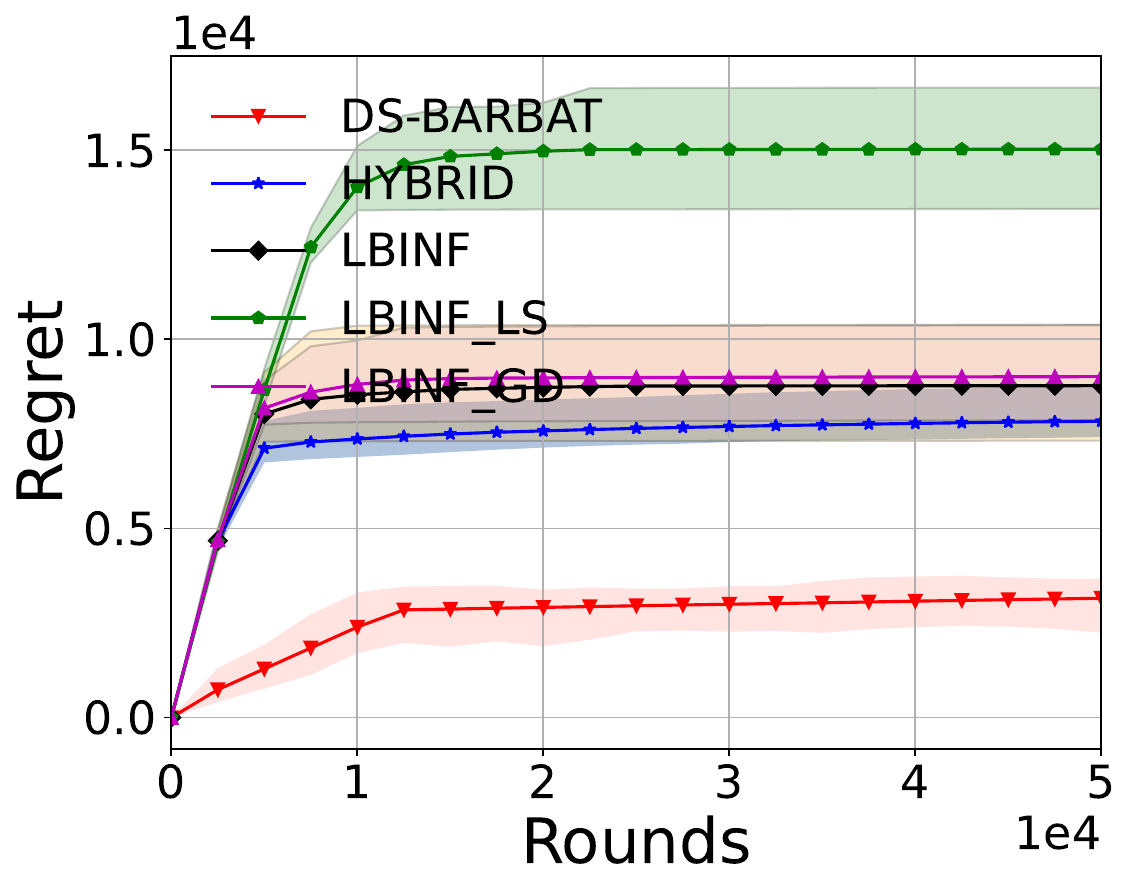} &  
            \includegraphics[width = 0.22\textwidth]{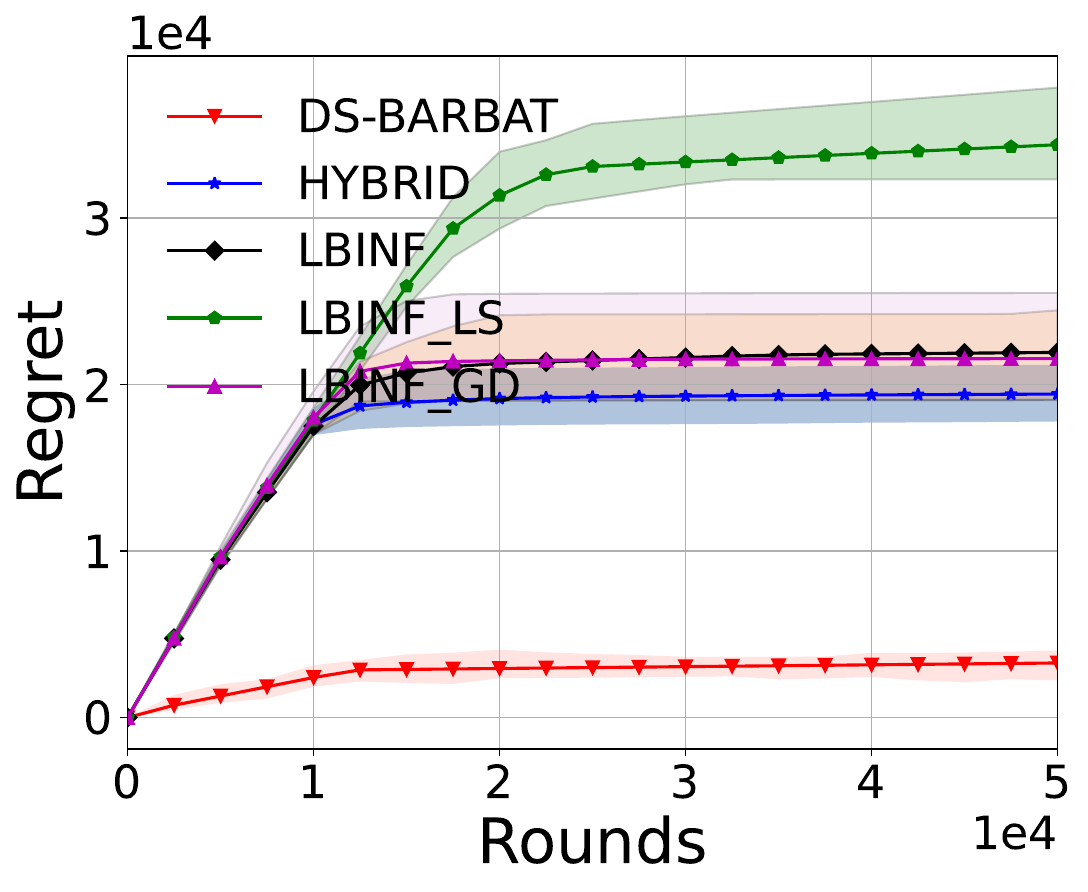} &
            \includegraphics[width = 0.22\textwidth]{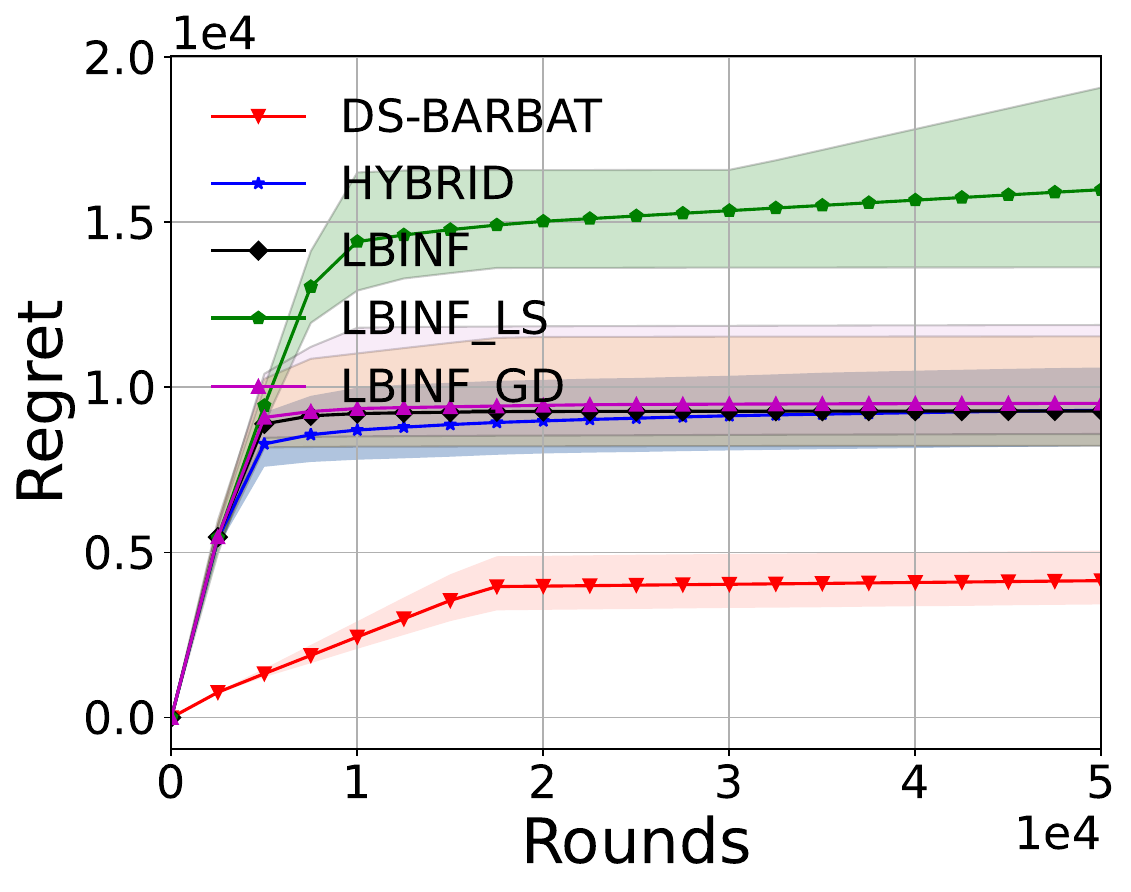} & 
            \includegraphics[width = 0.22\textwidth]{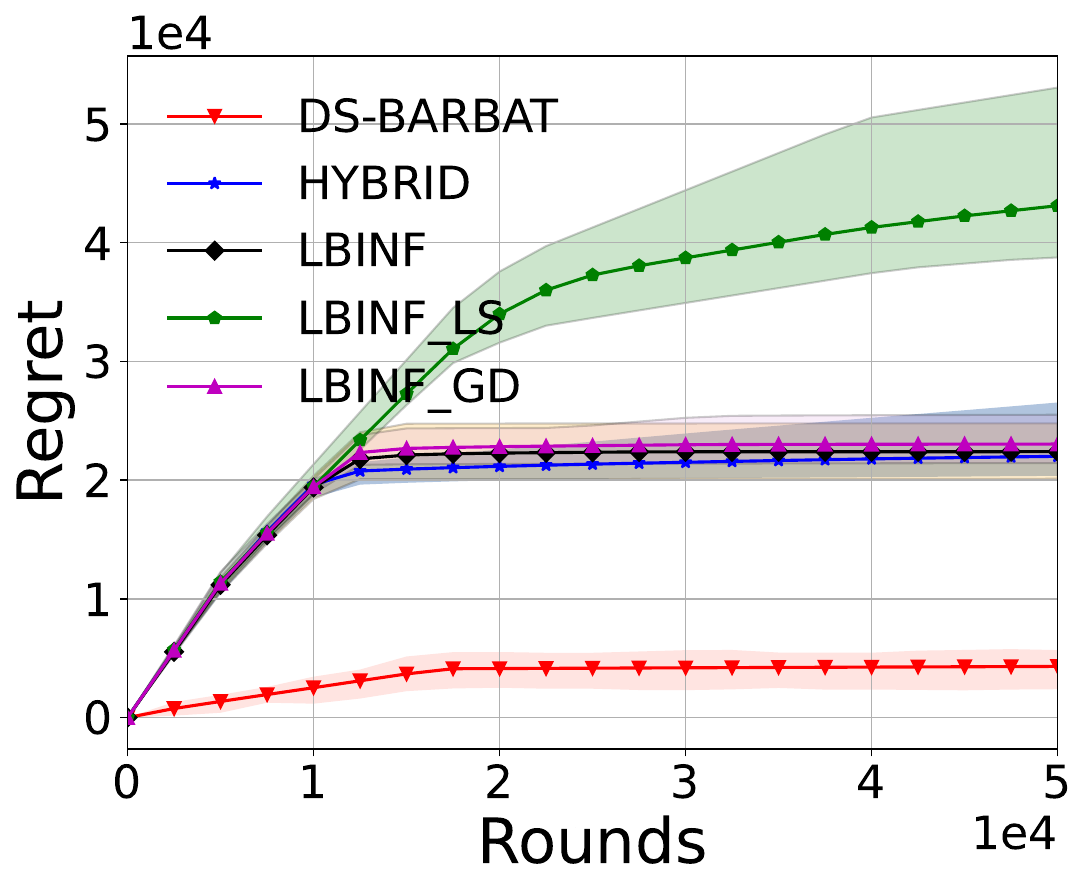}\\
        (a)  K = 12, C = 2000 &
        (b)  K = 12, C = 5000 &
        (c)  K = 16, C = 2000 &
        (d)  K = 16, C = 5000
    \end{tabular}
    \caption{Comparison between DS-BARBAT, HYBRID, LBINF, LBINF\_LS and LBINF\_GD in $d$-set semi-bandits.}
    \label{fig:semi}
\end{figure*}
In this section, we present numerical results to demonstrate the robustness and effectiveness of our framework. All experiments are implemented in Python 3.11 and conducted on a Windows laptop equipped with 16\,GB of memory and an Intel i7-13700H processor. For all experiments, we run $50$ independent trials, report the average results and use shaded regions to represent standard deviations. The detailed experimental settings and time comparisons are deferred to Appendix~\ref{ape:ed}.
\subsectionnotoc{Cooperative Multi-agent Multi-Armed Bandits}
We compare our framework with DRAA~\cite{ghaffari2025multi}, IND-BARBAR~\cite{gupta2019better}, and IND-FTRL~\cite{zimmert2021tsallis} in a multi-agent setting with $V = 10$. Here, IND-BARBAR and IND-FTRL serve as non-cooperative baselines, where each agent runs the algorithm locally without any communication. We do not compare with~\cite{liu2012learning} since the baseline DRAA is an improved version of their method. We present the experimental results in Figure~\ref{fig:cma2b}, which shows that our BARBAT framework outperforms all baseline methods, reflecting the advantage of collaboration and verifying our theoretical analysis. In addition, when $C$ is increased, the increase in individual regret of our framework is the smallest, which shows that the robustness of our framework is stronger than that of all baseline methods.

\subsectionnotoc{Strongly Observable Graph Bandits}
We compare our SOG-BARBAT algorithm with Shannon-FTRL~\cite{ito2022nearly} and Tsallis-FTRL~\cite{dann2023blackbox} using two directed graphs as illustrated in Figure~\ref{fig:graph} in Appendix~\ref{app:exp_graph}. We do not include the Elise~\cite{lu2021stochastic} algorithm in the experiment since it only works for undirected graphs. The experimental results are presented in Figure~\ref{fig:sog}, showing that our SOG-BARBAT algorithm significantly outperforms all baseline methods and verifying the robustness of our method. 

\subsectionnotoc{$d$-Set Semi-bandits}
In $d$-set semi-bandits, we set $d = 3$ for $K = 12$, and $d = 4$ for $K = 16$. We compare our framework with HYBRID~\cite{zimmert2019beating}, LBINF~\cite{ito2021hybrid}, LBINF\_LS~\cite{tsuchiya2023further} and LBINF\_GD~\cite{tsuchiya2023further}. As shown in Figure~\ref{fig:semi}, our DS-BARBAT algorithm significantly outperforms all baseline methods.

\sectionnotoc{Conclusion}
\label{sec:con}
Building on the BARBAR algorithm~\cite{gupta2019better}, we propose the BARBAT framework by elaborately designing the epoch lengths $N_m$ and the parameters $\delta_m$. We further extend BARBAT to various corrupted bandit settings, including multi-agent bandits, batched bandits, strongly observable graph bandits and $d$-set semi-bandits. Our theoretical analysis shows that these algorithms achieve near-optimal regret bounds for adversarially corrupted environments. Compared with FTRL-based approaches, BARBAT and its extensions are more computationally efficient and do not rely on the assumption of a unique optimal action. Moreover, since FTRL-based methods are difficult to parallelize, they are unsuitable for multi-agent and batched settings. These advantages make the BARBAT framework an attractive alternative to FTRL-based algorithms across a wide range of corrupted bandit problems. One future direction is to improve the $\log^2(T)$ dependence in our regret bound to $\log(T)$. Investigating extensions in other bandit settings such as linear bandits is also an interesting future direction.

\section*{Acknowledgement}
This work was supported by Natural Science Foundation of China (No. 62306116), and in part by the Key Program of National Natural Science Foundation of China (No. 62432007).

\bibliographystyle{unsrt}
\bibliography{references}

\newpage
\appendix

\clearpage
\newpage

\newpage
\tableofcontents
\newpage

\section{OODS algorithm for strongly observable graph bandits}
\label{ape:dasog}
In this section, we present the OODS algorithm for strongly observable graph bandits. For any graph $G = (V, E)$, the time complexity of the algorithm is \(O(|V|^{2}|E|)\).
\begin{algorithm}
    \caption{OODS: Obtain an out-domination Set}
    \label{algs:OODS}
    \begin{algorithmic}[1]
        \STATE \textbf{Input: }a graph $G$.
        \STATE \textbf{Output: }an out-domination set $D$.
        \WHILE{the graph $G$ is not empty}
            \IF{the graph $G$ is an acyclic graph}
                \STATE Find all no-root vertices and add them to the set $D$.
                \STATE Remove all no-root vertices and their out-degree neighbors from the graph $G$.
            \ELSE
                \STATE Select the vertex with the largest out-degree in the graph $G$ and add it to the set $D$.
                \STATE Remove the vertex and its out-degree neighbors from the graph $G$.
            \ENDIF
        \ENDWHILE
    \end{algorithmic}
\end{algorithm}

A ``no-root vertex'' is defined as a vertex with no incoming edges other than its self-loop. To ensure that the rewards for no-root vertices are observed, \textsc{OODS} pulls these vertices directly. However, in some graphs, there may be no such vertices. In such cases, \textsc{OODS} applies a greedy strategy: it selects the vertex with the highest out-degree and adds it to $D$. The procedure proceeds as follows: if a cycle is detected in $G$, the greedy strategy is employed; otherwise, all no-root vertices are added to $D$. The vertices in $D$ and their out-degree neighbors are then removed from the graph $G$, and the process repeats until $G$ is empty, ensuring that $D$ forms an out-domination set covering $G$.

At a high level, the OODS algorithm finds an out-domination set $D^m$ such that pulling the arms $k \in D^m$ allows us to observe the rewards of all arms. The loop iteratively removes arms once they meet the observation requirement. After each removal, the independence number of the remaining graph may decrease, so the required dominating set becomes smaller—thereby reducing the number of pulls allocated to suboptimal arms. 

\newpage
\section{Proof Details}
\subsection{Proof of Theorem \ref{the:erb}}
\label{ape:mab}

\subsubsection{Notations}
We define $C_k^m$ as the sum of corruptions to arm $k$ in epoch $m$, and let $C_m \triangleq \max_{k \in [K]}C_k^m$.

\subsubsection{Lemmas for Proving Theorem \ref{the:erb}}

\begin{lemma}
\label{lem:tne} 
    For the BARBAT algorithm time horizon $T$, the number of epochs $M$ is at most $\log_2(T)$. In the $m$-th epoch, the selected arm $k_m$ must satisfy $\Delta_{k_m}^{m-1} = 2^{-(m-1)}$.
\end{lemma}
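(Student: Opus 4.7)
The plan is to prove the two assertions in succession, both by direct computation from the rules of Algorithm~\ref{algs:BARBAT}.

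For the epoch-count bound $M \le \log_2 T$, the key point is that the epoch length grows geometrically with a large leading constant. From the update $N_m = \lceil K\lambda_m 2^{2(m-1)}\rceil$, together with $K \ge 1$ and the explicit form $\lambda_m = 2^8\ln(4K/\delta_m) \ge 2^8$, I would deduce $N_m \ge 2^8 \cdot 4^{m-1}$. For the $M$-th epoch even to begin by round $T$, the horizon must already contain all previous completed epochs, i.e.\ $T \ge T_{M-1} = \sum_{m=1}^{M-1} N_m \ge N_{M-1} \ge 2^8 \cdot 4^{M-2}$. Rearranging yields $M \le 2 + \log_4(T/2^8) = \tfrac{1}{2}\log_2 T - 2$, which is comfortably below $\log_2 T$; the edge case $M = 1$ is immediate since $\log_2 T \ge 0$ whenever $T \ge 1$.

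For the second assertion, I would unravel the end-of-epoch-$(m{-}1)$ updates. Because $k_m \in \argmax_{k \in [K]} r_k^{m-1}$, every arm $k$ satisfies $r_k^{m-1} \le r_{k_m}^{m-1}$; dropping the (nonnegative) confidence-width term inside the maximum then gives
\[
r_*^{m-1} \;=\; \max_{k \in [K]}\Bigl\{r_k^{m-1} - \sqrt{\tfrac{4\ln(4/\beta_{m-1})}{\widetilde n_k^{m-1}}}\Bigr\} \;\le\; \max_{k \in [K]} r_k^{m-1} \;=\; r_{k_m}^{m-1}.
\]
Consequently $r_*^{m-1} - r_{k_m}^{m-1} \le 0$, so the clipping update $\Delta_k^{m-1} = \max\{2^{-(m-1)},\, r_*^{m-1} - r_k^{m-1}\}$ evaluated at $k = k_m$ collapses to $\Delta_{k_m}^{m-1} = 2^{-(m-1)}$, as claimed.

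Neither step looks technically hard: the second is essentially a tautology from the definitions, and the first is a one-line geometric-sum estimate. The only mild point to be careful about is in the first part, where one must check that the slack provided by the constant $2^8$ in $\lambda_m$ is enough to upgrade the easy bound $M = O(\log T)$ into the advertised $M \le \log_2 T$; since $N_m$ already grows like $4^{m-1}$ (even before accounting for the extra $m$-dependence that $\lambda_m$ inherits through $\delta_m$), the conversion is loose by at least a factor of two and should be entirely routine.
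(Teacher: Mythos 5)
Your proof is correct and follows essentially the same route as the paper's: the epoch count is bounded by noting that $N_m$ grows geometrically (the paper uses $N_m \ge 2^{2(m-1)}$, you use the slightly stronger $N_m \ge 2^8\cdot 4^{m-1}$), and the identity $\Delta_{k_m}^{m-1}=2^{-(m-1)}$ follows because the nonnegative confidence width forces $r_*^{m-1}\le \max_k r_k^{m-1}=r_{k_m}^{m-1}$, so the clipping in the $\max$ takes effect. If anything, your write-up of the second step is cleaner than the paper's (which asserts $r_{k_m}^{m-1}>r_*^{m-1}$ without noting it only needs $\ge$), so no changes are needed.
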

\begin{proof}
    The length of epoch $m$ is given by $ N_m = \lceil K \lambda_m 2^{2(m-1)} \rceil \geq 2^{2(m-1)}$. From the lower bound of $N_m$, we can complete the first statement. Since $\Delta_k^{m-1} \leftarrow \max\{2^{-(m-1)}, r_*^m - r_k^m\}$, there exists at least one arm that satisfies $\Delta_k^{m-1} = 2^{-(m-1)}$ and all arms satisfy $\Delta_k^{m-1} \leq 2^{-(m-1)}$. Since $r_{k_m}^{m-1} > r_{*}^{m-1}$, the equality $\Delta_{k_m}^{m-1} = 2^{-(m-1)}$ must hold.
\end{proof}

\begin{lemma}
\label{lem:rkc} 
    For any epoch $m$, the length $N_m$ satisfies $N_m \geq \sum_{k \in [K]} n_k^m$.
\end{lemma}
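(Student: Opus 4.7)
The plan is to reduce the claimed inequality to a simple per-arm bound and then sum over $k \in [K]$. The key observation I would start from is that the estimated gap $\Delta_k^{m-1}$ is always at least $2^{-(m-1)}$ for every arm $k$: this holds for $m=1$ by the initialization $\Delta_k^0 = 1 = 2^{0}$, and for $m \geq 2$ it is immediate from the update rule $\Delta_k^{m-1} \gets \max\{2^{-(m-1)},\, r_*^{m-1} - r_k^{m-1}\}$ in Algorithm~\ref{algs:BARBAT}, which enforces exactly this floor regardless of the sign of $r_*^{m-1} - r_k^{m-1}$.

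Given this uniform lower bound, I would invert and square to get $(\Delta_k^{m-1})^{-2} \leq 2^{2(m-1)}$, and then multiply by $\lambda_m$ to obtain the per-arm estimate $n_k^m = \lambda_m (\Delta_k^{m-1})^{-2} \leq \lambda_m 2^{2(m-1)}$. Summing over the $K$ arms yields $\sum_{k \in [K]} n_k^m \leq K \lambda_m 2^{2(m-1)} \leq \lceil K \lambda_m 2^{2(m-1)} \rceil = N_m$, which is exactly the stated inequality.

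There is no real obstacle: the lemma is essentially a consistency check verifying that the designed epoch length $N_m$ is large enough to accommodate the prescribed per-arm pull counts $n_k^m$. This is precisely what makes the allocation~\eqref{eq:nk} well-defined, since it guarantees that the leftover budget $\widetilde{n}_{k_m}^m = N_m - \sum_{k \neq k_m} n_k^m$ assigned to the estimated best arm is nonnegative, and hence that the sampling distribution $p_m(k) = \widetilde{n}_k^m / N_m$ is a genuine probability vector.
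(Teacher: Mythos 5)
Your proof is correct and follows essentially the same route as the paper: both use the floor $\Delta_k^{m-1}\ge 2^{-(m-1)}$ enforced by the update rule (and by initialization for $m=1$) to get $n_k^m\le \lambda_m 2^{2(m-1)}$, then sum over the $K$ arms and compare with $N_m=\lceil K\lambda_m 2^{2(m-1)}\rceil$. Your explicit handling of the $m=1$ base case and the remark about why this makes the allocation in~\eqref{eq:nk} well-defined are welcome additions but do not change the argument.
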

\begin{proof}
    Since $\Delta_k^m = \max \{2^{-m}, r_*^m - r_k^m\} \geq 2^{-m}$, we can get
    $n_k^m = \lambda_m (\Delta_k^{m-1})^{-2} \leq \lambda_m 2^{2(m-1)}.$
    Therefore, we have
    $\sum_{k \in [K]}n_k^m \leq K \lambda_m 2^{2(m-1)} \leq N_m.$
\end{proof}

\begin{lemma}
\label{lem:trl}    
    For epoch $s$ and $m$ with $1\le s \leq m$, the following inequality holds:
    \[\frac{\lambda_m}{\lambda_s} \leq \left(\frac{7}{6}\right)^{m-s}.\]
\end{lemma}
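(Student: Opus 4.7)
The strategy is to reduce the multi-step inequality to a single-step bound $\lambda_{m+1}/\lambda_m \le 7/6$ for every $m \ge 1$, after which the lemma follows by the telescoping identity
\[
\frac{\lambda_m}{\lambda_s} \;=\; \prod_{i=s+1}^{m}\frac{\lambda_i}{\lambda_{i-1}} \;\le\; \left(\frac{7}{6}\right)^{m-s}.
\]
To set this up I would first unfold the definitions. Since $\delta_m = 1/(K\zeta_m)$, we have $\lambda_m = 2^8 \ln(4K/\delta_m) = 2^8 \ln(4K^2\zeta_m)$, so the $2^8$ cancels in any ratio and the entire problem reduces to analyzing $L_m \triangleq \ln(4K^2 \zeta_m)$.

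The one-step ratio can then be rewritten as $L_{m+1}/L_m = 1 + \ln(\zeta_{m+1}/\zeta_m)/L_m$. A direct calculation from $\zeta_m = (m+4)\,2^{2(m+4)}\ln K$ yields
\[
\frac{\zeta_{m+1}}{\zeta_m} \;=\; 4\cdot\frac{m+5}{m+4} \;\le\; \frac{24}{5} \qquad (m \ge 1),
\]
with the maximum attained at $m=1$. Hence it suffices to show $L_m \ge 6\ln(24/5)$ for all $m \ge 1$ and $K \ge 2$, which forces $L_{m+1}/L_m \le 1 + 1/6 = 7/6$.

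The main, and essentially only, work is a small numerical check. Since $L_m$ is strictly increasing in both $m$ and $K$, the worst case occurs at $m=1$, $K=2$, where $L_1 = \ln(16 \cdot 5 \cdot 2^{10} \ln 2) \approx 10.95$ comfortably exceeds $6\ln(24/5) \approx 9.41$. I do not anticipate any real obstacle; the subtle point is simply confirming that the constant $7/6$ is large enough to absorb the worst per-step growth, and the arithmetic above does this with room to spare. A cleaner variant would tighten $24/5$ to $14/3$ for $m\ge 2$ and treat the single case $m=1$ by hand, but the uniform bound already suffices.
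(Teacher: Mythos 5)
Your proof is correct, and it takes a genuinely different route from the paper's. You reduce the claim to the one-step bound $\lambda_{i}/\lambda_{i-1}\le 7/6$ and telescope, verifying the one-step bound by writing $L_{m+1}/L_m = 1+\ln(\zeta_{m+1}/\zeta_m)/L_m$ with $\zeta_{m+1}/\zeta_m = 4(m+5)/(m+4)\le 24/5$ and checking numerically that $L_m\ge L_1\big|_{K=2}\approx 10.95 > 6\ln(24/5)\approx 9.41$. The paper instead bounds the ratio $\lambda_m/\lambda_s$ directly for general $s\le m$: it lower-bounds the common additive term $\ln(4K^2\ln K)$ by $1.7\ln 4$ (valid since $K\ge 2$, and this substitution only increases the ratio), recognizes the result as $\frac{f(m+4)}{f(s+4)}\left(\frac{7}{6}\right)^{m-s}$ for $f(x)=\frac{(x+1.7)\ln 4+\ln x}{(7/6)^x}$, and proves $f$ is decreasing on $[5,\infty)$ by a derivative computation. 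Your argument is more elementary — it needs no auxiliary function or calculus, only monotonicity of $\zeta_m$ in $m$ and $K$ and a single numerical evaluation at the extreme point $(m,K)=(1,2)$ — and it makes the available slack explicit ($\ln(24/5)/L_1\approx 0.143 < 1/6$). The paper's version avoids any explicit numerics and handles all pairs $(s,m)$ in one monotonicity statement, at the cost of the slightly delicate constant-replacement step and the derivative analysis. Both yield exactly the constant $7/6$, so either proof can stand in for the other.
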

\begin{proof}
    We first show that the function $f(x)=\frac{(x+1.7)\ln(4)+\ln(x)}{(7/6)^x}$
    is strictly decreasing for $x\ge 5$.
    Notice that the derivative function \[f'(x)=\frac{(\ln(4)+1/x)-((x+1.7)\ln(4)+\ln(x))\ln\left(\frac{7}{6}\right)}{\left(\frac{7}{6}\right)^{x}}\]
    is monotonically decreasing and $f'(5)<0$, thus we have $f'(x)<0$ for $x\ge 5$, which indicates that $f(x)$ is strictly decreasing.
    Since $K\ge 2$, we can get
    \begin{align*}
    \frac{\lambda_m}{\lambda_s}=&\, \frac{\ln(4K^2\ln(K)(m+4)2^{2(m+4)})}{\ln(4K^2\ln(K)(s+4)2^{2(s+4)})}\\
    =& \,\frac{\ln(4K^2\ln(K))+\ln(m+4)+(m+4)\ln(4)}{\ln(4K^2\ln(K))+\ln(s+4)+(s+4)\ln(4)}\\
    <& \,\frac{1.7\cdot\ln(4)+\ln(m+4)+(m+4)\ln(4)}{1.7\cdot\ln(4)+\ln(s+4)+(s+4)\ln(4)}\\
    =&\,\frac{f(m+4)}{f(s+4)}\left(\frac{7}{6}\right)^{m-s}<\left(\frac{7}{6}\right)^{m-s}
    \end{align*}
where we use the monotonicity of $f(x)$ and the fact that $K\ge 2$.
\end{proof}

\begin{lemma}
\label{lem:tsl}     
    For any epoch $m$, the following inequality holds:
    \[\sum_{s=1}^{m} \lambda_s \leq 2^8(m^2 + m(10+3\ln(K))).\]
\end{lemma}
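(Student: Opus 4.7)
The plan is to unfold the definitions of $\lambda_s$, $\delta_s$ and $\zeta_s$, expand the resulting logarithm into a handful of elementary terms, sum each term over $s=1,\dots,m$, and then carefully absorb an arising $m\ln(m+4)$ contribution into the $m^2$ budget.

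First I would substitute $\delta_s = 1/(K\zeta_s)$ with $\zeta_s = (s+4)2^{2(s+4)}\ln K$ into the definition $\lambda_s = 2^8\ln(4K/\delta_s) = 2^8\ln(4K^2\zeta_s)$ and expand. Using $\ln 4 = 2\ln 2$ to combine the leading factor with $(s+4)\ln 4$ gives
\[
\lambda_s/2^8 \;=\; 2(s+5)\ln 2 \;+\; 2\ln K \;+\; \ln\ln K \;+\; \ln(s+4).
\]
Summing each of the four pieces is then straightforward. The dominant contribution is $2\ln 2 \cdot\sum_{s=1}^m(s+5) = (\ln 2)\,m(m+11)$, which supplies the $m^2$ part (with coefficient $\ln 2<1$). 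The $\ln K$ terms contribute $2m\ln K + m\ln\ln K \le 3m\ln K$, since $\ln\ln K \le \ln K$ whenever $K\ge 2$ (as $\ln K<K$). The last sum is bounded by $\sum_{s=1}^m \ln(s+4)\le m\ln(m+4)$.

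The main (and only) delicate step is absorbing $m\ln(m+4)$ into the leftover $m^2$ budget of size $(1-\ln 2)m^2$. The key inequality I would establish is
\[
\ln(m+4) \;\le\; (1-\ln 2)\,m \;+\; \ln 4 \qquad\text{for all } m\ge 0,
\]
which follows because $f(m):=\ln(m+4)-(1-\ln 2)m$ satisfies $f'(m)=1/(m+4)-(1-\ln 2)<0$ on $[0,\infty)$ (its unique critical point lies at $m<0$), so $f$ is strictly decreasing on $[0,\infty)$ and $\max_{m\ge 0} f(m) = f(0) = \ln 4$. Multiplying by $m$ gives $m\ln(m+4) \le (1-\ln 2)\,m^2 + (\ln 4)\,m$.

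Combining the estimates yields $(\ln 2)m^2 + m\ln(m+4) \le m^2 + (\ln 4)\,m$, so the total linear coefficient collapses to $11\ln 2 + \ln 4 = 13\ln 2 < 10$, which together with the $3m\ln K$ contribution gives exactly $\sum_{s=1}^m\lambda_s \le 2^8\bigl(m^2 + m(10+3\ln K)\bigr)$. The main obstacle is purely the numerical bookkeeping around the $m\ln(m+4)$ term; once the single-variable inequality above is in hand, the rest is arithmetic.
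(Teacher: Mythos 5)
Your proof is correct and follows essentially the same route as the paper: expand $\ln(4K^2\zeta_s)$ into elementary pieces, use $\ln\ln K\le\ln K$ to collect the $K$-dependent terms into $3m\ln K$, and control the $\ln(s+4)$ contribution by a linear function of the epoch index via a one-variable monotonicity argument. The only cosmetic difference is that the paper applies its linear bound $2(s+4)\ln 2+\ln(s+4)\le 2s+7$ term-by-term before summing, whereas you sum first and then absorb $m\ln(m+4)$ into the $m^2$ budget; both reduce to the same calculus fact and yield the stated constant.
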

\begin{proof}
    Given the function $f(x) = 2x(1-\ln(2)) - \ln(x+4) + 7 - 8\ln(2)$. Notice that the derivative function as $f'(x) = 2 - 2\ln(2) - \frac{1}{x+4} > 0$ for all $x \geq 1$, which means that $f(x)$ is strictly increasing for $x \geq 1$. Since $f(x) \geq f(1) \geq 0$, we can get the inequality as $2x + 7 \geq 2(x+4)\ln(2) + \ln(x+4)$ for all $x \geq 1$, then have
    \begin{align*}
        \sum_{s=1}^{m} \lambda_s &= \sum_{s=1}^{m} 2^8(\ln(4K^2\ln(K)(s+4)2^{2(s+4)})) \\
        &= \sum_{s=1}^{m} 2^8(2(s+4)\ln(2) + \ln(s+4) + \ln(4K^2\ln(K)) \\
        &\leq \sum_{s=1}^{m} 2^8(2s + 7 + \ln(4K^3)) \\
        &= 2^8(m^2 + 8m + m\ln(4K^3)) \\
        &\leq 2^8(m^2 + m(10+3\ln(K))).
    \end{align*}
\end{proof}

\begin{lemma}
\label{lem:ber} 
    For any fixed $k, m$ and $\beta_m$, Algorithm \ref{algs:BARBAT} satisfies
    \[\Pr\left[|r_k^m - \mu_k| \geq \sqrt{\frac{4\ln(4 / \beta_m)}{\widetilde{n}_k^m}} + \frac{2C_m}{N_m}\right] \leq \beta_m.\]
\end{lemma}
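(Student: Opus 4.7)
Since $r_k^m = \min(S_k^m/\widetilde n_k^m,\,1)$ and $\mu_k\in[0,1]$, the clipping can only shrink the distance to $\mu_k$, so it suffices to bound $|S_k^m/\widetilde n_k^m - \mu_k|$. Writing $\eta_t = \widetilde r_{t,k} - r_{t,k}$ for the per-round corruption at arm $k$ (so that $\sum_{t} |\eta_t| \le C_k^m \le C_m$ over $t\in\{T_{m-1}+1,\dots,T_m\}$), I would split
\[S_k^m - \widetilde n_k^m\mu_k = \underbrace{\sum_{t}\bigl(\mathbb I[I_t=k]\,r_{t,k} - p_m(k)\mu_k\bigr)}_{A} + \underbrace{\sum_{t}\mathbb I[I_t=k]\,\eta_t}_{B},\]
where $A$ captures the purely stochastic estimation error and $B$ captures the effect of adversarial corruption. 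The plan is to control $|A|$ and $|B|$ separately on events each of probability at least $1-\beta_m/2$, and then union-bound.

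For the stochastic term, within epoch $m$ the summands of $A$ are iid conditional on $\mathcal F_{T_{m-1}}$, bounded in $[-1,1]$, with variance at most $p_m(k)$, so $\sum_t\Var \le N_m p_m(k) = \widetilde n_k^m$. Bernstein's inequality then gives $|A| \le \sqrt{2\widetilde n_k^m\ln(4/\beta_m)} + \tfrac{1}{3}\ln(4/\beta_m)$ with probability at least $1-\beta_m/2$. Dividing by $\widetilde n_k^m$, the algorithm's choice $\widetilde n_k^m \ge \lambda_m = 2^8\ln(4K/\delta_m) \gg \ln(4/\beta_m)$ forces the linear-in-$\ln(4/\beta_m)$ remainder to be dominated by the square-root, yielding $|A|/\widetilde n_k^m \le \sqrt{4\ln(4/\beta_m)/\widetilde n_k^m}$.

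For the corruption term, further decompose $\mathbb I[I_t=k]\eta_t = p_m(k)\eta_t + (\mathbb I[I_t=k]-p_m(k))\eta_t$. The deterministic bias satisfies $\lvert p_m(k)\sum_t\eta_t\rvert\le p_m(k)C_k^m$, which after dividing by $\widetilde n_k^m = N_m p_m(k)$ contributes exactly $C_k^m/N_m \le C_m/N_m$. The remaining piece is a martingale-difference sum (the $\eta_t$'s are adapted to a filtration in which $I_t$ is drawn independently with probability $p_m(k)$) whose conditional variance is at most $p_m(k)\sum_t\eta_t^2\le p_m(k)C_m$ (using $|\eta_t|\le1$ so $\eta_t^2\le|\eta_t|$). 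A Freedman-type inequality then controls this sum by $\sqrt{2p_m(k)C_m\ln(4/\beta_m)}+O(\ln(4/\beta_m))$ with probability at least $1-\beta_m/2$; dividing by $\widetilde n_k^m$ and applying $\sqrt{ab}\le a+b$ lets the leading term be bounded by another $C_m/N_m$ plus a logarithmic residual that, like before, is absorbed into the square-root coming from $A$. Combining and union-bounding yields $|r_k^m-\mu_k| \le \sqrt{4\ln(4/\beta_m)/\widetilde n_k^m} + 2C_m/N_m$ with probability at least $1-\beta_m$, as desired.

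The main obstacle I expect is matching constants precisely: Bernstein naturally delivers $\sqrt{2\ln(4/\beta_m)/\widetilde n_k^m}$, whereas the lemma requires $\sqrt{4\ln(4/\beta_m)/\widetilde n_k^m}$, so the factor-of-two slack must simultaneously absorb the Bernstein remainder from $A$ and the logarithmic residual coming from the martingale piece of $B$. The key quantitative sanity check, baked into the algorithm by setting $\lambda_m = 2^8\ln(4K/\delta_m)$, is that $\widetilde n_k^m$ is large enough for Bernstein's variance term to dominate its linear-in-$\ln(4/\beta_m)$ remainder; without this lower bound, the clean closed form of the lemma would break.
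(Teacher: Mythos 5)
Your proposal is correct and follows essentially the same route as the paper: the same decomposition of $S_k^m - \widetilde n_k^m\mu_k$ into a purely stochastic term and a corruption term, a standard concentration bound for the former, a Freedman-type martingale bound for the latter (centering the indicator at $p_m(k)$ exactly as the paper does with $X_t=(Y_k^t-q_k^m)C_k^t$), and the use of $\widetilde n_k^m\ge\lambda_m=2^8\ln(4/\beta_m)$ to absorb the lower-order logarithmic remainders into the $\sqrt{4\ln(4/\beta_m)/\widetilde n_k^m}$ term. The only cosmetic differences are that the paper invokes Chernoff--Hoeffding (giving $\sqrt{3\ln(4/\beta_m)/\widetilde n_k^m}$) rather than Bernstein for the stochastic part, and uses the additive form of Freedman's inequality directly rather than the square-root form followed by AM--GM, but the constants close in either case.
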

\begin{proof}
    The proof of Lemma \ref{lem:ber} mainly follows the proof of Lemma 4 in \cite{gupta2019better}.
    We only consider the randomness in epoch $m$ and condition on all random variables before epoch $m$. In epoch $m$, the agent pulls arm $k$ with probability $p_m(k) = \widetilde{n}_k^m / N_m$.

    For ease of analysis, we set an indicator variable $Y_{k}^t$, which determines whether the agent updates the corrupted reward $\widetilde{r}_{I_t}$ into the total reward $S_{I_t}^m$ at step $t$. We define the corruption at step $t$ on arm $k$ as $C_k^t := \widetilde{r}_k^t - r_k^t$. Let $E_m := \left[T_{m-1} + 1,..., T_m\right]$ represent the $N_m$ time-steps for epoch $m$. Since $r_k^m = \min \left\{S_k^m / \widetilde{n}_k^m, 1\right\}$, we can obtain
    \[r_k^m \leq \frac{S_k^m}{\widetilde{n}_k^m} = \frac{1}{\widetilde{n}_k^m}\sum_{t \in E_m}Y_k^t(r_k^t + C_k^t).\]
    We can divide the sum into two components:
    \[A_k^m = \sum_{t \in E_m} Y_k^t r_k^t, \quad B_k^m = \sum_{t \in E_m} Y_k^t C_k^t.\]
    For the previous component $A_k^m$, notice that $r_k^t$ is independently drawn from a $[0,1]$-valued distribution with mean $\mu_k$, and $Y_k^t$ is independently drawn from a Bernoulli distribution with mean $q_k^m := \widetilde{n}_k^m / N_m$. Therefore, we have
    \[\mathbb{E}[A_k^m] = N_m q_k^m \mathbb{E}[r_k^t] = \widetilde{n}_k^m \mu_k \leq \widetilde{n}_k^m.\]
    By applying the Chernoff-Hoeffding inequality (Theorem 1.1 in \cite{dubhashi2009concentration}), 
    we can get
    \begin{equation}
    \label{eqs:Akm}
        \Pr\left[\left|\frac{A_k^m}{\widetilde{n}_k^m} - \mu_k \right| \geq \sqrt{\frac{3\ln(4/\beta_m)}{\widetilde{n}_k^m}}\right] \leq \frac{\beta_m}{2}.
    \end{equation}
    For analyzing the latter component $B_k^m$, we define a martingale difference sequence $X_1,...,X_T$, where $X_t = (Y_k^t - q_k^m) C_k^t$ for all $t$, with respect to the historical information $\{\mathcal{F}\}_{t=1}^T$. Since the corruption $C_k^t$ is deterministic when conditioned on $\mathcal{F}_{t-1}$ and since $\mathbb{E}[Y_k^t | \mathcal{F}_{t-1}] = q_k^m$, we can get
    \[\mathbb{E}[X_t|\mathcal{F}_{t-1}] = \mathbb{E}[Y_k^t - q_k^m|\mathcal{F}_{t-1}] C_k^t = 0.\]
    Additionally, we have $|X_t| \leq 1$ for all $t$, and the predictable quadratic variation of this martingale can be bounded as follows:
    \[\sum_{t \in E_m} \mathbb{E}[(X_t)^2|\mathcal{F}_{t-1}] \leq \sum_{t \in E_m}|C_k^t|\Var(Y_k^t) \leq q_k^m \sum_{t \in E_m}|C_k^t|.\]
    By Freedman concentration inequality (Theorem 1 in \cite{beygelzimer2011contextual}), with probability at least $1 - \beta_m / 4$ we have
    \[B_k^m \leq q_k^m \sum_{t \in E_m} C_k^t + \sum_{t \in E_m} \mathbb{E}[(X_t)^2|\mathcal{F}_{t-1}] + \ln(4/\beta_m) \leq 2q_k^m \sum_{t \in E_m}|C_k^t| + \ln(4/\beta_m).\]
    Since $q_k^m = \widetilde{n}_k^m / N_m$, $\sum_{t \in E_m}|C_k^t| \leq C_m$ and $n_k^m \geq \lambda_m \geq 16\ln(4/\beta_m)$, with probability at least $1 - \beta_m / 4$ the following inequality holds:
    \[\frac{B_k^m}{\widetilde{n}_k^m} \leq \sqrt{\frac{\ln(4 /\beta_m)}{16\widetilde{n}_k^m}} + \frac{2C_m}{N_m}.\]
    Similarly, with probability at least $1 - \beta_m / 4$ the following inequality holds:
     \[-\frac{B_k^m}{\widetilde{n}_k^m} \leq \sqrt{\frac{\ln(4 /\beta_m)}{16\widetilde{n}_k^m}} + \frac{2C_m}{N_m}.\]
    Thus we have
    \begin{equation}
    \label{eqs:Bkm}
        \Pr\left[\left|\frac{B_k^m}{\widetilde{n}_k^m}\right| \geq \sqrt{\frac{\ln(4 /\beta_m)}{16\widetilde{n}_k^m}} + \frac{2C_m}{N_m}\right] \leq \frac{\beta_m}{2}.
    \end{equation}
    Combine Eq. \eqref{eqs:Akm} and Eq. \eqref{eqs:Bkm}, we complete the proof. 
\end{proof}

We define an event $\cE_m$ for epoch $m$ as follows:
\begin{equation*}
    \cE_m \triangleq \left\{ \forall\ k: |r_k^m - \mu_k| \leq \sqrt{\frac{4\ln(4 /\beta_m)}{\widetilde{n}_k^m}} + \frac{2C_m}{N_m} \right\}.
\end{equation*}
Then we can establish a lower bound on the probability of the event $\cE_m$ occurring by the following lemma.
\begin{lemma}
\label{lem:pem} 
     For any epoch $m$, event $\cE_m$ holds with probability at least $1 - \delta_m$. We also have $1 / \delta_m \geq N_m$.
\end{lemma}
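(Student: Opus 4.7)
The proof splits naturally into the two assertions.

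For the probability bound on $\cE_m$, the plan is a straightforward union bound. I would apply Lemma~\ref{lem:ber} to each arm $k \in [K]$ with the parameter $\beta_m = \delta_m/K$ fixed by the algorithm; this gives, for each single $k$,
\[
\Pr\!\left[|r_k^m - \mu_k| \geq \sqrt{\tfrac{4\ln(4/\beta_m)}{\widetilde{n}_k^m}} + \tfrac{2C_m}{N_m}\right] \leq \beta_m.
\]
Summing this failure probability over the $K$ arms yields $\Pr[\neg \cE_m] \leq K\beta_m = \delta_m$, which is exactly the claimed bound $\Pr[\cE_m] \geq 1-\delta_m$.

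For the inequality $1/\delta_m \geq N_m$, the plan is a direct algebraic verification from the definitions. Substituting $1/\delta_m = K\zeta_m = K(m+4)\,2^{2(m+4)}\ln K$ and the bound $N_m \leq K\lambda_m\,2^{2(m-1)} + 1$ with $\lambda_m = 256\,\ln(4K/\delta_m) = 256\,\ln(4K^2\zeta_m)$, and using the identity $2^{2(m+4)} = 2^{10}\cdot 2^{2(m-1)}$, the claim reduces after dividing through by $K\,2^{2(m-1)}$ to
\[
256\,\ln(4K^2\zeta_m) + \tfrac{2^{1-2m}}{K} \;\leq\; 1024\,(m+4)\ln K.
\]
I would expand $\ln(4K^2\zeta_m) = \ln 4 + 2\ln K + \ln(m+4) + 2(m+4)\ln 2 + \ln\ln K$ and apply the elementary bounds $\ln\ln K \leq \ln K$ and $\ln(m+4) \leq (m+4)\ln 2$ (both valid for $K\geq 2$, $m\geq 1$). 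This converts the target into a comparison of two affine functions in $m$ and $\ln K$, which can be checked term by term.

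The probability half is routine. The main care is required in the algebraic half: the factor of $4$ between the coefficients $1024$ and $256$ must be supplied entirely by $(m+4)\ln K$ on the right dominating $\ln(4K^2\zeta_m)$ on the left, and this is precisely what motivated the constants chosen in $\zeta_m$. The edge case $K=2$, $m=1$ is where the slack is smallest; a direct numerical check there ($\ln(4K^2\zeta_1) \approx 10.95$, giving an LHS near $2803$ against an RHS near $3548$) confirms the inequality, and both sides grow in $m$ and $\ln K$ at rates that preserve the gap. So the obstacle is not conceptual but bookkeeping: one must avoid loose estimates such as $\ln(m+4)\leq m+4$, which would spuriously break the comparison at small $m$.
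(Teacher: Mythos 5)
Your proposal is correct and follows essentially the same route as the paper: a union bound over the $K$ arms applied to Lemma~\ref{lem:ber} with $\beta_m=\delta_m/K$ for the first claim, and a direct expansion of $\ln(4K/\delta_m)$ with the elementary bounds $\ln\ln K\le\ln K$ and $\ln(m+4)\le(m+4)\ln 2$ for the second, the paper's chain likewise reducing to the comparison $3m+17\le 4m+16$ that is tight at $m=1$ and is rescued by the slack in the logarithmic estimates (the paper simply drops the ceiling on $N_m$ rather than carrying the $+1$ you track). Your numerical check at $K=2$, $m=1$ covers the only point where the affine comparison alone is not strict, so nothing is missing.
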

\begin{proof}
    By Lemma \ref{lem:ber}, we can get
    \[\Pr\left[|r_k^m - \mu_k| \leq \sqrt{\frac{4\ln(4 /\beta_m)}{\widetilde{n}_k^m}} + \frac{2C_m}{N_m}\right] \leq \beta_m = \frac{\delta_m}{K}.\]
    A union bound over the $K$ arms indicates that the success probability of event $\cE_m$ is at least $1 - \delta_m$.

    Since $m \geq 1$ and $K \geq 2$, then we can get
    \begin{align*}
        N_m &= K 2^{2(m-1)} \lambda_m \\
        &= K 2^{2(m+3)}\ln((m+4) 2^{2(m+5)} K^2 \ln(K))  \\
        &\leq  K 2^{2(m+3)}(\ln(m+4)+2(m+5)\ln(2)+ \ln(K^3))  \\
        &\leq K 2^{2(m+3)}((m+4)\ln(K)+ 2(m+5)\ln(K) + 3\ln (K)) \\
        &\leq K 2^{2(m+3)}(4m+16)\ln(K) \\
        &= K 2^{2(m+4)} ((m + 4)\ln (K)) = 1 / \delta_m.
    \end{align*}
\end{proof}
Now we can define the offset level $D_m$ for each epoch $m$:
\[D_m = \begin{cases}
    2C_m & \textit{when $\cE_m$ occurs} \\
    N_m & \textit{when $\cE_m$ does not occur}
\end{cases}.\]
Since we always have $|r_k^m - \mu_k| \leq 1$, thus the following inequality always holds, regardless of whether event $\cE_m$ happens:
\[|r_k^m - \mu_k| \leq \sqrt{\frac{4\ln(4 /\beta_m)}{\widetilde{n}_k^m}} + \frac{D_m}{N_m}.\]
By the definition of $D_m$, we have 
\[\Pr[D_m = 2C_m] \geq 1 - \delta_m \quad \text{ and } \quad \Pr[D_m = N_m] \leq \delta_m.\]
Next, we will bound the estimated gap $\Delta_k^m$. To start, we define the discounted offset rate as
\[\rho_m := \sum_{s=1}^m \frac{D_s}{8^{m-s}N_s}.\]
Then we have the following lemma.
\begin{lemma}
\label{lem:bsg} 
    For all epochs $m$ and arms $k$, we can have
    \[\frac{4\Delta_k}{7} - \frac{3}{2}2^{-m} - 6 \rho_m \leq \Delta_k^{m} \leq \frac{8 \Delta_k}{7} + 2^{-(m-1)} + 2\rho_m.\]
\end{lemma}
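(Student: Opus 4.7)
\textbf{Proof plan for Lemma~\ref{lem:bsg}.}

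The plan is to proceed by induction on $m$, exploiting the tight concentration afforded by the choice $\lambda_m = 2^8 \ln(4/\beta_m)$. First I would introduce two shorthand quantities: the confidence width $b_k^m := \sqrt{4\ln(4/\beta_m)/\widetilde n_k^m}$ and the corruption error $e_m := D_m/N_m$. By the definition of $D_m$ (trivial when $\cE_m$ fails and from Lemma~\ref{lem:ber} when $\cE_m$ holds), the deterministic bound $|r_k^m - \mu_k| \le b_k^m + e_m$ is available for every $k$ and $m$. The key observation, which unlocks the whole recursion, is that $\widetilde n_k^m \ge n_k^m = \lambda_m (\Delta_k^{m-1})^{-2}$ (the case $k = k_m$ uses Lemma~\ref{lem:rkc} to absorb the extra pulls), so $b_k^m \le \Delta_k^{m-1}/8$. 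This $1/8$ contraction factor is the mechanism by which errors from epoch $m-1$ shrink geometrically.

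Next I would translate the pointwise concentration into a two-sided bound on $r_*^m - r_k^m$. Since $r_*^m = \max_j(r_j^m - b_j^m)$, one direction gives $r_*^m \le \max_j \mu_j + e_m = \mu_{k^*} + e_m$, and the other gives $r_*^m \ge r_{k^*}^m - b_{k^*}^m \ge \mu_{k^*} - 2b_{k^*}^m - e_m$. Combined with $|r_k^m - \mu_k| \le b_k^m + e_m$, this yields
\[
\Delta_k - 2b_{k^*}^m - b_k^m - 2e_m \;\le\; r_*^m - r_k^m \;\le\; \Delta_k + b_k^m + 2e_m.
\]
Because $\Delta_k^m = \max\{2^{-m}, r_*^m - r_k^m\}$, the upper bound becomes $\Delta_k^m \le 2^{-m} + \Delta_k + \Delta_k^{m-1}/8 + 2e_m$, and the lower bound becomes $\Delta_k^m \ge \Delta_k - \Delta_{k^*}^{m-1}/4 - \Delta_k^{m-1}/8 - 2e_m$ (after applying the $b_k^m \le \Delta_k^{m-1}/8$ estimate).

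The induction then uses the elementary recursion $\rho_m = e_m + \rho_{m-1}/8$. For the upper bound, substituting the inductive hypothesis $\Delta_k^{m-1} \le \tfrac{8}{7}\Delta_k + 2^{-(m-2)} + 2\rho_{m-1}$ into the one-step inequality and collecting terms produces $\Delta_k^m \le \tfrac{8}{7}\Delta_k + \tfrac{3}{2} \cdot 2^{-m} + 2\rho_m$, which is within the target $\tfrac{8}{7}\Delta_k + 2^{-(m-1)} + 2\rho_m$ since $3/2 \le 2$. For the lower bound I use $\Delta_{k^*} = 0$ together with the inductive bound on $\Delta_{k^*}^{m-1}$, yielding after simplification $\Delta_k^m \ge \tfrac{6}{7}\Delta_k - \tfrac{3}{2}\cdot 2^{-m} - \tfrac{3}{4}\rho_{m-1} - 2e_m$; this dominates $\tfrac{4}{7}\Delta_k - \tfrac{3}{2}\cdot 2^{-m} - 6\rho_m$ because $\tfrac{2}{7}\Delta_k + 4e_m \ge 0$. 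The base case is $m = 1$: one simply plugs in $\Delta_k^0 = 1$ and $b_k^1 \le 1/8$ to check both inequalities directly.

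The main obstacle I anticipate is bookkeeping rather than ideas: I have to verify that the constants $\tfrac{3}{2}$ on the $2^{-m}$ term and the factor $6$ on $\rho_m$ are large enough to absorb the inductive cross-terms ($\tfrac{1}{4}\cdot 2^{-(m-2)} = 2^{-m}$, $\tfrac{1}{8}\cdot 2^{-(m-2)} = 2^{-(m+1)}$, and the split $6\rho_m = 6e_m + \tfrac{3}{4}\rho_{m-1}$), while the $\tfrac{\Delta_k}{7}$ slack between $\tfrac{6}{7}$ and $\tfrac{4}{7}$ provides room for the $-2e_m$ residue. A secondary subtlety is that when $r_*^m - r_k^m$ is negative (possible for $k = k_m$), the max in the definition of $\Delta_k^m$ must be handled by $\max\{a,b\} \le a + b$ for $a, b \ge 0$; this is legitimate here because $\Delta_k \ge 0$ makes the upper-bound expression nonnegative.
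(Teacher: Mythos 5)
Your proposal is correct and follows essentially the same route as the paper's proof: the deterministic bound $|r_k^m-\mu_k|\le b_k^m + D_m/N_m$, the contraction $b_k^m\le \Delta_k^{m-1}/8$ via $\widetilde n_k^m\ge n_k^m$, the two-sided sandwich on $r_*^m-\mu_{k^*}$, and induction driven by $\rho_m = e_m + \rho_{m-1}/8$. The only differences are cosmetic: you handle the $\max\{2^{-m},\cdot\}$ in the definition of $\Delta_k^m$ explicitly (the paper silently drops it for the upper bound, which is harmless since $2^{-m}\le 2^{-(m-1)}$), and you exploit $\Delta_{k^*}=0$ to get a tighter intermediate constant $\tfrac{6}{7}\Delta_k$ before relaxing to the stated $\tfrac{4}{7}\Delta_k$, whereas the paper bounds $\Delta_{k^*}^{m-1}$ by the same expression as $\Delta_k^{m-1}$ and combines the coefficients into $\tfrac{3}{8}$.
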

\begin{proof}
    Since $|r_k^m - \mu_k| \leq \sqrt{\frac{4\ln(4 /\beta_m)}{\widetilde{n}_k^m}} + \frac{D_m}{N_m}$, we have
    \[-\frac{D_m}{N_m} - \sqrt{\frac{4\ln(4 /\beta_m)}{\widetilde{n}_k^m}} \leq r_{k}^m - \mu_{k} \leq \frac{D_m}{N_m} + \sqrt{\frac{4\ln(4 /\beta_m)}{\widetilde{n}_k^m}}.\]
    Additionally, since
    \[r_{*}^m \leq \max_k \left\{\mu_{k} + \frac{D_m}{N_m} + \sqrt{\frac{4\ln(4 /\beta_m)}{\widetilde{n}_k^m}} - \sqrt{\frac{4\ln(4 /\beta_m)}{\widetilde{n}_k^m}}\right\} \leq \mu_{k^*} + \frac{D_m}{N_m},\]
    \[r_{*}^m = \max_k \left\{r_k^m - \sqrt{\frac{4\ln(4 /\beta_m)}{\widetilde{n}_k^m}}\right\} \geq r_{k^*}^m - \sqrt{\frac{4\ln(4 /\beta_m)}{\widetilde{n}_{k^*}^m}} \geq \mu_{k^*} - 2\sqrt{\frac{4\ln(4 /\beta_m)}{\widetilde{n}_{k^*}^m}} - \frac{D_m}{N_m},\]
    we can get
    \[-\frac{D_m}{N_m} - 2\sqrt{\frac{4\ln(4 /\beta_m)}{\widetilde{n}_{k^*}^m}} \leq r_{*}^m - \mu_{k^*} \leq \frac{D_m}{N_m}.\]
    According to Algorithm \ref{algs:BARBAT} and Lemma \ref{lem:rkc}, we have $\widetilde{n}_k^m \geq n_k^m$ for all arms $k$. Then we have the following inequality for all $k\in [K]$:
    \[\sqrt{\frac{4\ln(4 /\beta_m)}{\widetilde{n}_k^m}} \leq \sqrt{\frac{4\ln(4 /\beta_m)}{n_k^m}} = \frac{\Delta_k^{m-1}}{8}.\]

    We now establish the upper bound for $\Delta_k^m$ by induction on $m$.
    
    For the base case $m = 1$, the statement is trivial as $\Delta_k^1 = 1$ for all $k \in [K]$.
    
    Assuming the statement is true for the case of $m-1$, we have
    \begin{equation*}
    \begin{split}
        \Delta_k^m = r_*^m - r_k^m
        &= (r_*^m - \mu_{k^*}) + (\mu_{k^*} - \mu_k) + (\mu_k - r_k^m) \\
        &\leq \frac{D_m}{N_m}+ \Delta_k + \frac{D_m}{N_m} + \frac{1}{8}\Delta_k^{m-1} \\
        &\leq \frac{2D_m}{N_m} + \Delta_k + \frac{1}{8}\left(\frac{8 \Delta_k}{7} + 2^{-(m-2)} + 2\rho_{m-1}\right) \\
        &\leq \frac{8 \Delta_k}{7} + 2^{-(m-1)} + 2\rho_m,
    \end{split}
    \end{equation*}
    Where the second inequality follows from the induction hypothesis.
     
    Next, we provide the lower bound of $\Delta_k^m$. We can get
    \begin{equation*}
    \begin{split}
        \Delta_k^m =  r_*^m - r_k^m
        &= (r_*^m - \mu_{k^*}) + (\mu_{k^*} - \mu_k) + (\mu_k - r_k^m) \\
        &\geq -\frac{D_m}{N_m} - \frac{1}{4}\Delta_{k^*}^{m-1} + \Delta_k -\frac{D_m}{N_m} - \frac{1}{8}\Delta_k^{m-1} \\
        &\geq -\frac{2D_m}{N_m} + \Delta_k - \frac{3}{8}\left(\frac{8 \Delta_k}{7} + 2^{-(m-2)} + 2\rho_{m-1}\right) \\
        &\geq \frac{4}{7}\Delta_k - \frac{3}{2}2^{-m} - 6\rho_m.
    \end{split}
    \end{equation*}
    where the third inequality comes from the upper bound of $\Delta_k^{m-1}$.
\end{proof}

\subsubsection{Proof for Theorem \ref{the:erb}}

We first define the regret $R_k^m$ generated by arm $k$ in epoch $m$ as 
\[R_k^m\triangleq\Delta_k\widetilde{n}_{k}^m=\begin{cases}
    \Delta_kn_{k}^m & k\neq k_m\\
    \Delta_k\widetilde{n}_{k}^m & k=k_m
\end{cases}.\]
Then we analyze the regret in following three cases:

\paragraph{Case 1:} $0<\Delta_k\le 64\rho_{m-1}$.\\ 
If $k \neq k_m$, then we have \[R_k^m=\Delta_k n_{k}^m\le 64\rho_{m-1}n_k^m .\] 
If $k=k_m$, then we have \[R_k^m=\Delta_k\widetilde{n}_{k_m}^m\le 64\rho_{m-1}N_m.\]

\paragraph{Case 2:} $\Delta_k \leq 8 \cdot 2^{-m}$ and $\rho_{m-1} \leq \frac{\Delta_k}{64}$.\\
If $k \neq k_m$, then we have \[R_k^m=n_{k}^m\Delta_k =\lambda_m(\Delta_k^{m-1})^{-2} \Delta_k \leq \lambda_m 2^{2(m-1)} \Delta_k  \leq \frac{16 \lambda_m}{\Delta_k}.\] 
If $k=k_m$, since $\Delta_{k_m}^{m-1} = 2^{-(m-1)}$, we can get:
\begin{align*}
    R_k^m=\widetilde{n}_{k_m}^m \Delta_{k_m} \leq 
    N_m\Delta_{k_m} = \lceil K\lambda_m 2^{2(m-1)}\rceil \Delta_{k_m} \leq \frac{16K\lambda_m}{\Delta_{k_m}} + \Delta_{k_m} \leq \frac{16K\lambda_m}{\Delta} + 1.
\end{align*}

\paragraph{Case 3:} $\Delta_k > 8 \cdot 2^{-m}$ and $\rho_{m-1} \leq \frac{\Delta_k}{64}$.\\
By Lemma \ref{lem:bsg} we have
\[\Delta_k^{m-1} \geq \frac{4}{7}\Delta_k - \frac{3}{2}2^{-m} - \frac{6}{64}\Delta_k \geq \Delta_k\left(\frac{4}{7} - \frac{3}{16} - \frac{6}{64}\right) \geq 0.29 \Delta_k.\]
In this case, it is impossible that $k=k_m$ because $\Delta_{k_m}^{m-1} = 2^{-(m-1)}<0.29\cdot 8\cdot 2^{-m}<0.29\Delta_k$.
So we can obtain
\begin{align*}
   R_k^m= n_k^m \Delta_k = \lambda_m(\Delta_k^{m-1})^{-2} \Delta_k
    \leq \frac{\lambda_m}{0.29^2 \Delta_k} 
    \leq \frac{16\lambda_m}{\Delta_k}.
\end{align*}

We define $\cA^m\triangleq\left\{ k\in[K]\,\big|\,0<\Delta_k\le 64\rho_{m-1} \right\}$ for epoch $m$. By combining all three cases, we can upper bound the regret as
\begin{equation}\label{eq:mab-regret}
\begin{split}
R(T)=&\sum_{m=1}^M\Bigg(\sum_{k \in \mathcal{A}^m} R_k^m + \sum_{k \notin \mathcal{A}^m} R_k^m\Bigg)\\    
\le& \sum_{m=1}^M\Bigg( 64\rho_{m-1}N_m+\sum_{k \in \mathcal{A}^m,k\neq k_m} 64\rho_{m-1}n_k^m + \sum_{k \notin \mathcal{A}^m, \Delta_k>0} \frac{16\lambda_m}{\Delta_k}\\
&+ \left(\frac{16K\lambda_m}{\Delta} + 1\right)\BI(0<\Delta_{k_m} \le 8 \cdot 2^{-m}) \Bigg)\\ 
\le& \sum_{m=1}^M\Bigg( 64\rho_{m-1}N_m+\sum_{\Delta_k>0} 64\rho_{m-1}n_k^m + \sum_{\Delta_k>0} \frac{16\lambda_m}{\Delta_k} \\
&+ \left(\frac{16K\lambda_m}{\Delta} + 1\right)\BI\left(m\le \log_2\left(8/\Delta\right)\right)\Bigg)\\ 
\le& \sum_{m=1}^M\Bigg( 128\rho_{m-1}N_m + \sum_{\Delta_k>0} \frac{16\lambda_m}{\Delta_k}\Bigg) 
+ \sum_{m=1}^{\log_2(8/\Delta)}\left(\frac{16K\lambda_m}{\Delta} + 1\right)
\end{split}
\end{equation}
where the last inequality uses the fact that $\sum_{\Delta_k>0} n_k^m\le N_m$. Notice that we can bound the expectation of the offset level as
\[\mathbb{E}[D_m] = 2(1-\delta_m)C_m + \delta_m N_m \leq 2C_m + 1\]
and we can bound $\sum_{m=1}^M \rho_{m-1} N_m$ as
\begin{equation}\label{eq:mab-rho}
    \begin{split}
        \sum_{m=1}^M \rho_{m-1} N_m
        &\leq \sum_{m=1}^M \left(\sum_{s=1}^{m-1}\frac{D_s}{8^{m-1-s}N_s}\right)N_m \\
        &\leq 4\sum_{m=1}^M \left(\sum_{s=1}^{m-1}\frac{(4^{m-1-s} + 1)\lambda_m}{8^{m-1-s}\lambda_s}D_s\right) \\
        &= 4\sum_{m=1}^M \left(\sum_{s=1}^{m-1}((7/12)^{m-1-s} + (7/48)^{m-1-s})D_s\right) \\
        &= 4\sum_{s=1}^{M-1} D_s \sum_{m=s+1}^M (7/12)^{m-1-s} + (7/48)^{m-1-s}\\
        &\leq 4\left(\sum_{m=1}^{M-1} D_m\right)\sum_{j=0}^{\infty} \left(7/12\right)^{j} + \left(7/48\right)^{j} 
        \leq 11\sum_{m=1}^{M-1} D_m.
    \end{split}
\end{equation}
Combining Eq.~\eqref{eq:mab-regret} and Eq.~\eqref{eq:mab-rho}, by Lemma \ref{lem:tsl}, we can get
\begin{equation*}
    \begin{split}
        R(T)
        &\leq \sum_{m=1}^{M-1} 1440 \BE[D_m] + \sum_{m=1}^M \sum_{\Delta_k > 0}\frac{16\lambda_m}{\Delta_k} + \sum_{m=1}^{\log_2(8/\Delta)}\left(\frac{16K\lambda_m}{\Delta} + 1\right) \\
        &\leq \sum_{m=1}^{M-1} 1440 \BE[D_m] + \sum_{\Delta_k > 0}\frac{2^{12}(\log^2(T) + 3\log(T)\log(30K))}{\Delta_k} \\
        &\quad + \frac{2^{12}K(\log^2(8/\Delta) + 3\log(8/\Delta)\log(30K)))}{\Delta} + \log(8/\Delta) \\
        &\leq \sum_{m=1}^{M-1} 2880 C_m + 1440 + \sum_{\Delta_k > 0}\frac{2^{14}\log(T)\log(30KT)}{\Delta_k} \\
        &\quad+ \frac{2^{14}K\log(8/\Delta)\log(240K / \Delta)}{\Delta}  + \log(8/\Delta)\\ 
        &= O\left(C + \sum_{\Delta_k > 0}\frac{\log(T)\log(KT)}{\Delta_k} + \frac{K\log(1/\Delta)\log(K / \Delta)}{\Delta}\right).
    \end{split}
\end{equation*}

\subsection{Proof of Theorem \ref{the:ma-erb}}
\label{ape:cma2b}

\subsubsection{Notations}
We define $C_k^m$ as the sum of corruptions to arm $k$ in epoch $m$ for all agents $v \in [V]$, and let $C_m \triangleq \max_{k \in [K]}C_k^m$.

\subsubsection{Lemmas for Proving Theorem \ref{the:ma-erb}}

\begin{lemma}
\label{lem:matne} 
    For BARBAT with time horizon $T$, the number of epochs $M$ is at most $\log(VT)$. In the $m$-th epoch, the selected arm $k_m$ must satisfy $\Delta_{k_m}^{m-1} = 2^{-(m-1)}$ for all agents $v \in [V]$.
\end{lemma}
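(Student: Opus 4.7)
The plan is to adapt the proof of Lemma~\ref{lem:tne} to the multi-agent setting, absorbing the $1/V$ factor that appears inside $\lambda_m$ into the extra $V$ under the logarithm.

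First, I would bound the number of epochs via the epoch length lower bound. In MA-BARBAT we have $\lambda_m = 2^8 \ln(4K/\delta_m)/V$ with $\delta_m = 1/(VK\zeta_m)$. The product $V K \lambda_m = 2^8 K \ln(4K/\delta_m)$ is at least some universal constant $c_0 \geq 1$, using $K\geq 2$ and the choice of $\delta_m$. This gives the lower bound
\[
N_m \;=\; \lceil K\lambda_m 2^{2(m-1)} \rceil \;\geq\; \frac{c_0 \cdot 2^{2(m-1)}}{V}.
\]
Since the total number of rounds used up through epoch $M$ is $\sum_{m=1}^M N_m \leq T$, in particular $N_M \leq T$, so $c_0 \cdot 2^{2(M-1)}/V \leq T$ and hence $M \leq \tfrac{1}{2}\log_2(VT/c_0) + 1 \leq \log(VT)$ for any reasonable values of $V$ and $T$.

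Second, I would argue the gap identity. Since every agent broadcasts $\{S_{v,k}^m\}_{k\in[K]}$ at the end of each epoch and ties in the $\argmax$ are broken deterministically, all agents compute identical $r_k^{m-1}$, $r_*^{m-1}$, $\Delta_k^{m-1}$, and hence the identical index $k_m$. Thus the ``for all $v$'' qualifier is automatic and the statement reduces to proving $\Delta_{k_m}^{m-1} = 2^{-(m-1)}$. The lower bound $\Delta_{k_m}^{m-1} \geq 2^{-(m-1)}$ is immediate from the formula $\Delta_k^{m-1} = \max\{2^{-(m-1)}, r_*^{m-1} - r_k^{m-1}\}$. For the matching upper bound, note that
\[
r_*^{m-1} \;=\; \max_{k\in[K]}\!\Big\{r_k^{m-1} - \sqrt{\tfrac{4\ln(4/\beta_{m-1})}{V\widetilde{n}_k^{m-1}}}\Big\} \;\leq\; \max_{k\in[K]} r_k^{m-1} \;=\; r_{k_m}^{m-1},
\]
so $r_*^{m-1} - r_{k_m}^{m-1} \leq 0 \leq 2^{-(m-1)}$, forcing the maximum in the definition of $\Delta_{k_m}^{m-1}$ to be attained by the first argument.

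The only subtlety, and therefore the main (minor) obstacle, is tracking the constants inside $\lambda_m$ to certify $VK\lambda_m \geq c_0 \geq 1$ so that the lower bound $N_m \geq 2^{2(m-1)}/V$ holds; this is routine constant-chasing of exactly the same flavor as in Lemma~\ref{lem:tne}. The rest of the argument is purely structural and mirrors the single-agent proof verbatim.
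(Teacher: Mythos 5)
Your proposal is correct and follows essentially the same route as the paper: lower-bound $N_m \geq 2^{2(m-1)}/V$ to cap the number of epochs, then observe that $r_*^{m-1} \leq \max_k r_k^{m-1} = r_{k_m}^{m-1}$ forces $\Delta_{k_m}^{m-1} = 2^{-(m-1)}$. If anything, your explicit chain $r_*^{m-1} - r_{k_m}^{m-1} \leq 0$ and the remark that broadcasting makes all agents' estimates identical are stated more carefully than in the paper's own two-line argument.
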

\begin{proof}
    The length of epoch $m$ is given by $N_m = \lceil K \lambda_m 2^{2(m-1)} \rceil \geq 2^{2(m-1)} / V$. From the lower bound of $N_m$, we can complete the first statement. Since $\Delta_k^{m-1} \leftarrow \max\{2^{-(m-1)}, r_*^m - r_k^m\}$, there exists at least one arm that satisfies $\Delta_k^{m-1} = 2^{-(m-1)}$ and all arms satisfy $\Delta_k^{m-1} \leq 2^{-(m-1)}$. Since $r_{k_m}^{m-1} > r_{*}^{m-1}$, the equality $\Delta_{k_m}^{m-1} = 2^{-(m-1)}$ must hold.
\end{proof}
Since the length $N_m$ still satisfies $N_m = \lceil K \lambda_m 2^{2(m-1)} \rceil \geq \sum_{k \in [K]}n_k^m$, Lemma \ref{lem:rkc} still holds for this setting.
\begin{lemma}
\label{lem:trlm}    
    For epoch $s$ and $m$ with $1\le s \leq m$, the following inequality holds:
    \[\frac{\lambda_m}{\lambda_s} \leq \left(\frac{7}{6}\right)^{m-s}.\]
\end{lemma}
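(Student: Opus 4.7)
The plan is to mirror the proof of Lemma~\ref{lem:trl} essentially verbatim, with only the ``constant'' term that appears inside the logarithms being adjusted for the multi-agent parameters. Unrolling the definitions in Algorithm~\ref{algs:MA-BARBAT}, we have $4K/\delta_m = 4VK^2\zeta_m$, so
\[
\lambda_m = \tfrac{2^8}{V}\ln\!\bigl(4VK^2\zeta_m\bigr),
\]
and since the prefactor $2^8/V$ cancels in the ratio, we can write
\[
\frac{\lambda_m}{\lambda_s}
= \frac{\ln\bigl(4VK^2\ln(VK)\bigr) + \ln(m+4) + (m+4)\ln 4}{\ln\bigl(4VK^2\ln(VK)\bigr) + \ln(s+4) + (s+4)\ln 4}.
\]
This has exactly the same form as in the single-agent case, with $K$ replaced by $VK$ in the leading constant.

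The key step is the observation used in Lemma~\ref{lem:trl}: for $u > v > 0$, the map $A \mapsto (A+u)/(A+v)$ is strictly decreasing in $A > 0$, so replacing the leading constant $\ln(4VK^2\ln(VK))$ by any smaller positive value yields an upper bound on the ratio. Under the standing assumption $V \ge 1$, $K \ge 2$, we have $4VK^2\ln(VK) \ge 16\ln 2$, so $\ln(4VK^2\ln(VK)) \ge \ln(16\ln 2) > 1.7\ln 4$; substituting $1.7\ln 4$ for the leading constant thus gives a valid upper bound and produces exactly the quantity $f(m+4)/f(s+4)$, where $f(x) = \bigl((x+1.7)\ln 4 + \ln x\bigr)/(7/6)^x$ is the auxiliary function introduced in Lemma~\ref{lem:trl}.

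Since the monotonicity statement ``$f$ is strictly decreasing on $[5,\infty)$'' has already been established in the proof of Lemma~\ref{lem:trl} and does not depend on $V$ or $K$, I would simply invoke it to conclude $f(m+4)/f(s+4) \le 1$ and hence
\[
\frac{\lambda_m}{\lambda_s} < \frac{f(m+4)}{f(s+4)}\left(\tfrac{7}{6}\right)^{m-s} \le \left(\tfrac{7}{6}\right)^{m-s}.
\]
There is no serious obstacle here; the only point that genuinely requires checking is that the leading constant remains at least $1.7\ln 4$ once $V$ is introduced, and this is immediate from $V \ge 1$ and $K \ge 2$. Everything else is a mechanical translation of the single-agent argument.
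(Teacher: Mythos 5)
Your proof is correct and follows essentially the same route as the paper's: both reduce the ratio to $\frac{\ln(4VK^2\ln(VK))+\ln(m+4)+(m+4)\ln 4}{\ln(4VK^2\ln(VK))+\ln(s+4)+(s+4)\ln 4}$, replace the leading constant by $1.7\ln 4$ using $V\ge 1$, $K\ge 2$, and then invoke the monotonicity of $f(x)=\bigl((x+1.7)\ln 4+\ln x\bigr)/(7/6)^x$ established for Lemma~\ref{lem:trl}. The only (harmless) difference is that you spell out the monotonicity of $A\mapsto (A+u)/(A+v)$ and the check $\ln(4VK^2\ln(VK))>1.7\ln 4$, which the paper leaves implicit.
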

\begin{proof}
    We first show that the function $f(x)=\frac{(x+1.7)\ln(4)+\ln(x)}{(7/6)^x}$
    is strictly decreasing for $x\ge 5$.
    Notice that the derivative function \[f'(x)=\frac{(\ln(4)+1/x)-((x+1.7)\ln(4)+\ln(x))\ln\left(\frac{7}{6}\right)}{\left(\frac{7}{6}\right)^{x}}\]
    is monotonically decreasing and $f'(5)<0$, thus we have $f'(x)<0$ for $x\ge 5$, which indicates that $f(x)$ is strictly decreasing.
    Since $K\ge 2$ and $V\ge 1$, we can get
    \begin{align*}
    \frac{\lambda_m}{\lambda_s}=&\, \frac{\ln(4VK^2\ln(VK)(m+4)2^{2(m+4)})}{\ln(4VK^2\ln(VK)(s+4)2^{2(s+4)})}\\
    =& \,\frac{\ln(4VK^2\ln(VK))+\ln(m+4)+(m+4)\ln(4)}{\ln(4VK^2\ln(VK))+\ln(s+4)+(s+4)\ln(4)}\\
    <& \,\frac{1.7\cdot\ln(4)+\ln(m+4)+(m+4)\ln(4)}{1.7\cdot\ln(4)+\ln(s+4)+(s+4)\ln(4)}\\
    =&\,\frac{f(m+4)}{f(s+4)}\left(\frac{7}{6}\right)^{m-s}<\left(\frac{7}{6}\right)^{m-s}
    \end{align*}
where we use the monotonicity of $f(x)$ and the fact that $K\ge 2$ and $V\ge 1$.
\end{proof}

\begin{lemma}
\label{lem:tslm}     
    For any epoch $m$, the following inequality holds:
    \[\sum_{s=1}^{m} \lambda_s \leq 2^8(m^2 + m(10+\ln(VK))).\]
\end{lemma}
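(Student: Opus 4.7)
The plan is to follow the same blueprint as the proof of Lemma \ref{lem:tsl} in the single-agent setting, with minor adjustments to account for the multi-agent normalization in $\lambda_s$ and the replacement of $\ln(K)$ by $\ln(VK)$ inside $\zeta_s$. First, I would expand $\lambda_s$ by substituting the multi-agent expressions $\delta_s = 1/(VK \zeta_s)$ with $\zeta_s = (s+4)2^{2(s+4)}\ln(VK)$. This yields
\[
\lambda_s \;=\; \frac{2^8}{V}\Bigl(2(s+4)\ln(2) \,+\, \ln(s+4) \,+\, \ln\bigl(4VK^2 \ln(VK)\bigr)\Bigr),
\]
so that the problem reduces to bounding a sum of a linear-in-$s$ term, a $\ln(s+4)$ term, and an $s$-independent logarithmic constant.

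Second, I would reuse, verbatim, the analytic inequality $2s + 7 \,\geq\, 2(s+4)\ln(2) + \ln(s+4)$ for all $s \geq 1$, which was established in the proof of Lemma \ref{lem:tsl} via monotonicity of $f(x) = 2x(1-\ln 2) - \ln(x+4) + 7 - 8\ln 2$ on $[1,\infty)$. This absorbs both the $(s+4)\ln 2$ and $\ln(s+4)$ contributions into a clean linear function of $s$, replacing the exponential-looking expression inside the logarithm with an expression whose sum over $s$ is $O(m^2)$.

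Third, I would handle the $s$-independent logarithm $\ln(4VK^2\ln(VK))$ by crude but sufficient monotonic estimates (for example, $\ln(VK) \leq VK$ together with $\ln(4) \leq 2$), reducing it to a small multiple of $\ln(VK)$ plus a modest constant. Summing the resulting per-epoch upper bound over $s = 1, \dots, m$ and using $\sum_{s=1}^m 2s = m^2 + m$ yields an expression of the shape $2^8\bigl(m^2 + m(c_1 + c_2 \ln(VK))\bigr)/V$; tuning the constants $c_1, c_2$ so that the linear-in-$m$ coefficient is at most $10 + \ln(VK)$ then matches the stated inequality.

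The only real subtlety is constant bookkeeping: the lemma statement omits the factor $1/V$ that is explicitly present in $\lambda_s$ as defined in Algorithm \ref{algs:MA-BARBAT}, so I would need to verify whether the omission is deliberate (with the $V$ absorbed into the per-agent sample counts $V\widetilde{n}_k^m$ used downstream in the concentration analysis) or a typographical slip; in either case the combinatorial structure of the summation is identical to that in Lemma \ref{lem:tsl}, and the proof presents no new technical obstacle beyond tracking the $V$ carefully inside the logarithm.
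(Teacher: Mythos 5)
Your proposal follows essentially the same route as the paper's proof: expand $\lambda_s$, reuse the inequality $2s+7 \geq 2(s+4)\ln 2 + \ln(s+4)$, absorb the $s$-independent logarithm via $\ln(VK)\leq VK$ into $\ln(4V^2K^3)$, and sum; your flag about the dropped $1/V$ factor is also on point, since the paper's own proof silently discards it (harmless because $V\geq 1$). The one caveat is that neither your constant tuning nor the paper's computation actually reaches the stated coefficient $10+\ln(VK)$ --- the derivation yields $10+3\ln(VK)$, matching the single-agent Lemma~\ref{lem:tsl}, so the lemma statement itself appears to contain a typo rather than your argument containing a gap.
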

\begin{proof}
    Given the function $f(x) = 2x(1-\ln(2)) - \ln(x+4) + 7 - 8\ln(2)$. Notice that the derivative function as $f'(x) = 2 - 2\ln(2) - \frac{1}{x+4} > 0$ for all $x \geq 1$, which means that $f(x)$ is strictly increasing for $x \geq 1$. Since $f(x) \geq f(1) \geq 0$, we can get the inequality as $2x + 7 \geq 2(x+4)\ln(2) + \ln(x+4)$ for all $x \geq 1$, then have
    \begin{align*}
        \sum_{s=1}^{m} \lambda_s &= \sum_{s=1}^{m} 2^8(\ln(4VK^2\ln(VK)(s+4)2^{2(s+4)})) \\
        &= \sum_{s=1}^{m} 2^8(2(s+4)\ln(2) + \ln(s+4) + \ln(4VK^2\ln(VK)) \\
        &\leq \sum_{s=1}^{m} 2^8(2s + 7 + \ln(4V^2K^3)) \\
        &= 2^8(m^2 + 8m + m\ln(4V^2K^3)) \\
        &\leq 2^8(m^2 + m(10+3\ln(VK))).
    \end{align*}
\end{proof}

\begin{lemma}
\label{lem:berm} 
    For any fixed $k,m$ and any $\beta_m \geq 4e^{-V\lambda_m/16}$, Algorithm \ref{algs:MA-BARBAT} satisfies
    \[\Pr\left[|r_k^m - \mu_k| \geq \sqrt{\frac{4\ln(4/\beta_m)}{V \widetilde{n}_k^m}} + \frac{2C_m}{V N_m}\right] \leq \beta_m.\]
\end{lemma}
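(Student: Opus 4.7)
The plan is to lift the single-agent argument of Lemma~\ref{lem:ber} to $V$ agents by expanding the indicator family to be indexed by both an agent $v$ and a step $t$. Condition on all randomness prior to epoch $m$. For each agent $v$ and each step $t\in E_m:=[T_{m-1}+1,T_m]$, let $Y_{v,k}^t$ denote the indicator that agent $v$ pulls arm $k$ at step $t$, so that $\BE[Y_{v,k}^t\mid\mathcal{F}_{t-1}]=q_k^m:=\widetilde{n}_k^m/N_m$. Writing the corruption as $C_{v,k}^t:=\widetilde{r}_{v,k}^t-r_{v,k}^t$, decompose the aggregated statistic as $\sum_{v\in[V]}S_{v,k}^m\le A_k^m+B_k^m$, where $A_k^m=\sum_{v,t}Y_{v,k}^t r_{v,k}^t$ is the clean component and $B_k^m=\sum_{v,t}Y_{v,k}^t C_{v,k}^t$ is the corruption component. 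Since $r_k^m\le (A_k^m+B_k^m)/(V\widetilde{n}_k^m)$ and truncation at $1$ only helps on the upper side, it suffices to control each piece with probability at least $1-\beta_m/2$.

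For the clean component, the $VN_m$ summands are mutually independent $[0,1]$-valued random variables with total expectation $V\widetilde{n}_k^m\mu_k$. Applying Chernoff--Hoeffding (Theorem~1.1 of~\cite{dubhashi2009concentration}) to these $VN_m$ terms yields
\[
\Pr\Bigl[\,\bigl|A_k^m/(V\widetilde{n}_k^m)-\mu_k\bigr|\ge \sqrt{3\ln(4/\beta_m)/(V\widetilde{n}_k^m)}\,\Bigr]\le \beta_m/2.
\]
For the corruption component, consider the martingale differences $X_{v,t}=(Y_{v,k}^t-q_k^m)C_{v,k}^t$ with respect to $\{\mathcal{F}_t\}$. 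Each $|X_{v,t}|\le 1$, and since $C_{v,k}^t$ is $\mathcal{F}_{t-1}$-measurable we have $\BE[X_{v,t}\mid\mathcal{F}_{t-1}]=0$. The predictable quadratic variation is bounded by $q_k^m\sum_{v,t}|C_{v,k}^t|\le q_k^m C_m$, using $|C_{v,k}^t|\le 1$ and the definition $C_m=\max_k C_k^m$. Freedman's inequality (Theorem~1 of~\cite{beygelzimer2011contextual}), applied on both sides, then gives $|B_k^m|\le 2q_k^m C_m+\ln(4/\beta_m)$ with probability at least $1-\beta_m/2$. Dividing by $V\widetilde{n}_k^m=VN_m q_k^m$ produces the corruption-bias term $2C_m/(VN_m)$ plus an additive $\ln(4/\beta_m)/(V\widetilde{n}_k^m)$.

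The only subtlety, and the sole place where the hypothesis on $\beta_m$ enters, is the step of absorbing the additive residue into a square-root term: we want
\[
\ln(4/\beta_m)/(V\widetilde{n}_k^m)\le \sqrt{\ln(4/\beta_m)/(16V\widetilde{n}_k^m)},
\]
which is equivalent to $V\widetilde{n}_k^m\ge 16\ln(4/\beta_m)$. Since $\widetilde{n}_k^m\ge n_k^m\ge \lambda_m$ by Lemma~\ref{lem:rkc} and its MA analogue, this reduces to $V\lambda_m\ge 16\ln(4/\beta_m)$, i.e.\ the stated assumption $\beta_m\ge 4e^{-V\lambda_m/16}$; this hypothesis is automatically satisfied by the parameter choices in Algorithm~\ref{algs:MA-BARBAT}, where $V\lambda_m=2^8\ln(4K/\delta_m)$ and $\beta_m=\delta_m/(VK)$. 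A final union bound over the two high-probability events completes the proof. The main obstacle, compared to the single-agent case, is tracking that the corruption bound $C_m$ is shared across all agents (rather than paying a factor of $V$), which is ensured by the definition of $C_k^m$ as the corruption aggregated over all agents in epoch $m$.
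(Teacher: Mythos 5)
Your proposal is correct and follows essentially the same route as the paper's proof: the same $A_k^m/B_k^m$ decomposition over the doubly-indexed family $(v,t)$, Chernoff--Hoeffding for the clean part, Freedman for the corruption martingale with quadratic variation bounded by $q_k^m C_m$, and the condition $V\lambda_m\ge 16\ln(4/\beta_m)$ to absorb the additive residue into the square-root term. Your explicit identification of where the hypothesis $\beta_m\ge 4e^{-V\lambda_m/16}$ is used is a slightly cleaner account of a step the paper states only implicitly, but the argument is the same.
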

\begin{proof}
    Since in each epoch $m$, all agents have the same probability $p_m(k)$ of pulling each arm $k$. Using the method as Lemma \ref{lem:ber}, we set an indicator variable $Y_{v,k}^t$, which determines whether the agent $v$ updates the corrupted reward $\widetilde{r}_{v,I_t}$ into the total reward $S_{v,I_t}^m$ at step $t$. We define the corruption at step $t$ on arm $k$ for agent $v$ as $C_{v,k}^t := \widetilde{r}_{v,k}^t - r_{v,k}^t$. Let $E_m := [T_{m-1} + 1,...,T_m]$ represent the $N_m$ time-steps for epoch $m$. Since $r_k^m = \min\{\sum_{v\in [V]}S_{v,k}^m / (V \widetilde{n}_k^m), 1\}$, we can obtain
    \[r_k^m \leq \sum_{v\in [V]}\frac{S_{v,k}^m}{V \widetilde{n}_k^m} = \frac{1}{V \widetilde{n}_k^m}\sum_{t \in E_m}\sum_{v\in [V]}Y_{v,k}^t (r_{v,k}^t + C_{v,k}^t).\]
    We can divide the sum to two components:
    \[A_k^m = \sum_{t \in E_m}\sum_{v\in [V]}Y_{v,k}^t r_{v,k}^t, \quad B_k^m = \sum_{t \in E_m}\sum_{v\in [V]}Y_{v,k}^t C_{v,k}^t.\]
    For the previous component $A_k^m$, notice that $r_k^t$ is dependently drawn from an unknown distribution with mean $\mu_k$, and $Y_{v,k}^t$ is dependently drawn from a Bernoullvdistribution with mean $q_k^m := \widetilde{n}_k^m / N_m$. Therefore, we have
    \[\mathbb{E}[A_k^m] = V N_m q_k^m \mathbb{E}[r_k^t] = V \widetilde{n}_k^m \mu_k \leq V \widetilde{n}_k^m.\]
    By applying the Chernoff-Hoeffding inequality (Theorem 1.1 in \cite{dubhashi2009concentration}), we can get
    \begin{equation}
    \label{eqs:Akmm}
        \Pr\left[\left|\frac{A_k^m}{V \widetilde{n}_k^m} - \mu_k\right|\geq \sqrt{\frac{3\ln(4/\beta_m)}{V \widetilde{n}_k^m}}\right] \leq \frac{\beta_m}{2}.
    \end{equation}
    For the latter component $B_k^m$, we need to define a martingale difference sequence $X_i^1,...,X_i^T$, where $X_i^t = (Y_{v,k}^t - q_k^m) C_{v,k}^t$ for all $t$, with respect to the historical information $\{\mathcal{F}\}_{t=1}^T$. Since the corruption $C_{v,k}^t$ becomes a deterministic value when conditioned on $\mathcal{F}_{t-1}$ and $\mathbb{E}[Y_{v,k}^t | \mathcal{F}_{t-1}] = q_k^m$, we can get
    \[\mathbb{E}[X_i^t|\mathcal{F}_{t-1}] = \mathbb{E}[Y_{v,k}^t - q_k^m|\mathcal{F}_{t-1}] C_{v,k}^t = 0.\]
    Additionally, we have $|X_i^t| \leq 1$ for all $t$ and all $v \in [V]$, and the predictable quadratic variation of this martingale can be bounded as follows:
    \[\Var = \sum_{t \in E_m} \sum_{v\in [V]} \mathbb{E}[(X_i^t)^2|\mathcal{F}_{t-1}] \leq \sum_{t \in E_m}\sum_{v\in [V]}|C_{v,k}^t|\Var(Y_{v,k}^t) \leq q_k^m \sum_{t \in E_m}\sum_{v\in [V]}|C_{v,k}^t|.\]
    By applying the concentration inequality for martingales (Theorem 1 in \cite{beygelzimer2011contextual}), with probability at least $1-\frac{\beta_m}{4}$, we have
    \[B_k^m \leq q_k^m \sum_{t \in E_m}\sum_{v\in [V]} C_{v,k}^t + \Var + \ln(4/\beta_m) \leq 2q_k^m \sum_{t \in E_m}\sum_{v\in [V]}|C_{v,k}^t| + \ln(4/\beta_m).\]
    Since $q_k^m = \widetilde{n}_k^m / N_m$, $\sum_{t \in E_m}\sum_{v\in [V]}|C_{v,k}^t| \leq C_m$ and $n_k^m \geq \lambda_m \geq 16\ln(4/\beta_m) / V$, with probability at least $1-\frac{\beta_m}{4}$, we can get the following inequality:
    \[\frac{B_k^m}{V \widetilde{n}_k^m} \leq \sqrt{\frac{\ln(4 /\beta_m)}{16V \widetilde{n}_k^m}} + \frac{2C_m}{V \widetilde{n}_k^m}.\]
    Similarly, $-\frac{B_k^m}{V \widetilde{n}_k^m}$ also meets this bound with probability $1 - \beta / 4$. Therefore, we have
    \begin{equation}
    \label{eqs:Bkmm}
        \Pr\left[\left|\frac{B_k^m}{V \widetilde{n}_k^m}\right| \geq \sqrt{\frac{\ln(4 /\beta_m)}{16V \widetilde{n}_k^m}} + \frac{2C_m}{V N_m}\right] \leq \frac{\beta_m}{2}.
    \end{equation}
    Combine Eq. \ref{eqs:Akmm} and Eq. \ref{eqs:Bkmm}, we complete the proof. 
\end{proof}

We also define an event $\cE_m$ for epoch $m$ as follows:
\begin{equation*}
    \cE_m \triangleq \left\{ \forall\ k: |r_k^m - \mu_k| \leq \sqrt{\frac{4\ln(4/\beta_m)}{V \widetilde{n}_k^m}} + \frac{2C_m}{VN_m} \right\}.
\end{equation*}

\begin{lemma}
\label{lem:pemm} 
     For any epoch $m$, event $\cE_m$ holds with probability at least $1 - \delta_m$. And after rigorous calculation, we have $1 / \delta_m \geq V N_m$.
\end{lemma}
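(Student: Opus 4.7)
The plan is to mirror the proof of Lemma~\ref{lem:pem}, adapted to the multi-agent constants in Algorithm~\ref{algs:MA-BARBAT}. For the first statement, I would apply Lemma~\ref{lem:berm} to every arm $k \in [K]$ and union-bound the $K$ failure events. Since the algorithm sets $\beta_m = \delta_m / (VK)$, the total failure probability is at most $K \beta_m = \delta_m / V \le \delta_m$, giving $\Pr[\cE_m] \ge 1-\delta_m$. Before invoking Lemma~\ref{lem:berm}, I would verify the side condition $\beta_m \ge 4 e^{-V\lambda_m / 16}$: using $\lambda_m = 2^8 \ln(4K/\delta_m)/V$, we get $V\lambda_m/16 = 16 \ln(4K/\delta_m)$, so $4 e^{-V\lambda_m/16} = 4 (4K/\delta_m)^{-16}$, which is far smaller than $\beta_m = \delta_m/(VK)$, making the condition trivial.

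For the second statement $1/\delta_m \ge V N_m$, I would substitute all the definitions and reduce it to a clean inequality. We have
\[
1/\delta_m \;=\; V K \zeta_m \;=\; V K (m+4)\, 2^{2(m+4)} \ln(VK),
\]
and from Lemma~\ref{lem:rkc} (which applies verbatim since $N_m$ is chosen identically) we get $N_m \le K\lambda_m 2^{2(m-1)} + 1$, hence
\[
V N_m \;\le\; V K \lambda_m 2^{2(m-1)} + V \;=\; K \, 2^8 \ln(4K/\delta_m)\, 2^{2(m-1)} + V.
\]
Thus it suffices to show $V K (m+4)\, 2^{2(m+4)} \ln(VK) \ge K\, 2^{2m+6} \ln(4K/\delta_m) + V$, i.e.\ after dividing by $K 2^{2m+6}$,
\[
4 V (m+4) \ln(VK) \;\ge\; \ln(4K/\delta_m) + \tfrac{V}{K 2^{2m+6}}.
\]
Expanding $\ln(4K/\delta_m) = \ln(4VK^2 \zeta_m) = O(m + \ln(VK))$ and using $V \ge 1$, the left-hand side dominates the right-hand side, which completes the bound.

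The hard part is purely the bookkeeping in the second step: pushing all the constants through the nested logarithms to confirm that the extra factor of $V$ built into $\zeta_m$ and the factor of $4$ gained from $2^{2(m+4)}/2^{2(m-1)} = 2^{10}$ create enough slack to absorb both $\ln(4K/\delta_m)$ and the additive $V$ coming from the ceiling in $N_m$. I do not anticipate any genuinely new argument beyond this routine but slightly tedious algebra; the concentration step is an immediate consequence of Lemma~\ref{lem:berm} together with a union bound.
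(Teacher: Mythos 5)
Your proposal is correct and follows essentially the same route as the paper: a union bound over the arms via Lemma~\ref{lem:berm} for the concentration statement, followed by direct substitution of $\zeta_m$, $\lambda_m$, $N_m$ and bounding the nested logarithm by a multiple of $(m+4)\ln(VK)$ to get $VN_m \le 1/\delta_m$. You are in fact slightly more careful than the paper in two places — explicitly verifying the side condition $\beta_m \ge 4e^{-V\lambda_m/16}$ of Lemma~\ref{lem:berm} and accounting for the $+1$ from the ceiling in $N_m$ — but the argument is otherwise identical.
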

\begin{proof}
    By Lemma \ref{lem:berm}, we can get
    \[\Pr\left[|r_k^m - \mu_k| \leq \sqrt{\frac{4\ln(4/\beta_m)}{V \widetilde{n}_k^m}} + \frac{2C_m}{V N_m}\right] \leq 2\beta_m = \frac{\delta_m}{VK}.\]
    A union bound over the $V$ agents and the $K$ arms conclude the proof.

    Since $m \geq 1$, $V \geq 1$ and $K \geq 2$, then we can get
    \begin{align*}
        V N_m &= KV 2^{2(m-1)} \lambda_m \\
        &= K 2^{2(m+3)}\ln((m+4) 2^{2(m+5)} VK^2 \ln(K))  \\
        &\leq K 2^{2(m+3)} (\ln(m+4) + 2(m+5)\ln(2) + \ln(VK^3)) \\
        &\leq K 2^{2(m+4)} ((m + 4)\ln (VK)) \\
        &\leq 1 / \delta_m.
    \end{align*}
\end{proof}

As mentioned before, for each epoch $m$, to unify the varying bounds depending on the occurrence of event $\cE_m$, we also define the offset level
\[D_m = \begin{cases}
    2C_m & \textit{when $\cE_m$ occurs} \\
    V N_m & \textit{when $\cE_m$ does not occur}
\end{cases}.\]
This way we can always guarantee the following inequality:
\[|r_k^m - \mu_k| \leq \sqrt{\frac{4\ln(4/\beta_m)}{V \widetilde{n}_k^m}} + \frac{D_m}{V N_m}.\]
By the definition of $D_m$, we have 
\[\Pr[D_m = 2C_m] \geq 1 - \delta_m \quad \text{ and } \quad \Pr[D_m = N_m] \leq \delta_m.\]
Next, we will bound the estimated gap $\Delta_k^m$. To start, we define the discounted offset rate as
\[\rho_m := \sum_{s=1}^m \frac{D_s}{8^{m-s}VN_s}.\]
Then we have the following lemma.
\begin{lemma}
\label{lem:bsgm} 
    For all epochs $m$ and arms $k$, we can have
    \[\frac{4}{7}\Delta_k - \frac{3}{4}2^{-m} - 6 \rho_m \leq \Delta_k^{m} \leq \frac{8 \Delta_k}{7} + 2^{-(m-1)} + 2\rho_m.\]
\end{lemma}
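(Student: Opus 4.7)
The plan is to adapt the argument of Lemma~\ref{lem:bsg} almost verbatim, with every per-sample quantity replaced by its $V$-pooled counterpart. First I would observe that Lemma~\ref{lem:rkc} transfers unchanged (its proof relies only on $\Delta_k^{m-1} \geq 2^{-(m-1)}$ and on the definition of $N_m$, both of which still hold here), so $\widetilde n_k^m \geq n_k^m = \lambda_m(\Delta_k^{m-1})^{-2}$ for every arm. Together with the MA-BARBAT calibration of $\lambda_m$, this yields the key identity $\sqrt{4\ln(4/\beta_m)/(V n_k^m)} \leq \Delta_k^{m-1}/8$, exactly as in the single-agent setting. Combining it with the definition of the offset $D_m$ gives the uniform estimate
\[\left|r_k^m - \mu_k\right| \;\leq\; \frac{\Delta_k^{m-1}}{8} + \frac{D_m}{V N_m} \quad\text{for every arm } k,\]
which will serve as the workhorse of both directions.

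Next, I would induct on $m$, with the base case $m = 1$ immediate from $\Delta_k^0 = 1$. For the inductive step I decompose
\[\Delta_k^m \;=\; (r_*^m - \mu_{k^*}) + (\mu_{k^*} - \mu_k) + (\mu_k - r_k^m),\]
bound $r_*^m - \mu_{k^*}$ from above by $D_m/(VN_m)$ (via the definition of $r_*^m$ and cancellation of the concentration radius at $k^*$), from below by $-D_m/(VN_m) - \Delta_{k^*}^{m-1}/4$ (by plugging $k = k^*$ into the max), and then apply the uniform estimate above to $|\mu_k - r_k^m|$. For the upper direction, inserting the inductive upper bound on $\Delta_k^{m-1}$ contributes $\Delta_k/7 + 2^{-(m+1)} + \rho_{m-1}/4$, which together with $2D_m/(VN_m)$ assembles into $8\Delta_k/7 + 2^{-(m-1)} + 2\rho_m$ via the telescoping identity $\rho_m = D_m/(VN_m) + \rho_{m-1}/8$. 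For the lower direction, I additionally invoke the inductive upper bound on $\Delta_{k^*}^{m-1}$ specialized to $\Delta_{k^*} = 0$, which yields $\Delta_{k^*}^{m-1} \leq 2^{-(m-2)} + 2\rho_{m-1}$ and, after aggregation, produces the sharper $-\tfrac{3}{4}\cdot 2^{-m}$ coefficient claimed in the statement.

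The main obstacle is purely bookkeeping. I must carefully thread the extra factor $V$ through $\rho_m = \sum_s D_s/(8^{m-s}VN_s)$ so that the telescoping identity $\rho_m = D_m/(VN_m) + \rho_{m-1}/8$ remains intact in every step, and I must exploit $\Delta_{k^*} = 0$ in the lower-bound aggregation (rather than plugging in the generic arm-$k$ bound as in Lemma~\ref{lem:bsg}) in order to recover the improved $3/4$ coefficient on $2^{-m}$, as opposed to the $3/2$ that appears in the single-agent statement. No new probabilistic ingredient beyond Lemma~\ref{lem:berm} is needed, and the remainder is a finite sequence of algebraic manipulations isomorphic to the single-agent case.
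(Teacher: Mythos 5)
Your plan follows the paper's proof of this lemma essentially verbatim: the same decomposition $\Delta_k^m = (r_*^m-\mu_{k^*})+(\mu_{k^*}-\mu_k)+(\mu_k-r_k^m)$, the same uniform estimate $|r_k^m-\mu_k|\le \Delta_k^{m-1}/8 + D_m/(VN_m)$, the same two-sided control of $r_*^m-\mu_{k^*}$, the same induction, and the same telescoping $\rho_m = D_m/(VN_m)+\rho_{m-1}/8$; the upper-bound bookkeeping you sketch is exact.

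The one claim that does not check out is that specializing the inductive bound to $\Delta_{k^*}=0$ yields the $-\tfrac{3}{4}\cdot 2^{-m}$ coefficient. The aggregation of the $2^{-(m-2)}$ terms is $\bigl(\tfrac{1}{4}+\tfrac{1}{8}\bigr)2^{-(m-2)} = \tfrac{3}{2}\cdot 2^{-m}$ regardless of whether you use $\Delta_{k^*}^{m-1}\le 2^{-(m-2)}+2\rho_{m-1}$ or the generic bound (the specialization only improves the coefficient of $\Delta_k$, not of $2^{-m}$). The paper's own proof of this lemma likewise ends with $-\tfrac{3}{2}\cdot 2^{-m}$, matching the single-agent Lemma~\ref{lem:bsg}; the $\tfrac{3}{4}$ in the statement appears to be a typo, and the downstream case analysis only ever invokes the $\tfrac{3}{2}$ form. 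So your argument is sound, but you should state and prove the bound with $\tfrac{3}{2}\cdot 2^{-m}$ rather than asserting the sharper constant.
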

\begin{proof}
    Since $|r_k^m - \mu_k| \leq \sqrt{\frac{4\ln(4 /\beta_m)}{V\widetilde{n}_k^m}} + \frac{D_m}{VN_m}$, we have
    \[-\frac{D_m}{VN_m} - \sqrt{\frac{4\ln(4 /\beta_m)}{V\widetilde{n}_k^m}} \leq r_{k}^m - \mu_{k} \leq \frac{D_m}{VN_m} + \sqrt{\frac{4\ln(4 /\beta_m)}{V\widetilde{n}_k^m}}.\]
    Additionally, since
    \[r_{*}^m \leq \max_k \left\{\mu_{k} + \frac{D_m}{VN_m} + \sqrt{\frac{4\ln(4 /\beta_m)}{\widetilde{Vn}_k^m}} - \sqrt{\frac{4\ln(4 /\beta_m)}{V\widetilde{n}_k^m}}\right\} \leq \mu_{k^*} + \frac{D_m}{VN_m},\]
    \[r_{*}^m = \max_k \left\{r_k^m - \sqrt{\frac{4\ln(4 /\beta_m)}{V\widetilde{n}_k^m}}\right\} \geq r_{k^*}^m - \sqrt{\frac{4\ln(4 /\beta_m)}{V\widetilde{n}_{k^*}^m}} \geq \mu_{k^*} - 2\sqrt{\frac{4\ln(4 /\beta_m)}{V\widetilde{n}_{k^*}^m}} - \frac{D_m}{VN_m},\]
    we can get
    \[-\frac{D_m}{VN_m} - 2\sqrt{\frac{4\ln(4 /\beta_m)}{V\widetilde{n}_{k^*}^m}} \leq r_{*}^m - \mu_{k^*} \leq \frac{D_m}{VN_m}.\]
    According to Algorithm \ref{algs:BARBAT} and Lemma \ref{lem:rkc}, we have $\widetilde{n}_k^m \geq n_k^m$ for all arms $k$. Then we have the following inequality for all $k\in [K]$:
    \[\sqrt{\frac{4\ln(4 /\beta_m)}{V\widetilde{n}_k^m}} \leq \sqrt{\frac{4\ln(4 /\beta_m)}{Vn_k^m}} = \frac{\Delta_k^{m-1}}{8}.\]

    We now establish the upper bound for $\Delta_k^m$ by induction on $m$.
    
    For the base case $m = 1$, the statement is trivial as $\Delta_k^1 = 1$ for all $k \in [K]$.
    
    Assuming the statement is true for the case of $m-1$, we have
    \begin{equation*}
    \begin{split}
        \Delta_k^m = r_*^m - r_k^m
        &= (r_*^m - \mu_{k^*}) + (\mu_{k^*} - \mu_k) + (\mu_k - r_k^m) \\
        &\leq \frac{D_m}{VN_m}+ \Delta_k + \frac{D_m}{VN_m} + \frac{1}{8}\Delta_k^{m-1} \\
        &\leq \frac{2D_m}{VN_m} + \Delta_k + \frac{1}{8}\left(\frac{8 \Delta_k}{7} + 2^{-(m-2)} + 2\rho_{m-1}\right) \\
        &\leq \frac{8 \Delta_k}{7} + 2^{-(m-1)} + 2\rho_m,
    \end{split}
    \end{equation*}
    Where the second inequality follows from the induction hypothesis.
     
    Next, we provide the lower bound of $\Delta_k^m$. We can get
    \begin{equation*}
    \begin{split}
        \Delta_k^m =  r_*^m - r_k^m
        &= (r_*^m - \mu_{k^*}) + (\mu_{k^*} - \mu_k) + (\mu_k - r_k^m) \\
        &\geq -\frac{D_m}{VN_m} - \frac{1}{4}\Delta_{k^*}^{m-1} + \Delta_k -\frac{D_m}{VN_m} - \frac{1}{8}\Delta_k^{m-1} \\
        &\geq -\frac{2D_m}{VN_m} + \Delta_k - \frac{3}{8}\left(\frac{8 \Delta_k}{7} + 2^{-(m-2)} + 2\rho_{m-1}\right) \\
        &\geq \frac{4}{7}\Delta_k - \frac{3}{2}2^{-m} - 6\rho_m.
    \end{split}
    \end{equation*}
    where the third inequality comes from the upper bound of $\Delta_k^{m-1}$.
\end{proof}

We only introduces a small change into the parameter $\lambda_m$, by simple calculation, Lemma \ref{lem:trl} still holds.

\subsubsection{Proof for Theorem \ref{the:ma-erb}}

We first define the regret $R_k^m$ generated by arm $k$ in epoch $m$ for agent $v$ as 
\[R_{v,k}^m\triangleq\Delta_k\widetilde{n}_{v,k}^m=\begin{cases}
    \Delta_kn_{k}^m & k\neq k_m\\
    \Delta_k\widetilde{n}_{k}^m & k=k_m
\end{cases}.\]
Then we analyze the regret in following three cases:

\paragraph{Case 1:} $0<\Delta_k\le 64\rho_{m-1}$.\\ 
If $k \neq k_m$, then we have \[R_{v,k}^m=\Delta_k n_{k}^m\le 64\rho_{m-1}n_{k}^m .\] 
If $k=k_m$, then we have \[R_{v,k}^m=\Delta_k\widetilde{n}_{k_m}^m\le 64\rho_{m-1}N_m.\]

\paragraph{Case 2:} $\Delta_k \leq 8 \cdot 2^{-m}$ and $\rho_{m-1} \leq \frac{\Delta_k}{64}$.\\
If $k \neq k_m$, then we have \[R_{v,k}^m=n_{k}^m\Delta_k =\lambda_m(\Delta_k^{m-1})^{-2} \Delta_k \leq \lambda_m 2^{2(m-1)} \Delta_k  \leq \frac{16 \lambda_m}{\Delta_k}.\] 
If $k=k_m$, since $\Delta_{k_m}^{m-1} = 2^{-(m-1)}$, we can get:
\begin{align*}
    R_{v,k}^m=\widetilde{n}_{k_m}^m \Delta_{k_m} \leq 
    N_m\Delta_{k_m} = \lceil K \lambda_m 2^{2(m-1)} \rceil \Delta_{k_m} \leq \frac{16K\lambda_m}{\Delta_{k_m}} + \Delta_{k_m} \leq \frac{16K\lambda_m}{\Delta} + 1.
\end{align*}

\paragraph{Case 3:} $\Delta_k > 8 \cdot 2^{-m}$ and $\rho_{m-1} \leq \frac{\Delta_k}{64}$.\\
By Lemma \ref{lem:bsg} we have
\[\Delta_k^{m-1} \geq \frac{4}{7}\Delta_k - \frac{3}{2}2^{-m} - \frac{6}{64}\Delta_k \geq \Delta_k\left(\frac{4}{7} - \frac{3}{16} - \frac{6}{64}\right) \geq 0.29 \Delta_k.\]
In this case, it is impossible that $k=k_m$ because $\Delta_{k_m}^{m-1} = 2^{-(m-1)}<0.29\cdot 8\cdot 2^{-m}<0.29\Delta_k$.
So we can obtain
\begin{align*}
   R_{v,k}^m= n_k^m \Delta_k = \lambda_m(\Delta_k^{m-1})^{-2} \Delta_k
    \leq \frac{\lambda_m}{0.29^2 \Delta_k} 
    \leq \frac{16\lambda_m}{\Delta_k}.
\end{align*}

We define $\cA^m\triangleq\left\{ k\in[K]\,\big|\,0<\Delta_k\le 64\rho_{m-1} \right\}$ for epoch $m$. By combining all three cases, we can upper bound the regret for agent $v$ as
\begin{equation}\label{eq:cma2b-regret}
\begin{split}
R(T)=&\sum_{m=1}^M\Bigg(\sum_{k \in \mathcal{A}^m} R_k^m + \sum_{k \notin \mathcal{A}^m} R_k^m\Bigg)\\    
\le& \sum_{m=1}^M\Bigg( 64\rho_{m-1}N_m+\sum_{k \in \mathcal{A}^m,k\neq k_m} 64\rho_{m-1}n_k^m + \sum_{k \notin \mathcal{A}^m, \Delta_k>0} \frac{16\lambda_m}{\Delta_k}\\
&+ \left(\frac{16K\lambda_m}{\Delta} + 1\right)\BI(0<\Delta_{k_m} \le 8 \cdot 2^{-m}) \Bigg)\\ 
\le& \sum_{m=1}^M\Bigg( 64\rho_{m-1}N_m+\sum_{\Delta_k>0} 64\rho_{m-1}n_k^m + \sum_{\Delta_k>0} \frac{16\lambda_m}{\Delta_k} \\
&+ \left(\frac{16K\lambda_m}{\Delta} + 1\right)\BI\left(m\le \log_2\left(8/\Delta\right)\right)\Bigg)\\ 
\le& \sum_{m=1}^M\Bigg( 128\rho_{m-1}N_m + \sum_{\Delta_k>0} \frac{16\lambda_m}{\Delta_k}\Bigg) 
+ \sum_{m=1}^{\log_2(8/\Delta)}\left(\frac{16K\lambda_m}{\Delta} + 1\right)
\end{split}
\end{equation}
where the last inequality uses the fact that $\sum_{\Delta_k>0} n_k^m\le N_m$. Notice that we can bound the expectation of the offset level as
\[\mathbb{E}[D_m] = 2(1-\delta_m)C_m + \delta_m N_m \leq 2C_m + 1\]
and we can bound $\sum_{m=1}^M \rho_{m-1} N_m$ as
\begin{equation}\label{eq:cma2b-rho}
    \begin{split}
        \sum_{m=1}^M \rho_{m-1} N_m
        &\leq \sum_{m=1}^M \left(\sum_{s=1}^{m-1}\frac{D_s}{8^{m-1-s}VN_s}\right)N_m \\
        &= \frac{4}{V}\sum_{m=1}^M \left(\sum_{s=1}^{m-1}\frac{(4^{m-1-s} + 1)\lambda_m}{8^{m-1-s}\lambda_s}D_s\right) \\
        &= \frac{4}{V}\sum_{m=1}^M \left(\sum_{s=1}^{m-1}\left((7/12)^{m-1-s} + (7/48)^{m-1-s}\right)D_s\right) \\
        &= \frac{4}{V}\sum_{s=1}^{M-1} D_s \sum_{m=s+1}^M (7/12)^{m-1-s} + (7/48)^{m-1-s}\\
        &\leq \frac{4}{V}\left(\sum_{m=1}^{M-1} D_m\right)\sum_{j=0}^{\infty} \left(7/12\right)^{j} + (7/48)^{j} 
        \leq \frac{11}{V}\sum_{m=1}^{M-1} D_m.
    \end{split}
\end{equation}
Combining Eq.~\eqref{eq:cma2b-regret} and Eq.~\eqref{eq:cma2b-rho}, by Lemma \ref{lem:tslm}, we can get
\begin{equation*}
    \begin{split}
        R_v(T)
        &\leq \sum_{m=1}^{M-1} 1440 \BE[D_m] + \sum_{m=1}^M \sum_{\Delta_k > 0}\frac{16\lambda_m}{\Delta_k} + \sum_{m=1}^{\log_2(8/\Delta)}\left(\frac{16K\lambda_m}{\Delta} + 1\right)\\
        &\leq \sum_{m=1}^{M-1} 1440 \BE[D_m] + \sum_{\Delta_k > 0}\frac{2^{12}(\log^2(VT) + 3\log(VT)\log(30VK))}{V\Delta_k} \\
        &\quad + \frac{2^{12}K(\log^2(8/\Delta) + 3\log(8/\Delta)\log(30VK)))}{V\Delta} + \log(8/\Delta) \\
        &\leq \sum_{m=1}^{M-1} 2880 \frac{C_m}{V} + 1440 + \sum_{\Delta_k > 0}\frac{2^{14}\log(VT)\log(30VKT)}{V\Delta_k} \\
        &\quad+ \frac{2^{14}K\log(8/\Delta)\log(240VK / \Delta)}{V\Delta} + \log(8/\Delta)\\ 
        &= O\left(\frac{C}{V} + \sum_{\Delta_k > 0}\frac{\log(T)\log(VKT)}{V\Delta_k} + \frac{K\log(1 / \Delta)\log(VK / \Delta)}{V\Delta}\right).
    \end{split}
\end{equation*}
So the cumulative regret for all agents as
\[R(T) = O\left(C + \sum_{\Delta_k > 0}\frac{\log(VT)\log(VKT)}{\Delta_k} + \frac{K\log(1 / \Delta)\log(VK / \Delta)}{\Delta}\right).\]
And each agent only broadcast messages in the end of each epoch, so the communication cost of MA-BARBAT as follows:
\[\textrm{Cost}(T) = \sum_{v \in [V]} M = O(V\log(VT)).\]

\subsection{Proof of Theorem \ref{the:bb-erb}}
\label{ape:bb}

\subsubsection{Notations}
We define $C_k^m$ as the sum of corruptions to arm $k$ in epoch $m$, and let $C_m \triangleq \max_{k \in [K]}C_k^m$.

\subsubsection{Lemmas for Proving Theorem \ref{the:bb-erb}}

\begin{lemma}
\label{lem:tnebb} 
    For the BB-BARBAT algorithm time horizon $T$, the number of epochs $M = L$. In the $m$-th epoch, the selected arm $k_m$ must satisfy $\Delta_{k_m}^{m-1} = a^{-(m-1)}$.
\end{lemma}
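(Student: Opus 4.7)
The plan is to adapt the proof of Lemma~\ref{lem:tne} to the batched setting, replacing the dyadic base $2$ with $a = T^{1/(2(L+1))}$, and exploiting the fact that BB-BARBAT uses a \emph{static} grid: the epoch lengths $N_m = \lceil K\lambda_m a^{2(m-1)}\rceil$ depend only on $K$, $L$, $T$, and $m$, not on any observed reward, so the grid points $T_m = \sum_{s=1}^m N_s$ are prescribed in advance and can legitimately serve as a static batching schedule.

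For the first claim $M = L$, I would compute the deterministic geometric sum $\sum_{m=1}^{L} K\lambda_m a^{2(m-1)}$, tracking the slowly varying logarithmic factor $\lambda_m$ separately (by the same monotonicity trick used in Lemmas~\ref{lem:trl} and~\ref{lem:tsl} for the MAB case). Invoking the identity $a^{2(L+1)} = T$, one sees that the cumulative length $T_L$ reaches the horizon precisely at $m = L$, so $L$ epochs cover the time window (with a harmless truncation of the final batch at $T$, standard in the batched-bandit literature). Conversely, the lower bound $N_m \ge a^{2(m-1)}$ rules out terminating before the $L$-th epoch, and the static grid admits at most $L$ batches by construction; combining these two directions yields $M = L$.

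For the second claim $\Delta_{k_m}^{m-1} = a^{-(m-1)}$, the argument is essentially identical to its BARBAT counterpart in Lemma~\ref{lem:tne}. By definition $k_m = \argmax_{k \in [K]} r_k^{m-1}$, so $r_{k_m}^{m-1} = \max_k r_k^{m-1}$. Since the definition $r_*^{m-1} = \max_k \bigl\{ r_k^{m-1} - \sqrt{4\ln(4/\beta_{m-1})/\widetilde n_k^{m-1}}\bigr\}$ subtracts nonnegative confidence radii inside the maximum, we have $r_*^{m-1} \le \max_k r_k^{m-1} = r_{k_m}^{m-1}$, whence $r_*^{m-1} - r_{k_m}^{m-1} \le 0$. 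The floor in the update $\Delta_k^{m-1} = \max\{a^{-(m-1)}, r_*^{m-1} - r_k^{m-1}\}$ is therefore active at $k = k_m$, giving $\Delta_{k_m}^{m-1} = a^{-(m-1)}$.

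The main obstacle is the arithmetic behind the first claim: carefully propagating the logarithmic $m$-dependence hidden inside $\lambda_m$ (which carries factors like $\ln(a^{2(m+4)})$) through the geometric sum, and verifying that the prescribed value of $a$ indeed yields $T_L = T$ up to the final-batch truncation, rather than overshooting or undershooting. The second claim is purely mechanical and follows directly from the tie-breaking $\argmax$ choice of $k_m$ together with the structure of the $\Delta_k^{m-1}$ update, exactly as in the BARBAT case.
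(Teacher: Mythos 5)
Your proposal matches the paper's proof: the second claim is established exactly as you describe (the $\argmax$ choice of $k_m$ forces $r_*^{m-1} - r_{k_m}^{m-1} \le 0$, so the floor $a^{-(m-1)}$ is active), and for the first claim the paper likewise appeals only to the lower bound $N_m \ge a^{2(m-1)}$ together with the chosen value of $a$. The careful geometric-sum verification that you flag as the main obstacle (checking that $T_L$ neither overshoots nor undershoots $T$, given the $K\lambda_m$ factors in $N_m$) is not actually carried out in the paper either --- its argument for $M = L$ is a one-line appeal to that lower bound --- so you are, if anything, more explicit about this gap than the source.
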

\begin{proof}
    The length of epoch $m$ is given by $N_m = K \lambda_m a^{2(m-1)} \geq a^{2(m-1)}$. From the lower bound of $N_m$ and $a = T^{\frac{1}{2(L + 1)}}$, we can complete the first statement. Since $\Delta_k^{m-1} \leftarrow \max\{a^{-(m-1)}, r_*^m - r_k^m\}$, there exists at least one arm that satisfies $\Delta_k^{m-1} = a^{-(m-1)}$ and all arms satisfy $\Delta_k^{m-1} \leq a^{-(m-1)}$. Since $r_{k_m}^{m-1} > r_{*}^{m-1}$, the equality $\Delta_{k_m}^{m-1} = 2^{-(m-1)}$ must hold.
\end{proof}

\begin{lemma}
\label{lem:bb-rkc} 
    For any epoch $m$, the length $N_m$ satisfies $N_m \geq \sum_{k \in [K]} n_k^m$.
\end{lemma}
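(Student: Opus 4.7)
The plan is to mirror the proof of Lemma \ref{lem:rkc} almost verbatim, substituting the dilation factor $a = T^{\frac{1}{2(L+1)}}$ for the factor $2$ used in the vanilla BARBAT analysis. The key observation is that the update rule on line 13 of Algorithm \ref{algs:BB-BARBAT} is now $\Delta_k^m \gets \max\{a^{-m}, r_*^m - r_k^m\}$ instead of $\max\{2^{-m}, r_*^m - r_k^m\}$, so the lower bound on the estimated gaps scales accordingly.

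First I would invoke the update rule to obtain $\Delta_k^{m-1} \geq a^{-(m-1)}$ for every arm $k$ and every epoch $m \geq 1$ (the base case $\Delta_k^0 = 1 \geq a^0$ holds from the initialization). Plugging this into the definition $n_k^m = \lambda_m (\Delta_k^{m-1})^{-2}$ yields the per-arm upper bound
\[
n_k^m \leq \lambda_m a^{2(m-1)}.
\]
Summing over the $K$ arms gives $\sum_{k\in[K]} n_k^m \leq K \lambda_m a^{2(m-1)}$, and since $N_m = \lceil K \lambda_m a^{2(m-1)} \rceil \geq K\lambda_m a^{2(m-1)}$ by definition in line 6 of Algorithm \ref{algs:BB-BARBAT}, the claim $N_m \geq \sum_{k\in[K]} n_k^m$ follows immediately.

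There is essentially no obstacle: the argument is a direct three-line computation that only uses the two defining relations, namely the floor-free lower bound on $N_m$ and the $a^{-m}$ clipping in the gap update. The only thing worth double-checking is that the base case $\Delta_k^0 = 1$ is compatible with the inductive statement (trivially true since $a > 1$), and that $a > 1$ whenever $T \geq 2$ and $L \geq 1$, which is guaranteed in the batched setting of interest.
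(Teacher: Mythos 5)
Your proof is correct and follows essentially the same argument as the paper: bound $\Delta_k^{m-1} \geq a^{-(m-1)}$ from the clipping in the gap update, deduce $n_k^m \leq \lambda_m a^{2(m-1)}$, and sum over the $K$ arms to compare with $N_m = \lceil K\lambda_m a^{2(m-1)}\rceil$. Your handling of the ceiling and the base case is, if anything, slightly more careful than the paper's.
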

\begin{proof}
    Since $\Delta_k^m = \max \{a^{-m}, r_*^m - r_k^m\} \geq a^{-m}$, we can get
    $n_k^m = \lambda_m (\Delta_k^{m-1})^{-2} \leq \lambda_m a^{2(m-1)}.$
    Therefore, we have
    $\sum_{k \in [K]}n_k^m \leq K \lambda_m a^{2(m-1)} = N_m.$
\end{proof}

\begin{lemma}
\label{lem:bb-trl}    
    For epoch $s$ and $m$ with $1\le s \leq m$, the following inequality holds:
    \[\frac{\lambda_m}{\lambda_s} \leq \left(\frac{7}{5}\right)^{m-s}.\]
\end{lemma}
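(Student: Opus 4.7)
My plan is to mimic the proof of Lemma~\ref{lem:trl} with the base $2$ replaced by the general base $a=T^{1/(2(L+1))}$ and the shrinkage ratio $7/6$ replaced by $7/5$. The larger ratio is needed because for small $L$ the constant $a$ can be large, which makes $\lambda_m$ grow faster per epoch than in the multi-armed bandit case.

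First, I would expand $\lambda_m$ using the definitions of $\delta_m$ and $\zeta_m$ in Algorithm~\ref{algs:BB-BARBAT}:
\[
\lambda_m \;=\; a^{8}\ln\!\left(4K/\delta_m\right)\;=\;a^{8}\bigl[\,X + \ln(m+4) + 2(m+4)\ln a\,\bigr],
\]
where $X \triangleq \ln(4K^{2}\ln(aK))$ is a nonnegative constant (since $K\ge 2$ and $a\ge 1$). Then
\[
\frac{\lambda_m}{\lambda_s}\;=\;\frac{X+\ln(m+4)+2(m+4)\ln a}{X+\ln(s+4)+2(s+4)\ln a}.
\]

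Next, I would introduce the auxiliary function $g(x)=\dfrac{X+\ln x+2x\ln a}{(7/5)^{x}}$ and show that $g$ is strictly decreasing for $x\ge 5$ (which covers $s+4\ge 5$ since $s\ge 1$). Computing $g'(x)$ and clearing the positive factor $(7/5)^{-x}$, this reduces to verifying
\[
\tfrac{1}{x}+2\ln a \;<\; \ln(7/5)\,\bigl(X+\ln x+2x\ln a\bigr).
\]
The coefficient of $\ln a$ works because $\ln(7/5)\approx 0.336>1/x$ for $x\ge 3$, so $2x\ln(7/5)\ln a>2\ln a$; the remaining pieces follow from $\ln(7/5)\ln x\ge \ln(7/5)\ln 5>0.5>1/x$ for $x\ge 5$, together with $X\ge 0$. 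This monotonicity check is the main (though entirely routine) computational step.

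Finally, writing $X+\ln(x)+2x\ln a=g(x)\,(7/5)^{x}$ and substituting into the quotient yields
\[
\frac{\lambda_m}{\lambda_s}\;=\;\frac{g(m+4)}{g(s+4)}\left(\frac{7}{5}\right)^{m-s}\;\le\;\left(\frac{7}{5}\right)^{m-s},
\]
where the last inequality uses $m+4\ge s+4\ge 5$ and the monotonicity of $g$. The only mild obstacle is choosing a ratio (here $7/5$) large enough to dominate the derivative inequality for every admissible $a$ and $K$; any ratio strictly larger than $\sup_{x\ge 5}\exp(1/x)$, combined with the nonnegativity of $X$ and $\ln a$, would suffice, and $7/5$ gives a comfortable margin.
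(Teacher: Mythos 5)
Your proof is correct and takes essentially the same route as the paper's: both reduce the claim to showing that an auxiliary function of the form $\bigl(\text{affine in }x + \ln x\bigr)/(7/5)^{x}$ is strictly decreasing for $x \ge 5$ and then write $\lambda_m/\lambda_s$ as $\frac{g(m+4)}{g(s+4)}(7/5)^{m-s}$. The only cosmetic difference is that you keep the additive constant $X=\ln(4K^2\ln(aK))$ inside the auxiliary function and absorb it via $\ln(7/5)\,X\ge 0$ in the derivative check, whereas the paper first discards that constant with the mediant inequality $\frac{A+c}{B+c}\le\frac{A}{B}$ (for $A\ge B>0$, $c\ge 0$) and then applies monotonicity to $f(x)=\frac{x\ln(a^2)+\ln x}{(7/5)^{x}}$; both variants are valid.
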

\begin{proof}
    We first show that the function $f(x)=\frac{x\ln(a^2)+\ln(x)}{(7/5)^x}$
    is strictly decreasing for $x\ge 5$, where $a = \max \{T^{\frac{1}{2(L + 1)}}, 2\} \ge 2$.
    Notice that the derivative function \[f'(x)=\frac{(\ln(a^2)+1/x)-(x\ln(a^2)+\ln(x))\ln\left(\frac{7}{5}\right)}{\left(\frac{7}{5}\right)^{x}}\]
    is monotonically decreasing and $f'(5)<0$, thus we have $f'(x)<0$ for $x\ge 5$, which indicates that $f(x)$ is strictly decreasing.
    Since $K\ge 2$, we can get
    \begin{align*}
    \frac{\lambda_m}{\lambda_s}=&\, \frac{\ln(4K^2\ln(K)(m+4)a^{2(m+4)})}{\ln(4K^2\ln(K)(s+4)a^{2(s+4)})}\\
    =& \,\frac{\ln(4K^2\ln(K))+\ln(m+4)+(m+4)\ln(a^2)}{\ln(4K^2\ln(K))+\ln(s+4)+(s+4)\ln(a^2)}\\
    <& \,\frac{\ln(m+4)+(m+4)\ln(a^2)}{\ln(s+4)+(s+4)\ln(a^2)}\\
    =&\,\frac{f(m+4)}{f(s+4)}\left(\frac{7}{5}\right)^{m-s}<\left(\frac{7}{5}\right)^{m-s}
    \end{align*}
where we use the monotonicity of $f(x)$ and the fact that $K\ge 2$.
\end{proof}

\begin{lemma}
\label{lem:bb-tsl}     
    For any epoch $m$, the following inequality holds:
    \[\sum_{s=1}^{m} \lambda_s \leq 2a^8\ln(a)(m^2 + 3m(2 + \ln(K))).\]
\end{lemma}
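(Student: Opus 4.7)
The plan mirrors the strategy used for Lemma \ref{lem:tsl}, adapted from base $2$ to base $a$. I would first expand the definition of $\lambda_s$ by substituting $\delta_s = 1/(K\zeta_s)$ and $\zeta_s = (s+4)a^{2(s+4)}\ln(aK)$, which yields
\[
\lambda_s \;=\; a^8\ln\!\bigl(4K^2(s+4)a^{2(s+4)}\ln(aK)\bigr) \;=\; a^8\bigl[\,2(s+4)\ln(a) \;+\; \ln(s+4) \;+\; \ln\bigl(4K^2\ln(aK)\bigr)\bigr].
\]
The sum $\sum_{s=1}^m \lambda_s$ then splits cleanly into three pieces that I would bound separately.

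For the leading piece, $2\ln(a)\sum_{s=1}^m (s+4) = \ln(a)\,m(m+9)$, which already produces the dominant $m^2\ln(a)$ term. For the middle piece, I would use the universal bound $\ln(x) \le x\ln(2)$ (with maximum of $\ln(x)/x$ at $x=e$) together with $\ln(2)\le\ln(a)$ under the standing assumption $a\ge 2$, giving $\sum_{s=1}^m \ln(s+4) \le \ln(a)\sum_{s=1}^m (s+4) = \tfrac12\ln(a)\,m(m+9)$. For the third piece, I would use $\ln\ln(aK)\le\ln(aK)=\ln(a)+\ln(K)$ and $\ln(4)\le 2\ln(a)$, so that $\ln(4K^2\ln(aK)) \le 3\ln(a)+3\ln(K)$, and thus $m\ln(4K^2\ln(aK)) \le 3m\ln(a)+3m\ln(K)$.

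Collecting the three estimates gives
\[
\sum_{s=1}^m\lambda_s \;\le\; a^8\Bigl[\ln(a)\bigl(\tfrac32 m(m+9)+3m\bigr) + 3m\ln(K)\Bigr],
\]
and a final absorption step (using $\ln(a)\ge\ln(2)\ge 1/2$ to dominate the bare $3m\ln(K)$ by $6m\ln(a)\ln(K)$, and checking that $\tfrac32(m^2+9m)+3m \le 2m^2+12m$ for all $m\ge 1$ whenever $K\ge 2$) delivers the target bound $2a^8\ln(a)(m^2+3m(2+\ln K))$.

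The main obstacle is purely bookkeeping: none of the steps is hard, but getting the constants to line up to match the precise form of the right-hand side requires the assumption $a\ge 2$ (equivalently $T\ge 4^{L+1}$, which is harmless in the batched regime) and a careful check at the final absorption step, since the term $3m\ln K$ coming from $\ln(4K^2\ln(aK))$ lacks a $\ln(a)$ factor and must be re-absorbed into $\ln(a)\cdot 6m\ln(K)$. Apart from this, the proof is a direct calculation and I would not expect surprises.
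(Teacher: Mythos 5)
Your overall route is the same as the paper's: expand $\lambda_s=a^8\bigl(2(s+4)\ln(a)+\ln(s+4)+\ln(4K^2\ln(aK))\bigr)$ and bound the three pieces term by term. The first two pieces are handled correctly. The problem is the final absorption step: the inequality you assert there, $\tfrac32(m^2+9m)+3m\le 2m^2+12m$, is false for small $m$. Indeed
\[
2m^2+12m-\Bigl(\tfrac32 m^2+\tfrac{33}{2}m\Bigr)=\tfrac12\,m(m-9),
\]
which is negative for $1\le m\le 8$. The only other source of slack in your argument is the replacement of $3m\ln(K)$ by $6m\ln(a)\ln(K)$, which contributes at most $3m\ln(K)\bigl(2\ln(a)-1\bigr)$; at $a=K=2$ this is about $0.80\,m$, far short of the deficit $\tfrac12\ln(a)\,m(9-m)\approx 2.77$ at $m=1$. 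Concretely, for $m=1$, $a=K=2$ your collected estimate is $a^8\cdot 21\ln 2\approx 3726$ while the claimed right-hand side is $2a^8\ln(2)\bigl(1+3(2+\ln 2)\bigr)\approx 3221$, so your chain of inequalities overshoots the target even though the lemma itself is true there.

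The source of the overshoot is that your per-term bounds are too generous for small $s$: you bound $\ln(s+4)$ by $(s+4)\ln(a)$ and $\ln(4K^2\ln(aK))$ by $3\ln(a)+3\ln(K)$, and the extra additive $3\ln(a)$ cannot be absorbed. The paper instead uses $\ln(s+4)\le 2(s+1)\ln(a)$ (valid for $a\ge2$, $s\ge1$) and folds the whole constant term into $6\ln(K)\ln(a)$, so that each summand is at most $a^8\ln(a)\bigl(4s+10+6\ln(K)\bigr)$ and the sum over $s=1,\dots,m$ equals the right-hand side exactly, with no residual absorption needed. Adopting those two per-term bounds (or simply weakening the lemma's constant) repairs your argument; as written, the last step does not go through.
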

\begin{proof}
    Since $a \ge 2$ and $K \geq 2$, we can have
    \begin{align*}
        \sum_{s=1}^{m} \lambda_s &= \sum_{s=1}^{m} a^8(\ln(4K^2\ln(aK)(s+4)a^{2(s+4)})) \\
        &= \sum_{s=1}^{m} a^8(2(s+4)\ln(a) + \ln(s+4) + \ln(4K^2\ln(aK)) \\
        &\leq \sum_{s=1}^{m} a^8(2(s+4)\ln(a) + 2(s+1)\ln(a) + 6\ln(K)\ln(a)) \\
        &\leq \sum_{s=1}^{m} a^8\ln(a)(4s + 10 + 6\ln(K)) \\
        &= 2a^8\ln(a)(m^2 + 3m(2 + \ln(K))).
    \end{align*}
\end{proof}

We can also guarantee Lemma \ref{lem:ber} still holds. As mentioned before, given the definition of an event $\cE_m$ as follows:
\begin{equation*}
    \cE_m \triangleq \left\{ \forall\ k: |r_k^m - \mu_k| \leq \sqrt{\frac{4\ln(4 /\beta_m)}{\widetilde{n}_k^m}} + \frac{2C_m}{N_m} \right\}.
\end{equation*}
Then we can establish a lower bound on the probability of the event $\cE_m$ occurring by the following lemma.
\begin{lemma}
\label{lem:bb-pem} 
     For any epoch $m$, event $\cE_m$ holds with probability at least $1 - \delta_m$. We also have $1 / \delta_m \geq N_m$.
\end{lemma}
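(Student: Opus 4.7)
The plan is to follow the proof of Lemma~\ref{lem:pem} almost verbatim, since the BB-BARBAT algorithm differs from BARBAT only in the choice of the growth factor $a$ in place of $2$ and in the fact that each epoch coincides with a batch. The proof splits into two essentially independent calculations: a concentration-and-union-bound argument for the probability statement, and a direct algebraic comparison for the inequality $1/\delta_m \geq N_m$.

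For the probability bound, I would first observe that Lemma~\ref{lem:ber} applies inside any single batch of BB-BARBAT without change: within epoch $m$ the sampling distribution $p_m$ is held fixed for every round in the batch, rewards $r_{t,k}$ are i.i.d., and the aggregated statistic $S_k^m$ is computed in exactly the same way as in Algorithm~\ref{algs:BARBAT} (the only difference being that the $S_k^m$ updates are deferred until the batch ends, which does not affect the Chernoff--Hoeffding or Freedman concentration arguments). Hence for each arm $k\in[K]$,
\[
\Pr\left[|r_k^m-\mu_k| \geq \sqrt{\frac{4\ln(4/\beta_m)}{\widetilde n_k^m}} + \frac{2C_m}{N_m}\right] \leq \beta_m = \frac{\delta_m}{K},
\]
and a union bound over the $K$ arms yields $\Pr[\cE_m^c]\leq \delta_m$.

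For the second claim, I would substitute the definitions to obtain
\[
\frac{1}{\delta_m} = K\zeta_m = K(m+4)a^{2(m+4)}\ln(aK),
\]
\[
N_m \leq K a^{2(m-1)}\lambda_m + 1 = K a^{2(m+3)}\ln(4K^2\zeta_m) + 1,
\]
so after dividing through by $Ka^{2(m+3)}$ it suffices to prove
\[
a^2(m+4)\ln(aK) \;\geq\; \ln(4K^2\zeta_m) + (Ka^{2(m+3)})^{-1}.
\]
Expanding $\ln(4K^2\zeta_m) = \ln 4 + 2\ln K + \ln(m+4) + 2(m+4)\ln a + \ln\ln(aK)$, the leading term $2(m+4)\ln a$ is absorbed by $a^2(m+4)\ln a$ using $a^2\ge 2$ (valid in the regime of Lemma~\ref{lem:bb-trl}), and the remaining logarithmic residuals are dominated by $a^2(m+4)\ln K$ using $K\ge 2$ and the crude bounds $\ln(m+4)\le (m+4)\ln K$, $\ln\ln(aK)\le \ln(aK)$, in direct analogy with the arithmetic at the end of the proof of Lemma~\ref{lem:pem}.

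The main obstacle is simply the bookkeeping of these logarithmic factors in the presence of the new growth rate $a$; no fresh idea beyond the BARBAT analysis is required, because the batch-equals-epoch design of BB-BARBAT makes the probabilistic part identical to that of BARBAT.
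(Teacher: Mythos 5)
Your proposal is correct and follows essentially the same route as the paper's proof: the probability bound is inherited from Lemma~\ref{lem:ber} plus a union bound over the $K$ arms (unaffected by the deferred feedback, since $p_m$ is fixed throughout the epoch), and $1/\delta_m \ge N_m$ is checked by the same expansion of $\lambda_m = a^8\ln(4K^2\zeta_m)$ using $a\ge 2$, $K\ge 2$, $m\ge 1$. The only cosmetic imprecision is that the $\ln a$ arising from $\ln\ln(aK)\le \ln a+\ln K$ is most cleanly charged to the unused portion of the $a^2(m+4)\ln a$ budget rather than to $a^2(m+4)\ln K$ (the paper sidesteps this by bounding every residual against $\ln(aK)$); this does not affect correctness.
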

\begin{proof}
    By Lemma \ref{lem:ber}, we can get
    \[\Pr\left[|r_k^m - \mu_k| \leq \sqrt{\frac{4\ln(4 /\beta_m)}{\widetilde{n}_k^m}} + \frac{2C_m}{N_m}\right] \leq 2\beta_m = \frac{\delta_m}{K}.\]
    A union bound over the $K$ arms conclude the proof.

    Since $m \geq 1$, $K \geq 2$ and $a \geq 2$, then we can get
    \begin{align*}
        N_m &= K \lambda_m a^{2(m-1)} \\
        &= K a^{2(m+3)}\ln(4K^2(m+4) a^{2(m+4)}\ln(aK))  \\
        &\leq K a^{2(m+3)} (2(m+4)\ln(a) + \ln(m+4) + \ln(K^2\ln(aK)) + \ln(4)) \\
        &\leq K a^{2(m+3)}(2(m+4)\ln(aK)+ (m+4)\ln(aK) + 3\ln (aK) + 2\ln(aK)) \\
        &\leq K a^{2(m+3)} ((4m + 16)\ln(aK)) \\
        &= K a^{2(m+4)} ((m+4)\ln(aK)) \\
        &= 1 / \delta_m.
    \end{align*}
\end{proof}
As mentioned before, for each epoch $m$, to unify the varying bounds depending on the occurrence of event $\cE_m$, we define the offset level
\[D_m = \begin{cases}
    2C_m & \textit{when $\cE_m$ occurs} \\
    N_m & \textit{when $\cE_m$ dose not occur}
\end{cases}.\]
This way we can always guarantee the following inequality:
\[|r_k^m - \mu_k| \leq \sqrt{\frac{4\ln(4 /\beta_m)}{\widetilde{n}_k^m}} + \frac{D_m}{N_m}.\]
It is worth noting that $\Pr[D_m = 2C_m] \geq 1 - \delta_m$ and $\Pr[D_m = N_m] \leq \delta_m$.

Next, we will bound $\Delta_k^m$. To start, we define the discounted offset rate as
\[\rho_m := \sum_{s=1}^m \frac{2^{m-s}D_s}{a^{4(m-s)}N_s}.\]
\begin{lemma}
\label{lem:bsgbb}
    For all epochs $m$ and arms $k$, we can have
     \[\frac{a^4 - 8}{a^4 - 2}\Delta_k - \frac{3}{2a^2}a^{-m} - 6\rho_m \leq \Delta_k^{m} \leq \frac{a^4 - 2}{a^4}\Delta_k + a^{-(m-1)} + 2\rho_m.\]
\end{lemma}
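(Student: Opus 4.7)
The plan is to follow the inductive scheme of Lemma~\ref{lem:bsg}, with the base $2$ replaced by $a$ throughout. Starting from the concentration guarantee of Lemma~\ref{lem:bb-pem} together with the offset-level definition of $D_m$, the uniform bound
\[
|r_k^m - \mu_k| \leq \sqrt{\tfrac{4\ln(4/\beta_m)}{\widetilde{n}_k^m}} + \tfrac{D_m}{N_m}
\]
holds for every arm $k$ regardless of whether $\cE_m$ occurs. Because $\lambda_m = a^8\ln(4K/\delta_m)$ and $\widetilde{n}_k^m \geq n_k^m = \lambda_m(\Delta_k^{m-1})^{-2}$ by Lemma~\ref{lem:bb-rkc}, the confidence radius $\sqrt{4\ln(4/\beta_m)/\widetilde{n}_k^m}$ is at most $2\Delta_k^{m-1}/a^4$, which plays the role of the $\Delta_k^{m-1}/8$ bound used in the $a=2$ case. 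The usual sandwich on $r_*^m$ (upper by $\mu_{k^*} + D_m/N_m$ via the max over arms, lower by specializing the max to $k=k^*$) then yields
\[
-\tfrac{D_m}{N_m} - \tfrac{4\Delta_{k^*}^{m-1}}{a^4} \;\leq\; r_*^m - \mu_{k^*} \;\leq\; \tfrac{D_m}{N_m}.
\]

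For the upper bound on $\Delta_k^m$, I would induct on $m$ with trivial base case $\Delta_k^1=1$. In the inductive step, I would write $\Delta_k^m = (r_*^m - \mu_{k^*}) + \Delta_k + (\mu_k - r_k^m)$, bound each piece using the one-sided inequalities above, and substitute the induction hypothesis into the resulting $2\Delta_k^{m-1}/a^4$ term to collapse the coefficient of $\Delta_k$ to the stated ratio after a single algebraic simplification. The residual terms repackage into $a^{-(m-1)} + 2\rho_m$ by invoking the recursion $\rho_m = D_m/N_m + (2/a^4)\rho_{m-1}$, which is the $a$-scaled analogue of $\rho_m = D_m/N_m + \rho_{m-1}/8$ from Lemma~\ref{lem:bsg}; the stray $a^{-(m+2)}$ residuals are absorbed into $a^{-(m-1)}$ using $a\geq 2$.

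For the lower bound, I would use the same decomposition with the opposite one-sided inequalities, producing a $-4\Delta_{k^*}^{m-1}/a^4 - 2\Delta_k^{m-1}/a^4$ correction. Following the trick in Lemma~\ref{lem:bsg}, I would upper-bound both $\Delta_{k^*}^{m-1}$ and $\Delta_k^{m-1}$ by the (just-established) upper-bound side of the induction hypothesis evaluated at arm $k$, so that the combined correction becomes a single multiple of $\tfrac{a^4}{a^4-2}\Delta_k + a^{-(m-2)} + 2\rho_{m-1}$. The resulting coefficient on $\Delta_k$ simplifies to $(a^4-8)/(a^4-2)$, and the remaining $D_m/N_m$ and $\rho_{m-1}/a^4$ terms collapse into $-6\rho_m$ via the $\rho_m$ recursion, leaving an $a^{-m}$-scale residual from the $a^{-(m-2)}/a^4$ contribution.

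The main obstacle is purely algebraic bookkeeping: ensuring that the $\Delta_k$ coefficient in the upper bound stays strictly below the corresponding lower-bound coefficient (so the induction is self-improving across epochs), and tuning the geometric weights $(2/a^4)^{m-s}$ inside $\rho_m$ so that residual errors telescope cleanly into the stated constants. No new probabilistic input beyond the proof of Lemma~\ref{lem:bsg} is required; the argument carries over after rescaling $2 \mapsto a$ and noting that $n_k^m$ now supplies $a^4$-sized, rather than $8$-sized, shrinkage per epoch.
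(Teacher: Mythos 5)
Your proposal follows essentially the same route as the paper's proof: the uniform offset bound, the radius estimate $\sqrt{4\ln(4/\beta_m)/\widetilde{n}_k^m}\le 2\Delta_k^{m-1}/a^4$, the sandwich on $r_*^m$, the three-term decomposition, induction on $m$, and the recursion $\rho_m = D_m/N_m + (2/a^4)\rho_{m-1}$ are all exactly the steps the paper carries out. One detail worth flagging: the fixed point of the upper-bound recursion $c = 1 + \tfrac{2}{a^4}c$ is $c=\tfrac{a^4}{a^4-2}$, which is what the paper's own derivation produces, so the coefficient $\tfrac{a^4-2}{a^4}$ printed in the lemma statement appears to be a typo (it is less than $1$ and cannot close the induction), and your upper bound should be stated with $\tfrac{a^4}{a^4-2}\Delta_k$ instead.
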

\begin{proof}
    Since $|r_k^m - \mu_k| \leq \sqrt{\frac{4\ln(4 /\beta_m)}{\widetilde{n}_k^m}} + \frac{D_m}{N_m}$, we have
    \[-\frac{D_m}{N_m} - \sqrt{\frac{4\ln(4 /\beta_m)}{\widetilde{n}_k^m}} \leq r_{k}^m - \mu_{k} \leq \frac{D_m}{N_m} + \sqrt{\frac{4\ln(4 /\beta_m)}{\widetilde{n}_k^m}}.\]
    Additionally, since
    \[r_{*}^m \leq \max_k \left\{\mu_{k} + \frac{D_m}{N_m} + \sqrt{\frac{4\ln(4 /\beta_m)}{\widetilde{n}_k^m}} - \sqrt{\frac{4\ln(4 /\beta_m)}{\widetilde{n}_k^m}}\right\} \leq \mu_{k^*} + \frac{D_m}{N_m},\]
    \[r_{*}^m = \max_k \left\{r_k^m - \sqrt{\frac{4\ln(4 /\beta_m)}{\widetilde{n}_k^m}}\right\} \geq r_{k^*}^m - \sqrt{\frac{4\ln(4 /\beta_m)}{\widetilde{n}_{k^*}^m}} \geq \mu_{k^*} - 2\sqrt{\frac{4\ln(4 /\beta_m)}{\widetilde{n}_{k^*}^m}} - \frac{D_m}{N_m},\]
    we can get
    \[-\frac{D_m}{N_m} - \sqrt{\frac{4\ln(4 /\beta_m)}{\widetilde{n}_{k^*}^m}} \leq r_{*}^m - \mu_{k^*} \leq \frac{D_m}{N_m}.\]
    According to Algorithm \ref{algs:BARBAT} and Lemma \ref{lem:rkc}, we have $\widetilde{n}_k^m \geq n_k^m$ for all arms $k$. Then we have the following inequality for all $k\in [K]$:
    \[\sqrt{\frac{4\ln(4 /\beta_m)}{\widetilde{n}_k^m}} \leq \sqrt{\frac{4\ln(4 /\beta_m)}{n_k^m}} = \frac{2\Delta_k^{m-1}}{a^4}.\]

    We now establish the upper bound for $\Delta_k^m$ by induction on $m$.
    
    For the base case $m = 1$, the statement is trivial as $\Delta_k^1 = 1$ for all $k \in [K]$.
    
    Assuming the statement is true for the case of $m-1$, we have
    \begin{equation*}
    \begin{split}
        \Delta_k^m = r_*^m - r_k^m
        &= (r_*^m - \mu_{k^*}) + (\mu_{k^*} - \mu_k) + (\mu_k - r_k^m) \\
        &\leq \frac{D_m}{N_m}+ \Delta_k + \frac{D_m}{N_m} + \frac{2}{a^4}\Delta_k^{m-1} \\
        &\leq \frac{2D_m}{N_m} + \Delta_k + \frac{2}{a^4}\left(\frac{a^4 \Delta_k}{a^4 - 2} + a^{-(m-2)} + 2\rho_{m-1}\right) \\
        &\leq \frac{a^4 \Delta_k}{a^4 - 2} + a^{-(m-1)} + 2\rho_m,
    \end{split}
    \end{equation*}
    Where the second inequality follows from the induction hypothesis.
     
    Next, we provide the lower bound of $\Delta_k^m$. We can get
    \begin{equation*}
    \begin{split}
        \Delta_k^m =  r_*^m - r_k^m
        &= (r_*^m - \mu_{k^*}) + (\mu_{k^*} - \mu_k) + (\mu_k - r_k^m) \\
        &\geq -\frac{D_m}{N_m} - \frac{4}{a^4}\Delta_{k^*}^{m-1} + \Delta_k -\frac{D_m}{N_m} - \frac{2}{a^4}\Delta_k^{m-1} \\
        &\geq -\frac{2D_m}{N_m} + \Delta_k - \frac{6}{a^4}\left(\frac{a^4 \Delta_k}{a^4 - 2} + a^{-(m-2)} + 2\rho_{m-1}\right) \\
        &\geq \frac{a^4 - 8}{a^4 - 2}\Delta_k - \frac{3}{2a^2}a^{-m} - 6\rho_m.
    \end{split}
    \end{equation*}
    where the third inequality comes from the upper bound of $\Delta_k^{m-1}$.
\end{proof}

\subsubsection{Proof for Theorem \ref{the:bb-erb}}

We first define the regret $R_k^m$ generated by arm $k$ in epoch $m$ as 
\[R_k^m\triangleq\Delta_k\widetilde{n}_{k}^m=\begin{cases}
    \Delta_kn_{k}^m & k\neq k_m\\
    \Delta_k\widetilde{n}_{k}^m & k=k_m
\end{cases}.\]
Then we analyze the regret in following three cases:

\paragraph{Case 1:} $0<\Delta_k\le 64\rho_{m-1}$.\\ 
If $k \neq k_m$, then we have \[R_k^m=\Delta_k n_{k}^m\le 64\rho_{m-1}n_k^m .\] 
If $k=k_m$, then we have \[R_k^m=\Delta_k\widetilde{n}_{k_m}^m\le 64\rho_{m-1}N_m.\]

\paragraph{Case 2:} $\Delta_k \leq 4 \cdot a^{-(m-1)}$ and $\rho_{m-1} \leq \frac{\Delta_k}{64}$.\\
If $k \neq k_m$, then we have \[R_k^m=n_{k}^m\Delta_k =\lambda_m(\Delta_k^{m-1})^{-2} \Delta_k \leq \lambda_m a^{2(m-1)} \Delta_k  \leq \frac{\lambda_m}{\Delta_k}.\] 
If $k=k_m$, since $\Delta_{k_m}^{m-1} = 2^{-(m-1)}$, we can get:
\begin{align*}
    R_k^m=\widetilde{n}_{k_m}^m \Delta_{k_m} \leq 
    N_m\Delta_{k_m} = \lceil K\lambda_m a^{2(m-1)}\rceil \Delta_{k_m} \leq \frac{16K\lambda_m}{\Delta_{k_m}} + \Delta_{k_m} \leq \frac{16K\lambda_m}{\Delta} + 1.
\end{align*}

\paragraph{Case 3:} $\Delta_k > 4 \cdot a^{-(m-1)}$ and $\rho_{m-1} \leq \frac{\Delta_k}{64}$.\\
By Lemma \ref{lem:bsg} we have
\[\Delta_k^{m-1} \geq \frac{a^4 - 8}{a^4 - 2}\Delta_k - \frac{3}{2a^2}a^{-m} - \frac{6}{64}\Delta_k \geq \Delta_k\left(1 - \frac{6}{a^4 - 2} - \frac{3}{8a} - \frac{6}{64}\right) \geq 0.29 \Delta_k.\]
In this case, it is impossible that $k=k_m$ because $\Delta_{k_m}^{m-1} = a^{-(m-1)} < 0.29\cdot 4\cdot a^{-(m-1)}<0.29\Delta_k$.
So we can obtain
\begin{align*}
   R_k^m= n_k^m \Delta_k = \lambda_m(\Delta_k^{m-1})^{-2} \Delta_k
    \leq \frac{\lambda_m}{0.29^2 \Delta_k} 
    \leq \frac{16\lambda_m}{\Delta_k}.
\end{align*}
We define $\cA^m\triangleq\left\{ k\in[K]\,\big|\,0<\Delta_k\le 64\rho_{m-1} \right\}$ for epoch $m$. By combining all three cases, we can upper bound the regret as
\begin{equation}\label{eq:bb-regret}
\begin{split}
R(T)=&\sum_{m=1}^M\Bigg(\sum_{k \in \mathcal{A}^m} R_k^m + \sum_{k \notin \mathcal{A}^m} R_k^m\Bigg)\\    
\le& \sum_{m=1}^M\Bigg( 64\rho_{m-1}N_m+\sum_{k \in \mathcal{A}^m,k\neq k_m} 64\rho_{m-1}n_k^m + \sum_{k \notin \mathcal{A}^m, \Delta_k>0} \frac{16\lambda_m}{\Delta_k}\\
&+ \left(\frac{16K\lambda_m}{\Delta} + 1\right)\BI(0<\Delta_{k_m} \le 4 \cdot a^{-(m-1)}) \Bigg)\\
\le& \sum_{m=1}^M\Bigg( 64\rho_{m-1}N_m+\sum_{\Delta_k>0} 64\rho_{m-1}n_k^m + \sum_{\Delta_k>0} \frac{16\lambda_m}{\Delta_k} \\
&+ \left(\frac{16K\lambda_m}{\Delta} + 1\right)\BI\left(m\le \log_a(4/\Delta) + 1\right)\Bigg)\\ 
\le& \sum_{m=1}^M\Bigg( 64\rho_{m-1}N_m+\sum_{\Delta_k>0} 64\rho_{m-1}n_k^m + \sum_{\Delta_k>0} \frac{16\lambda_m}{\Delta_k}\\
&+ \left(\frac{16K\lambda_m}{\Delta} + 1\right)\BI\left(m\le \log_2\left(8/\Delta\right)\right)\Bigg)\\ 
\le& \sum_{m=1}^M\Bigg( 128\rho_{m-1}N_m + \sum_{\Delta_k>0} \frac{16\lambda_m}{\Delta_k}\Bigg)+ \sum_{m=1}^{\log_2(8/\Delta)}\left(\frac{16K\lambda_m}{\Delta} + 1\right)
\end{split}
\end{equation}
where the last inequality uses the fact that $\sum_{\Delta_k>0} n_k^m\le N_m$. Notice that we can bound the expectation of the offset level as
\[\mathbb{E}[D_m] = 2(1-\delta_m)C_m + \delta_m N_m \leq 2C_m + 1\]
and we can bound $\sum_{m=1}^M \rho_{m-1} N_m$ as
\begin{equation}\label{eq:bb-rho}
    \begin{split}
        \sum_{m=1}^M \rho_{m-1} N_m
        &\leq \sum_{m=1}^M \left(\sum_{s=1}^{m-1}\frac{2^{m-1-s}D_s}{a^{4(m-1-s)}N_s}\right)N_m \\
        &= a^2\sum_{m=1}^M \left(\sum_{s=1}^{m-1}\frac{((2a^2)^{m-1-s} + 1)\lambda_m}{a^{4(m-1-s)}\lambda_s}D_s\right)\\
        &= a^2\sum_{m=1}^M \left(\sum_{s=1}^{m-1}\left(\left(\frac{14}{5a^2}\right)^{m-1-s} + \left(\frac{1}{a^4}\right)^{m-1-s}\right)D_s\right) \\
        &= a^2\sum_{s=1}^{M-1} D_s \sum_{m=s+1}^M \left(\frac{14}{5a^2}\right)^{m-1-s} + \left(\frac{1}{a^4}\right)^{m-1-s}\\
        &\leq a^2\left(\sum_{m=1}^{M-1} D_m\right)\sum_{j=0}^{\infty} \left(\frac{14}{5a^2}\right)^{j} + \left(\frac{1}{a^4}\right)^{j}\\
        &\leq a^2\left(\frac{5a^2}{5a^2-14} + \frac{a^4}{a^4 - 1}\right)\sum_{m=1}^{M-1} D_m
        \leq \frac{22a^2}{5}\sum_{m=1}^{M-1} D_m.
    \end{split}
\end{equation}
Combining Eq.~\eqref{eq:bb-regret} and Eq.~\eqref{eq:bb-rho}, by Lemma \ref{lem:bb-tsl}, since $a = T^{\frac{1}{2(L+3)}}$ and $M = L$, we can get
\begin{equation*}
    \begin{split}
        R(T)
        &\leq \sum_{m=1}^{M-1}570a^2 \BE[D_m] + \sum_{m=1}^M \sum_{\Delta_k > 0}\frac{16\lambda_m}{\Delta_k} + \sum_{m=1}^{\log_2(8/\Delta)}\left(\frac{16K\lambda_m}{\Delta} + 1\right) \\
        &= \sum_{m=1}^{L-1}570a^2 \BE[D_m] + \sum_{m=1}^L \sum_{\Delta_k > 0}\frac{16\lambda_m}{\Delta_k} + \sum_{m=1}^{\log_2(8/\Delta)}\left(\frac{16K\lambda_m}{\Delta} + 1\right) \\
        &\leq \sum_{m=1}^{L-1} 1140a^2 C_m + 570a^2 + 2^{5}a^{8}\ln(a)\left(\sum_{\Delta_k > 0}\frac{L^2 + 3L(2 + \log(K))}{\Delta_k}\right. \\
        &\quad \left.+ \frac{K(\log^2(8/\Delta) + 3\log(8/\Delta)(2 + \log(K))}{\Delta}\right) + \log_2(8/\Delta) \\
        &= O\left(CT^{\frac{1}{L+3}} + T^{\frac{4}{L+3}}\left(\sum_{\Delta_k > 0}\frac{L\log(KT)}{\Delta_k} + \frac{K\log(T)\log(1 / \Delta)\log(K/\Delta)}{L\Delta}\right)\right).
    \end{split}
\end{equation*}
\subsubsection{Proof for Theorem~\ref{the:bb-lb}}
\begin{proof}
    In the batched bandit problem, let the static grids be defined as \( 0 = t_0 < t_1 < \cdots < t_{L} = T \). According to Lemma 2 in~\cite{gao2019batched}, the lower bound for stochastic batched bandits with no corruption are given by:
    \begin{equation}
    \label{eqs:sbb}
        R(C = 0, L) = O\left(K \cdot \max_{j \in [L]}\frac{t_j}{t_{j-1} + 1}\right).
    \end{equation}
    For adversarial corruptions, we define an adversary who starts attacking at round \(0\) and continues until the end of a batch, denoted by \( t_s \) for \( 0 < s < L \). The attack strategy involves setting the rewards of all arms to zero, forcing any algorithm to pull all arms evenly in round \( s+1 \), which leads to a regret of approximately \( t_s - t_{s-1} \). Therefore, under adversarial corruptions, the lower bound can be expressed as:
    \begin{equation}
    \label{eqs:cbb}
        \begin{split}
        R(C, L) 
        &= O\left(K \cdot \max_{j \in [L]}\frac{t_j}{t_{j-1} + 1} + C \cdot \max_{j \in [L-1]}\frac{t_{j+1} - t_{j}}{t_{j}}\right) \\
        &= O\left(Kt_1 + (K+C)\max_{j \in [L-1]}\frac{t_{j+1} - t_{j}}{t_{j}}\right) \\
        &= O\left(Kt_1 + (K+C)\left(\frac{T}{t_j}\right)^{\frac{1}{L - 1}}\right).
        \end{split}
    \end{equation}
    According the formula, we can get the optimal value of $t_1 = T^{\frac{1}{L}} \left(\frac{K(L - 1)}{K + C}\right)^{\frac{L - 1}{L}}$. Combine equation~\ref{eqs:cbb}, we can get
    \[R(C,L) \geq \Omega \left(T^{\frac{1}{L}}\left(K + C^{1 - \frac{1}{L}}\right)\right).\]
\end{proof}

\subsection{Proof of Theorem \ref{the:sog-erb}}
\label{ape:sog}

\subsubsection{Lemmas for Proving Theorem \ref{the:sog-erb}}

By employing the methods described above, we establish that Lemmas~\ref{lem:tne}, \ref{lem:rkc}, \ref{lem:trl}, and \ref{lem:tsl} remain valid. In contrast to the standard multi-armed bandit setting, the feedback structure in our model implies that, for each arm $k$, the expected number of observable pulls $\hat{n}_k^m$ does not necessarily equal the expected number of actual pulls $\widetilde{n}_k^m$.

\begin{lemma}
\label{lem:ocnosg}
    For any epoch $m$, all arms $k \neq k_m$ must satisfy $\tilde{n}_k^m \leq n_k^m$ and $\hat{n}_k^m \geq n_k^m$.
\end{lemma}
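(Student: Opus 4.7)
The plan is to prove the two inequalities separately by tracking how $Z_k^m$ and $H_k^m$ evolve across iterations of the inner while loop. As a preliminary invariant, I would establish $Z_k^m \le H_k^m$ at every stage: each increment to $Z_k^m$ occurs only when $k \in D^m$ at some iteration and is matched by the same increment to $H_k^m$ via the first bookkeeping update, while $H_k^m$ may additionally receive $\overline{H}^m$ whenever some in-neighbor of $k$ lies in $D^m$, and neither quantity ever decreases.

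For the first inequality, fix $k \neq k_m$ and let $\ell^*$ denote the final iteration at which $k \in D^{m,\ell}$. Just before iteration $\ell^*$ the arm $k$ is still present in $G^m$, so the removal rule forces $H_k^m < n_k^m$ at that moment. By definition $\overline{H}^{m,\ell^*} = \min_{k' \in G^m}(n_{k'}^m - H_{k'}^m) \le n_k^m - H_k^m$, so after the update at $\ell^*$ the value of $Z_k^m$ is bounded by its previous value plus $\overline{H}^{m,\ell^*}$, which by the invariant does not exceed $H_k^m + (n_k^m - H_k^m) = n_k^m$. Since $\ell^*$ is the last iteration that modifies $Z_k^m$, the final value satisfies $\widetilde{n}_k^m = Z_k^m \le n_k^m$.

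For the second inequality, I would invoke the termination criterion of the while loop: upon exit, $H_{k_j}^m \ge n_{k_j}^m$ for every $k_j \in [K]$. Unrolling the two $H$-updates in the inner loop, each iteration credits $\overline{H}^{m,\ell}$ to $H_{k_j}^m$ whenever some $k_i \in D^{m,\ell}$ either equals $k_j$ (self-update) or satisfies $(k_i, k_j) \in E$. Summing over iterations, and using that the self-update corresponds to a self-loop edge when one exists in the strongly observable graph (the complementary case being handled by the property that an arm without a self-loop is observed by every other arm), yields the identity $H_{k_j}^m = \sum_{(k_i,k_j) \in E} Z_{k_i}^m$. Finally, $\widetilde{n}_{k_i}^m \ge Z_{k_i}^m$ holds for every $k_i$: it is equality when $k_i \neq k_m$, and for $k_i = k_m$ the definition $\widetilde{n}_{k_m}^m = N_m - \sum_{k \neq k_m} Z_k^m$ together with the analogue of Lemma~\ref{lem:rkc} giving $N_m \ge \sum_{k \in [K]} n_k^m$ and the first inequality giving $\sum_{k} Z_k^m \le \sum_{k} n_k^m$ yields $\widetilde{n}_{k_m}^m \ge Z_{k_m}^m$. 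Hence $\hat{n}_{k_j}^m = \sum_{(k_i,k_j) \in E} \widetilde{n}_{k_i}^m \ge H_{k_j}^m \ge n_{k_j}^m$.

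The main delicate point will be reconciling the two separate $H$-updates in the inner loop (the self-update of $H_{k_i}^m$ and the neighbor-update of $H_{k_j}^m$ for $k_j \neq k_i$) with the edge structure of a strongly observable directed graph, which distinguishes vertices carrying self-loops from vertices that lack self-loops but are observed by every other vertex. Once this correspondence is set up cleanly, the rest collapses to the monotone invariant $Z_k^m \le H_k^m$, the elementary bound $\overline{H}^{m,\ell^*} \le n_k^m - H_k^m$ at the last active iteration, and the termination guarantee of the while loop.
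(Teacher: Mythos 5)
Your overall strategy matches the paper's: the first inequality comes from the loop structure (the paper merely asserts it, so your last-active-iteration argument via the invariant $Z_k^m \le H_k^m$ and the bound $\overline{H}^{m,\ell^*}\le n_k^m - H_k^m$ is actually more rigorous than what is written there), and the second inequality comes from the termination condition $H_{k_j}^m \ge n_{k_j}^m$ combined with the strong-observability dichotomy. The one step you should not leave implicit is the self-loop case split. Because the algorithm \emph{always} performs the self-update $H_{k_i}^m \gets H_{k_i}^m + \overline{H}^m$ for $k_i\in D^m$, regardless of whether $(k_i,k_i)\in E$, the correct unrolled expression is $H_{k_j}^m = Z_{k_j}^m + \sum_{(k_i,k_j)\in E,\,k_i\ne k_j} Z_{k_i}^m$. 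This coincides with $\sum_{(k_i,k_j)\in E} Z_{k_i}^m$ only when $k_j$ has a self-loop; when it does not, $\hat{n}_{k_j}^m=\sum_{(k_i,k_j)\in E}\widetilde{n}_{k_i}^m$ omits the $\widetilde n_{k_j}^m$ term while $H_{k_j}^m$ retains $Z_{k_j}^m$, so your final chain $\hat{n}_{k_j}^m \ge H_{k_j}^m \ge n_{k_j}^m$ breaks. You gesture at the fix in a parenthetical, and the needed ingredient is already in your write-up: a no-self-loop vertex receives an edge from every other arm, in particular from $k_m$, and $\widetilde{n}_{k_m}^m = N_m - \sum_{k\ne k_m} Z_k^m \ge N_m - \sum_{k\ne k_m} n_k^m \ge \lambda_m 2^{2(m-1)} \ge n_{k_j}^m$, which gives $\hat{n}_{k_j}^m \ge \widetilde{n}_{k_m}^m \ge n_{k_j}^m$ directly. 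This is exactly how the paper handles that case; just state it as a separate branch rather than folding it into an ``identity'' that only holds for self-loop vertices.
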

\begin{proof}
    By Algorithm \ref{algs:SOG-BARBAT}, we can easily guarantee that the inequality $\widetilde{n}_k^m \leq n_k^m$ holds for all arms $k \neq k_m$.
    Recalling the setting of strongly observable graph, for each arm $k$, either it has a self-loop or all other arms have an edge pointing to it. According to Algorithm \ref{algs:SOG-BARBAT}, we can get the guarantee as follows for each arm $k_j \in [K]$:
    \[\sum_{(k_i,k_j) \in E, k_i \neq k_j} \widetilde{n}_{k_i}^m + \widetilde{n}_{k_j}^m \geq n_{k_j}^m.\]
    So for each arm $k$ which has a self-loop, we have the following inequality:
    \[\hat{n}_k^m = \sum_{(k_i,k_j) \in E, k_i \neq k_j} \widetilde{n}_{k_i}^m + \widetilde{n}_{k_j}^m \geq n_k^m.\]
    For each arm $k$ which does not have a self-loop, since $\widetilde{n}_{k_m}^m \geq 2^{-m} \geq n_{k}^m$ for all arms $k \neq k_m$, so we complete the proof.
\end{proof}
Same as the proof in Appendix \ref{ape:mab}, we only need to change $\widetilde{n}_k^m$ to $\hat{n}_k^m$ to obtain the following lemmas.

\begin{lemma}
\label{lem:bersog} 
    For any fixed $k, m$ and $\beta_m$, Algorithm \ref{algs:BARBAT} satisfies
    \[\Pr\left[|r_k^m - \mu_k| \geq \sqrt{\frac{4\ln(4 / \beta_m)}{\hat{n}_k^m}} + \frac{2C_m}{N_m}\right] \leq \beta_m.\]
\end{lemma}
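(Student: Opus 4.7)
The plan is to mirror the proof of Lemma~\ref{lem:ber} in the standard bandit setting, with a single structural change: the randomness that matters is no longer ``arm $k$ was pulled at round $t$'', but ``arm $k$'s reward was observed at round $t$''. By the definition $\hat{n}_k^m = \sum_{(k_i,k)\in E}\widetilde{n}_{k_i}^m$ used in Algorithm~\ref{algs:SOG-BARBAT}, the observation probability of arm $k$ in round $t$ of epoch $m$ equals $q_k^m \triangleq \hat{n}_k^m/N_m$, and these indicators are i.i.d.\ across $t$ because each round's sampled arm is drawn i.i.d.\ from $p_m$. So I would replace the per-round ``pull'' indicator $Y_k^t$ in the proof of Lemma~\ref{lem:ber} by the per-round ``observation'' indicator, and essentially the rest of the argument goes through verbatim with $\widetilde{n}_k^m$ replaced by $\hat{n}_k^m$ in all concentration bounds.

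Concretely, I would condition on everything before epoch $m$, denote the rounds of epoch $m$ by $E_m = [T_{m-1}+1, T_m]$, and let $Y_k^t = \mathbb{I}\{\text{arm }k\text{ is observed at round }t\}$. Then $S_k^m = \sum_{t\in E_m} Y_k^t(r_{t,k} + C_k^t)$, which I split as $A_k^m + B_k^m$ with $A_k^m = \sum_{t\in E_m} Y_k^t r_{t,k}$ and $B_k^m = \sum_{t\in E_m} Y_k^t C_k^t$. The clean part $A_k^m$ is a sum of independent $[0,1]$ random variables with mean $\hat{n}_k^m \mu_k$; a Chernoff--Hoeffding bound gives
\[
\Pr\!\left[\left|\frac{A_k^m}{\hat{n}_k^m} - \mu_k\right| \geq \sqrt{\frac{3\ln(4/\beta_m)}{\hat{n}_k^m}}\right] \leq \frac{\beta_m}{2}.
\]
For the corruption part, I define the martingale differences $X_t = (Y_k^t - q_k^m)C_k^t$ with respect to the natural filtration; they are bounded by $1$ in absolute value and have predictable quadratic variation at most $q_k^m \sum_{t\in E_m}|C_k^t| \le q_k^m C_m$. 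Freedman's inequality (as in the proof of Lemma~\ref{lem:ber}) then yields, with probability $\ge 1-\beta_m/4$,
\[
\frac{B_k^m}{\hat{n}_k^m} \le \sqrt{\frac{\ln(4/\beta_m)}{16\,\hat{n}_k^m}} + \frac{2C_m}{N_m},
\]
and symmetrically for the lower tail.

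Combining these four tail events by a union bound and using $r_k^m \le S_k^m/\hat{n}_k^m$ together with the corresponding lower bound gives the claim. The only place where care is needed is in the Freedman step: I need $\hat{n}_k^m \ge 16\ln(4/\beta_m)$ so that the $\ln(4/\beta_m)$ additive slack from the inequality can be absorbed into $\sqrt{\ln(4/\beta_m)/(16\hat{n}_k^m)}$. This is where I invoke Lemma~\ref{lem:ocnosg}, which guarantees $\hat{n}_k^m \ge n_k^m \ge \lambda_m \ge 16\ln(4/\beta_m)$ for all $k\neq k_m$, while for $k=k_m$ the arm $k_m$ is pulled at least $N_m - \sum_{k\neq k_m} n_k^m$ times and, together with self-loop or in-neighbour observations guaranteed by strong observability, also satisfies $\hat{n}_{k_m}^m \ge n_{k_m}^m$. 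This bookkeeping --- verifying the lower bound on $\hat{n}_k^m$ uniformly in $k$ --- is the only step that genuinely requires the graph structure, and it is the place where I expect to have to be most careful.
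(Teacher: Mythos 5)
Your proposal is correct and matches the paper's approach exactly: the paper gives no separate proof of this lemma, stating only that one repeats the proof of Lemma~\ref{lem:ber} with $\widetilde{n}_k^m$ replaced by $\hat{n}_k^m$, which is precisely the substitution of the pull indicator by the observation indicator (with observation probability $\hat{n}_k^m/N_m$) that you carry out. Your additional bookkeeping on the lower bound $\hat{n}_k^m \ge 16\ln(4/\beta_m)$ via Lemma~\ref{lem:ocnosg} is the right place to be careful and is consistent with how the paper uses that lemma downstream.
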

We also define an event $\cE_m$ for epoch $m$ as follows:
\begin{equation*}
    \cE_m \triangleq \left\{ \forall\ k: |r_k^m - \mu_k| \leq \sqrt{\frac{4\ln(4 /\beta_m)}{\hat{n}_k^m}} + \frac{2C_m}{N_m} \right\}.
\end{equation*}

\begin{lemma}
\label{lem:pemsog} 
     For any epoch $m$, event $\cE_m$ holds with probability at least $1 - \delta_m$. And after rigorous calculation, we have $1 / \delta_m \geq N_m$.
\end{lemma}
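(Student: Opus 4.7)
The plan is to mirror the proof of Lemma~\ref{lem:pem} almost verbatim, since Algorithm~\ref{algs:SOG-BARBAT} uses exactly the same definitions of $\zeta_m$, $\delta_m$, $\lambda_m$, $\beta_m$, and $N_m$ as Algorithm~\ref{algs:BARBAT}. The only substantive difference is that the concentration statement is now phrased in terms of the expected number of observations $\hat{n}_k^m$ rather than the expected number of pulls $\widetilde{n}_k^m$, which is exactly what Lemma~\ref{lem:bersog} provides. So the two halves of the statement decouple cleanly: one is a probabilistic union bound, and the other is a purely deterministic arithmetic check.

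For the first half, I would fix an epoch $m$ and apply Lemma~\ref{lem:bersog} to each arm $k \in [K]$, which gives
\[
\Pr\!\left[\,|r_k^m - \mu_k| \geq \sqrt{\tfrac{4\ln(4/\beta_m)}{\hat n_k^m}} + \tfrac{2C_m}{N_m}\,\right] \leq \beta_m = \delta_m/K.
\]
A union bound over the $K$ arms then shows that the complement of $\cE_m$ has probability at most $K\beta_m = \delta_m$, establishing $\Pr[\cE_m] \geq 1 - \delta_m$ as required.

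For the second half, I would simply repeat the calculation from Lemma~\ref{lem:pem}, since $N_m = \lceil K\lambda_m 2^{2(m-1)}\rceil$ and $\delta_m = 1/(K\zeta_m)$ with $\zeta_m = (m+4)2^{2(m+4)}\ln(K)$ are unchanged. Unfolding the definition of $\lambda_m = 2^8 \ln(4K/\delta_m)$, expanding the logarithm, and using $K \geq 2$, one bounds
\[
N_m \leq K\,2^{2(m+3)}\bigl((m+4)\ln(K) + 2(m+5)\ln(K) + 3\ln(K)\bigr) \leq K\,2^{2(m+4)}(m+4)\ln(K) = 1/\delta_m.
\]

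No new technical ingredient is required. The only thing worth being careful about is ensuring that the concentration of Lemma~\ref{lem:bersog} is valid even when $\hat n_k^m > \widetilde n_k^m$ (which happens in the graph setting whenever arm $k$ is observed via a neighbor rather than by being pulled directly); this is already handled inside Lemma~\ref{lem:bersog} itself, so the reduction is clean and the overall argument is essentially a transcription of the proof of Lemma~\ref{lem:pem} with $\widetilde n_k^m$ replaced by $\hat n_k^m$ in the concentration step.
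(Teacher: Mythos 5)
Your proposal is correct and matches the paper's own treatment: the paper explicitly handles Lemma~\ref{lem:pemsog} by declaring it "same as the proof in Appendix~B.1, only changing $\widetilde{n}_k^m$ to $\hat{n}_k^m$," which is precisely your reduction — a union bound over the $K$ arms using Lemma~\ref{lem:bersog} with $\beta_m = \delta_m/K$, plus the unchanged deterministic computation showing $N_m \leq K\,2^{2(m+4)}(m+4)\ln(K) = 1/\delta_m$. Your closing remark that the only point requiring care (validity of the concentration when the observation count exceeds the pull count) is absorbed into Lemma~\ref{lem:bersog} is also consistent with how the paper organizes the argument.
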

We can also always guarantee the following inequality:
\[|r_k^m - \mu_k| \leq \sqrt{\frac{4\ln(4 /\beta_m)}{\hat{n}_k^m}} + \frac{D_m}{N_m}.\]
It is worth noting that $\Pr[D_m = 2C_m] \geq 1 - \delta_m$ and $\Pr[D_m = N_m] \leq \delta_m$.

Next, we will bound $\Delta_k^m$, which is also the turning point of our proof. To start, we define the discounted offset rate as
\[\rho_m := \sum_{s=1}^m \frac{D_s}{8^{m-s}N_s}.\]

\begin{lemma}
\label{lem:bsgsog} 
    For all epochs $m$ and arms $k \neq k^*$, we can have
    \[\frac{4}{7}\Delta_k - \frac{3}{4}2^{-m} - 6\rho_m \leq \Delta_k^m \leq \frac{8}{7}\Delta_k + 2^{-(m-1)} + 2\rho_m,\]
    and for the optimal arm $k^*$, we have
    \[-\frac{3}{7}\Delta - \frac{3}{4}2^{-m} - 6\rho_m \leq \Delta_{k^*}^m \leq \frac{1}{7}\Delta + 2^{-(m-1)} + 2\rho_m.\]
\end{lemma}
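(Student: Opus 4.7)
}

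My plan is to follow the same inductive template as Lemma~\ref{lem:bsg}, replacing the confidence radius $\sqrt{4\ln(4/\beta_m)/\widetilde n_k^m}$ by $\sqrt{4\ln(4/\beta_m)/\hat n_k^m}$. The central algebraic fact we inherit from Lemma~\ref{lem:ocnosg} is that $\hat n_k^m \ge n_k^m = \lambda_m(\Delta_k^{m-1})^{-2}$ for every arm $k$ (including $k^*$ and $k_m$), so $\sqrt{4\ln(4/\beta_m)/\hat n_k^m}\le \Delta_k^{m-1}/8$. Combining this with Lemma~\ref{lem:bersog} and the definition of the offset level $D_m$ yields $|r_k^m-\mu_k|\le \Delta_k^{m-1}/8 + D_m/N_m$ for all $k$, as well as the sandwich $-\Delta_{k^*}^{m-1}/4 - D_m/N_m\le r_*^m-\mu_{k^*}\le D_m/N_m$, where the upper side uses that the argmax $k'$ of $r_*^m$ automatically satisfies $\mu_{k'}\le \mu_{k^*}$.

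The key structural change from the MAB setting is that we must carry two separate upper bounds through the induction: the standard $\tfrac{8}{7}\Delta_k+2^{-(m-1)}+2\rho_m$ for suboptimal arms, and a sharper bound for $k^*$. I will therefore prove \emph{all} bounds simultaneously by induction on $m$, the base case $m=0$ being trivial since $\Delta_k^0=1$. For the inductive step, I would first derive the suboptimal upper bound exactly as in Lemma~\ref{lem:bsg}: decompose $\Delta_k^m=(r_*^m-\mu_{k^*})+\Delta_k+(\mu_k-r_k^m)$ and plug in the induction hypothesis $\Delta_k^{m-1}\le \tfrac{8}{7}\Delta_k+2^{-(m-2)}+2\rho_{m-1}$, yielding $\Delta_k^m\le 2D_m/N_m + \Delta_k + \Delta_k^{m-1}/8\le \tfrac{8}{7}\Delta_k+2^{-(m-1)}+2\rho_m$.

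Next I would establish the upper bound for the optimal arm. Since $\Delta_{k^*}^m=\max\{2^{-m},r_*^m-r_{k^*}^m\}$, and the concentration inequalities give $r_*^m-r_{k^*}^m\le 2D_m/N_m+\Delta_{k^*}^{m-1}/8$, the induction hypothesis $\Delta_{k^*}^{m-1}\le \tfrac{1}{7}\Delta+2^{-(m-2)}+2\rho_{m-1}$ combined with $2^{-m}\le 2^{-(m-1)}$ and $\rho_{m-1}/8\le 2\rho_m - 2D_m/N_m$ closes the recursion to $\Delta_{k^*}^m\le \tfrac{1}{7}\Delta+2^{-(m-1)}+2\rho_m$. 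Using this tighter bound on $\Delta_{k^*}^{m-1}$ (instead of the looser $\tfrac{8}{7}\Delta_{k^*}^{m-1}+\cdots$ used in the MAB case), the lower bound for suboptimal $k$ follows by $\Delta_k^m\ge -2D_m/N_m+\Delta_k-\Delta_{k^*}^{m-1}/4-\Delta_k^{m-1}/8$; the terms $-\Delta/28$ from the $\tfrac{1}{7}\Delta$ piece and $-\Delta_k/7$ combine (using $\Delta\le\Delta_k$) into a residual at least $\tfrac{4}{7}\Delta_k$, and the constants on $2^{-m}$ and $\rho_{m-1}$ work out as in Lemma~\ref{lem:bsg}. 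The lower bound for $k^*$ is vacuous since $\Delta_{k^*}^m\ge 2^{-m}$ is already nonnegative while the stated lower bound is negative.

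The main obstacle is the \emph{coupling} of the two upper-bound recursions: the lower bound for suboptimal arms needs the sharper $\tfrac{1}{7}\Delta$ bound on $\Delta_{k^*}^{m-1}$, and the upper bound for $k^*$ in turn needs the bounds for suboptimal arms to control $r_*^m-\mu_{k^*}$. A clean joint induction that closes all three simultaneously without losing constants will require carefully tracking the geometric factors $\tfrac{1}{8}$ and $\tfrac{1}{4}$ against the definition $\rho_m=D_m/N_m+\rho_{m-1}/8$; this is exactly the place where the statement's specific numerical constants $\tfrac{1}{7}\Delta$, $\tfrac{3}{4}\cdot 2^{-m}$, and $6\rho_m$ are calibrated, and it is the only nontrivial bookkeeping in the proof.
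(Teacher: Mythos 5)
Your plan breaks at its very first step: the claim that Lemma~\ref{lem:ocnosg} gives $\hat n_k^m \ge n_k^m$ for \emph{every} arm, ``including $k^*$ and $k_m$,'' so that the confidence radius is $\Delta_k^{m-1}/8$ uniformly. Lemma~\ref{lem:ocnosg} is stated and proved only for $k\neq k_m$, and the extension to $k_m$ is false in the representative case where $k_m$ has no self-loop: there the reward of $k_m$ is observed only through pulls of the \emph{other} arms, so $\hat n_{k_m}^m=\sum_{k\neq k_m}\widetilde n_{k}^m\le\sum_{k\neq k_m}n_k^m$, which can be far smaller than $n_{k_m}^m=\lambda_m 2^{2(m-1)}$. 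The paper only obtains $\hat n_{k_m}^m\ge\max_{k\neq k_m}n_k^m$, hence a confidence radius of $\min_{k\neq k_m}\Delta_k^{m-1}/8$ for $k_m$ --- a quantity that stalls at roughly $\max\{\Delta_{k^*}^{m-1},\Delta^{m-1}\}/8=\Theta(\Delta)$ rather than shrinking with $\Delta_{k_m}^{m-1}=2^{-(m-1)}$. This is precisely why the lemma carries the separate, \emph{weaker} bound $\Delta_{k^*}^m\le\tfrac{1}{7}\Delta+2^{-(m-1)}+2\rho_m$ for the optimal arm. A useful sanity check: under your premise, your own recursion $\Delta_{k^*}^m\le 2D_m/N_m+\Delta_{k^*}^{m-1}/8$ would close to the MAB-style bound $2^{-(m-1)}+2\rho_m$ with no $\Delta$ term at all (your derivation even lands on $\Delta/56$ only because you gratuitously inject $\Delta/7$ into the hypothesis), so the $\tfrac{1}{7}\Delta$ term in the statement would be vacuous --- a signal that the premise cannot be the intended one.

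The downstream consequence is that your sandwich $-\Delta_{k^*}^{m-1}/4 - D_m/N_m\le r_*^m-\mu_{k^*}$ is not justified when $k^*=k_m$, and both your upper bound for $\Delta_{k^*}^m$ and your lower bound for suboptimal $\Delta_k^m$ rest on it. The paper instead lower-bounds $r_*^m$ by $\mu_{k^*}-\tfrac14\max\{\Delta_{k^*}^{m-1},\min_{k\neq k_m}\Delta_k^{m-1}\}-D_m/N_m$, observes that $\min_{k\neq k_m}\Delta_k^{m-1}\le\max\{\Delta_{k^*}^{m-1},\Delta^{m-1}\}$ (among the arms other than $k_m$ there is always either an optimal arm or the best suboptimal arm), and then closes a \emph{three-way} joint induction using the inductive upper bounds for both $\Delta_{k^*}^{m-1}$ (the $\tfrac17\Delta$ branch) and the best suboptimal arm's $\Delta^{m-1}$ (the $\tfrac87\Delta$ branch). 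Your coupling through $\Delta_{k^*}^{m-1}$ alone does not capture this, so the proof as planned does not go through; you would need to replace the uniform radius claim by the two-case radius bound and redo the $r_*^m$ sandwich accordingly.
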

\begin{proof}
    Since $|r_k^m - \mu_k| \leq \sqrt{\frac{4\ln(4 /\beta_m)}{\hat{n}_k^m}} + \frac{D_m}{N_m}$, we have
    \[-\frac{D_m}{N_m} - \sqrt{\frac{4\ln(4 /\beta_m)}{\hat{n}_k^m}} \leq r_{k}^m - \mu_{k} \leq \frac{D_m}{N_m} + \sqrt{\frac{4\ln(4 /\beta_m)}{\hat{n}_k^m}}.\]
    By Lemma \ref{lem:ocnosg}, we can get $\hat{n}_k^m \geq n_k^m$ for all arms $k \neq k_m$ and $\hat{n}_{k_m}^m \geq \sum_{k \neq k_m} \widetilde{n}_k^m$ for arm $k_m$. Then we have the following inequalities:
    \[\forall k \neq k_m: \sqrt{\frac{4\ln(4/\beta_m)}{\hat{n}_k^m}} \leq \sqrt{\frac{4\ln(4/\beta_m)}{n_k^m}} = \frac{\Delta_k^{m-1}}{8},\]
    and
    \[\sqrt{\frac{4\ln(4/\beta_m)}{\hat{n}_{k_m}^m}} \leq \max_{k \neq k_m}\sqrt{\frac{4\ln(4/\beta_m)}{n_{k_m}^m}} = \min_{k \neq k_m}\frac{\Delta_k^{m-1}}{8}.\]
    Additionally, given that
    \[r_{*}^m \leq \max_k \left\{\mu_{k} + \frac{D_m}{N_m} + \sqrt{\frac{4\ln(4 /\beta_m)}{\hat{n}_k^m}} - \sqrt{\frac{4\ln(4 /\beta_m)}{\hat{n}_k^m}}\right\} \leq \mu_{k^*} + \frac{D_m}{N_m},\]
    and
    \begin{equation*}
        \begin{split}
        r_{*}^m &= \max_k \left\{r_k^m - \sqrt{\frac{4\ln(4/\beta_m)}{\hat{n}_k^m}}\right\} \\
        &\geq r_{k^*}^m - \sqrt{\frac{4\ln(4/\beta_m)}{\hat{n}_{k^*}^m}} \\
        &\geq \mu_{k^*} - \frac{1}{4}\max \{\Delta_{k^*}^{m-1}, \min_{k \neq k_m}\Delta_{k}^{m-1}\} - \frac{D_m}{N_m} \\
        &\geq \mu_{k^*} - \frac{1}{4}\max \{\Delta_{k^*}^{m-1}, \Delta^{m-1}\} - \frac{D_m}{N_m}
        \end{split}
    \end{equation*}
    It follows that
    \[-\frac{D_m}{N_m} - \frac{1}{4}\max \{\Delta_{k^*}^{m-1}, \Delta^{m-1}\} \leq r_{*}^m - \mu_{k^*} \leq \frac{D_m}{N_m}.\]
    We now establish the upper bound for $\Delta_k^m$ using induction on epoch $m$.
    
    For the base case $m = 1$, the statement is trivial as $\Delta_k^1 = 1$ for all $k \in [K]$.

    Assuming the statement is true for $m-1$, for arm $k \neq k^*$, since there are at least two arms: one optimal arm and one sub-optimal arm, we can obtain
    \[\min_{k \neq k_m}\Delta_k^{m-1} \leq \max\{\Delta_{k^*}^{m-1}, \Delta^{m-1}\} \leq \frac{8}{7}\Delta + 2^{-(m-1)} + 2\rho_m.\]    
    Then we have
    \begin{equation*}
    \begin{split}
        \Delta_k^m = r_*^m - r_k^m
        &= (r_*^m - \mu_*) + (\mu_* - \mu_k) + (\mu_k - r_k^m) \\
        &\leq \frac{D_m}{N_m}+ \Delta_k + \frac{D_m}{N_m} + \frac{1}{8}\max \{\Delta_k^{m-1}, \min_{k \neq k_m}\Delta_k^{m-1}\} \\
        &\leq \frac{2D_m}{N_m} + \Delta_k + \frac{1}{8}\left(\frac{8 \max\{\Delta_k, \Delta\}}{7} + 2^{-(m-1)} + 2\rho_{m-2}\right) \\
        &\leq \frac{8\Delta_k}{7} + 2^{-(m-1)} + 2\rho_m.
    \end{split}
    \end{equation*}
    Where the second inequality follows from the induction hypothesis.

    For arm $k^*$, we have
    \begin{equation*}
    \begin{split}
        \Delta_{k^*}^m = r_*^m - r_{k^*}^m
        &= (r_*^m - \mu_*) + (\mu_* - \mu_{k^*}) + (\mu_{k^*} - r_{k^*}^m) \\
        &\leq \frac{D_m}{N_m} + \frac{D_m}{N_m} + \frac{1}{8} \max \{\Delta_{k^*}^{m-1}, \min_{k \neq k_m}\Delta_{k}^{m-1}\} \\
        &\leq \frac{2D_m}{N_m} + \frac{1}{8}\left(\frac{8 \Delta}{7} + 2^{-(m-1)} + 2\rho_{m-1}\right) \\
        &\leq \frac{1}{7}\Delta + 2^{-(m-1)} + 2\rho_m.
    \end{split}
    \end{equation*}
    Next, we establish the lower bound for $\Delta_k^m$. Specifically, for arm $k \neq k^*$ we demonstrate that
    \begin{equation*}
    \begin{split}
        \Delta_k^m =  r_*^m - r_k^m
        &= (r_*^m - \mu_*) + (\mu_* - \mu_k) + (\mu_k - r_k^m) \\
        &\geq -\frac{D_m}{N_m} - \frac{1}{4}\max \{\Delta_{k^*}^{m-1}, \Delta^{k-1}\} + \Delta_k -\frac{D_m}{N_m} - \frac{1}{8}\Delta_k^{m-1} \\
        &\geq -\frac{2D_m}{N_m} + \Delta_k - \frac{3}{8}(\frac{8 \max\{\Delta_k, \Delta\}}{7} + 2^{-(m-2)} + 2\rho_{m-1}) \\
        &\geq \frac{4}{7}\Delta_k - \frac{3}{2}2^{-m} - 6\rho_m.
    \end{split}
    \end{equation*}
    where the third inequality comes from the upper bound of $\Delta_k^{m-1}$.

    For arm $k^*$, we have
    \begin{equation*}
    \begin{split}
        \Delta_{k^*}^m =  r_*^m - r_{k^*}^m
        &= (r_*^m - \mu_*) + (\mu_* - \mu_{k^*}) + (\mu_{k^*} - r_{k^*}^m) \\
        &\geq -\frac{D_m}{N_m} - \frac{1}{4}\max \{\Delta_{k^*}^{m-1}, \Delta^{k-1}\} -\frac{D_m}{N_m} - \frac{1}{8}\Delta_{k^*}^{m-1} \\
        &\geq -\frac{2D_m}{N_m} - \frac{3}{8}(\frac{8\Delta}{7} + 2^{-(m-2)} + 2\rho_{m-1}) \\
        &\geq -\frac{3}{7}\Delta - \frac{3}{2}2^{-m} - 6\rho_m.
    \end{split}
    \end{equation*}
    This proof is complete.
\end{proof}

\begin{lemma}
\label{lem:sdsog}    
    For any strongly observable directed graph $G$ with the independence number $\alpha$, the obtained out-domination set $D$ must satisfy $|D| \leq \lceil\alpha(1 + 2\ln(K/\alpha))\rceil$ by Algorithm \ref{algs:OODS}. Especially, when $G$ is an acyclic graph and undirected graphs, we have $|D| \leq \alpha$.
\end{lemma}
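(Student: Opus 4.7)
The plan is to prove the two bounds separately. For the special cases of undirected graphs and directed acyclic graphs, I will show that the output set $D$ is always an independent set of $G$. Combined with the fact that $D$ is an out-domination set by the termination condition of the outer while-loop, this implies that $D$ is a maximal independent set, and hence $|D| \leq \alpha$. For the general case of a strongly observable digraph containing cycles, I plan to use a greedy covering analysis based on the max-out-degree rule.

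For the independent-set claim in the special cases, consider any two arms $u, v \in D$ with $u$ added at iteration $i$ and $v$ added at iteration $j \geq i$. Since $v$ is still present at iteration $j$, it was not in the closed out-neighborhood of $u$ removed at iteration $i$, so the edge $u \to v$ does not exist in $G$. In the undirected case this immediately yields independence because each edge is bidirectional, and only one vertex is added per iteration in the else-branch so same-iteration collisions do not occur. In the DAG case, the reverse edge $v \to u$ is also ruled out: if $j = i$, both are sources of the current sub-DAG and hence have no edge between them; if $j > i$, then $v$ was alive at iteration $i$, so the existence of $v \to u$ would contradict $u$ being a source of that sub-DAG.

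For the general bound, the plan is to bound the number of iterations of the greedy max-out-degree step. Let $n_i$ denote the number of surviving vertices at iteration $i$ and let $I_i$ be a maximum independent set of the current subgraph, so $|I_i| \leq \alpha$. Every vertex outside $I_i$ must have an out-edge to, or an in-edge from, some vertex of $I_i$, for otherwise it could be added to $I_i$. Splitting the non-$I_i$ vertices into the two corresponding groups and pigeonholing on the larger group yields some $u \in I_i$ whose closed out-neighborhood in $G_i$ has size at least $(n_i - \alpha)/(2\alpha) + 1$. Since OODS picks the vertex of largest out-neighborhood, this gives the recurrence $n_{i+1} \leq n_i - (n_i - \alpha)/(2\alpha) - 1$, which drives $n_i$ below $\alpha$ in $O(\alpha \ln(K/\alpha))$ iterations; at most $\alpha$ further picks cover the remainder, matching the claimed bound.

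The main obstacle is obtaining the precise constant factor $2$ in front of $\ln(K/\alpha)$. The factor of two arises exactly from the pigeonhole split between out- and in-incidence to the independent set, since the greedy rule only rewards out-degree; tightening or loosening this step determines the leading constant. A further subtlety is that OODS may alternate between the acyclic and cyclic branches as the subgraph evolves, so the accounting must also ensure that the batches of sources added in the acyclic branch do not overflow the greedy potential argument and that they retain the independence property needed in the special-case analysis.
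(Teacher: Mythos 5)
Your treatment of the two special cases is sound and close to the paper's: for undirected graphs the greedily chosen vertices are pairwise non-adjacent, and for DAGs you verify directly that no edge in either direction can join two members of $D$ (the paper instead argues that removing a no-root vertex and its out-neighbours decreases the independence number by one; the two arguments are essentially equivalent). The problem is in your general-case recurrence.

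The pigeonhole step does not deliver what you need. You partition the vertices outside a maximum independent set $I_i$ into those receiving an edge \emph{from} $I_i$ and those sending an edge \emph{into} $I_i$, and pigeonhole on the larger group. If the larger group is the first one, you indeed get some $u\in I_i$ with out-degree at least $(n_i-\alpha)/(2\alpha)$, and the max-out-degree greedy rule removes that many vertices. But if the larger group is the second one, the pigeonhole only produces a vertex of $I_i$ with large \emph{in}-degree, which the greedy rule (which selects by out-degree) cannot exploit; the vertices in that group are each guaranteed only a single out-edge, so no useful lower bound on the maximum out-degree follows from this local count. You flag the out/in split as the source of the constant $2$, but it is not merely a constant-factor issue: in the bad case your recurrence $n_{i+1}\le n_i-(n_i-\alpha)/(2\alpha)-1$ is simply not established. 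The paper avoids this by arguing globally rather than locally around $I_i$: Tur\'an's theorem applied to the underlying undirected graph gives $|E_s|\ge |V_s|^2/(2\alpha)-|V_s|/2$, and since the maximum out-degree is at least the \emph{average} out-degree $|E_s|/|V_s|$, one gets $d_s^+\ge |V_s|/(2\alpha)-1/2$ regardless of which way the edges point. Replacing your pigeonhole step with this edge-counting argument repairs the proof and yields the same geometric decay $|V_s|\le K e^{-s/(2\alpha)}$ and the stated bound $\lceil\alpha(1+2\ln(K/\alpha))\rceil$ after handling the terminal phase (either fewer than $\alpha$ vertices remain, or the residual graph becomes acyclic and the DAG case takes over).
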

\begin{proof}
    If the graph \(G\) is undirected, then in each iteration of the Algorithm~\ref{algs:OODS} we select a vertex and remove its neighbors. The selected vertices are therefore pairwise non-adjacent and form an independent set; hence the resulting out-dominating set \(D\) satisfies $|D|\leq \alpha$. The remainder of our analysis focuses on the directed graphs.

    Recalling the definition of the no-root vertex, for any strongly observable directed graph $G$ with the independence number $\alpha$, we can get that if we remove a no-root vertex $k$ and its out-degree neighbors, the remaining graph $G'$ whose independence number is at most $\alpha - 1$. That's because no vertex in graph $G'$ is connected to the vertex $k$. By the definition of strongly observable directed acyclic graph, we can always get the no-root vertex for the remaining graph, which means that $|D| \leq \alpha$.

    Recalling the pseudo-code in Algorithm~\ref{algs:OODS}, we denote the loop index as $s$ and initialize $s = 0$. In other words, $G_s = ([V_s],E_s)$ shows the residual graph obtained after $s$ loops of removing vertices.
    Given any graph $G_s = ([V_s],E_s)$ with no no-root vertices, $G_{s+1}$ is obtained by removing from $G_s$ the vertex $i_s$ with the largest out-degree $d_s^{+}$. Hence,
    \[d_s^{+} \ge \frac{|E_S|}{|V_S|} \geq \frac{|V_s|}{2\alpha_s} - \frac{1}{2} \geq \frac{|V_s|}{2\alpha} - \frac{1}{2},\]
    by Turan’s theorem (e.g.,~\cite{alon2016probabilistic}), where $\alpha_s$ is the independence number of $G_s$ and $\alpha \ge \alpha_s$. This shows that
    \[|V_{s+1}| = |V_s| - d_s^{+}\leq |V_s|(1-\frac{1}{2\alpha}) \leq |V_s| e^{-s/2\alpha}.\]
    Iterating, we obtain $|V_s| \leq Ke^{-1/2\alpha}$. We define the graph $G_{s_0} = ([V_{s_0}], E_{s_0})$ with the independence number $\alpha_{s_0}$ as the the final residual graph has cycles. We can discuss this in two cases:
    \paragraph{Case 1:} $|V_{s_0}| < \alpha$.

    We can get a graph $G_{s_1} = (|V_{s_1}|,E_{s_1})$ with $s_1 \leq s_0$, which satisfies $|V_{s_1}| < \alpha$ and $|V_{s_1 - 1}| \geq \alpha$. Since $V_{s_1-1} \leq K e^{-(s_1-1)/(2\alpha)}$, we have $s_1 - 1 \leq \lceil2\alpha \ln(K/|V_{s_1-1}|)\rceil \leq \lceil2\alpha \ln(K/\alpha)\rceil$. So we can get
    \[|D| \leq |V_{s_1}| + \lceil2\alpha \ln(K/\alpha)\rceil + 1 \leq \lceil\alpha(1 + 2\ln(K/\alpha))\rceil.\]

    \paragraph{Case 2:} $|V_{s_0}| \geq \alpha$.
    
    By $V_{s_0} \leq K e^{-s_0/(2\alpha)}$, we have $s_0 \leq \lceil2\alpha \ln(K/|V_{s_0}|)\rceil \leq \lceil2\alpha \ln(K/\alpha)\rceil$. Because $G_{s_0}$ has no cycles and the independence number $\alpha_{s_0} \leq \alpha$, we can get
    \[|D| \leq \alpha_{s_0} + \lceil2\alpha \ln(K/\alpha)\rceil \leq \lceil\alpha(1 + 2\ln(K/\alpha))\rceil.\] 
    The proof is complete.
\end{proof}

\subsubsection{Proof for Theorem \ref{the:sog-erb}}

Recalling Lemma \ref{lem:bsgsog}, for all arms $k \neq k^*$, we have
\[\frac{4}{7}\Delta_k - \frac{3}{4}2^{-m} - 6\rho_m \leq \Delta_k^m \leq \frac{8}{7}\Delta_k + 2^{-(m-1)} + 2\rho_m.\]
We first define the regret $R_k^m$ generated by arm $k$ in epoch $m$ as 
\[R_k^m\triangleq\Delta_k\widetilde{n}_{k}^m=\begin{cases}
    \Delta_kn_{k}^m & k\neq k_m\\
    \Delta_k\widetilde{n}_{k}^m & k=k_m
\end{cases}.\]
Then we analyze the regret in following three cases:

\paragraph{Case 1:} $0<\Delta_k\le 64\rho_{m-1}$.\\ 
If $k \neq k_m$, then we have \[R_k^m=\Delta_k n_{k}^m\le 64\rho_{m-1}n_k^m .\] 
If $k=k_m$, then we have \[R_k^m=\Delta_k\widetilde{n}_{k_m}^m\le 64\rho_{m-1}N_m.\]

\paragraph{Case 2:} $\Delta_k \leq 8 \cdot 2^{-m}$ and $\rho_{m-1} \leq \frac{\Delta_k}{64}$.\\
If $k \neq k_m$, then we have \[R_k^m=n_{k}^m\Delta_k =\lambda_m(\Delta_k^{m-1})^{-2} \Delta_k \leq \lambda_m 2^{2(m-1)} \Delta_k  \leq \frac{16 \lambda_m}{\Delta_k}.\] 
If $k=k_m$, since $\Delta_{k_m}^{m-1} = 2^{-(m-1)}$, we can get:
\begin{align*}
    R_k^m=\widetilde{n}_{k_m}^m \Delta_{k_m} \leq 
    N_m\Delta_{k_m} = \lceil K\lambda_m2^{2(m-1)} \rceil \Delta_{k_m} \leq \frac{16K\lambda_m}{\Delta_{k_m}} +\Delta_{k_m} \leq \frac{16K\lambda_m}{\Delta} + 1.
\end{align*}

\paragraph{Case 3:} $\Delta_k > 8 \cdot 2^{-m}$ and $\rho_{m-1} \leq \frac{\Delta_k}{64}$.\\
By Lemma \ref{lem:bsg} we have
\[\Delta_k^{m-1} \geq \frac{4}{7}\Delta_k - \frac{3}{2}2^{-m} - \frac{6}{64}\Delta_k \geq \Delta_k\left(\frac{4}{7} - \frac{3}{16} - \frac{6}{64}\right) \geq 0.29 \Delta_k.\]
In this case, it is impossible that $k=k_m$ because $\Delta_{k_m}^{m-1} = 2^{-(m-1)}<0.29\cdot 8\cdot 2^{-m}<0.29\Delta_k$.
So we can obtain
\begin{align*}
   R_k^m= n_k^m \Delta_k = \lambda_m(\Delta_k^{m-1})^{-2} \Delta_k
    \leq \frac{\lambda_m}{0.29^2 \Delta_k} 
    \leq \frac{16\lambda_m}{\Delta_k}.
\end{align*}

We define $\cA^m\triangleq\left\{ k\in[K]\,\big|\,0<\Delta_k\le 64\rho_{m-1} \right\}$ for epoch $m$. By combining all three cases, we can upper bound the regret as
\begin{equation}\label{eq:sog-mab-regret}
\begin{split}
R(T)=&\sum_{m=1}^M\Bigg(\sum_{k \in \mathcal{A}^m} R_k^m + \sum_{k \notin \mathcal{A}^m} R_k^m\Bigg)\\    
\le& \sum_{m=1}^M\Bigg( 64\rho_{m-1}N_m+\sum_{k \in \mathcal{A}^m,k\neq k_m} 64\rho_{m-1}n_k^m + \sum_{k \notin \mathcal{A}^m, \Delta_k>0} \frac{16\lambda_m}{\Delta_k}\\
&+ \left(\frac{16K\lambda_m}{\Delta} + 1\right)\BI(0<\Delta_{k_m} \le 8 \cdot 2^{-m}) \Bigg)\\ 
\le& \sum_{m=1}^M\Bigg( 64\rho_{m-1}N_m+\sum_{\Delta_k>0} 64\rho_{m-1}n_k^m + \sum_{\Delta_k>0} \frac{16\lambda_m}{\Delta_k}\\
&+ \left(\frac{16K\lambda_m}{\Delta} + 1\right)\BI\left(m\le \log_2\left(8/\Delta\right)\right)\Bigg)\\ 
\le& \sum_{m=1}^M\Bigg( 128\rho_{m-1}N_m + \sum_{\Delta_k>0} \frac{16\lambda_m}{\Delta_k}\Bigg)+ \sum_{m=1}^{\log_2(8/\Delta)}\left(\frac{16K\lambda_m}{\Delta} + 1\right)
\end{split}
\end{equation}

where the last inequality uses the fact that $\sum_{\Delta_k>0} n_k^m\le N_m$. Notice that we can bound the expectation of the offset level as
\[\mathbb{E}[D_m] = 2(1-\delta_m)C_m + \delta_m N_m \leq 2C_m + 1\]
and we can bound $\sum_{m=1}^M \rho_{m-1} N_m$ as
\begin{equation}\label{eq:sog-mab-rho}
    \begin{split}
        \sum_{m=1}^M \rho_{m-1} N_m
        &\leq \sum_{m=1}^M \left(\sum_{s=1}^{m-1}\frac{D_s}{8^{m-1-s}N_s}\right)N_m \\
        &\leq 4\sum_{m=1}^M \left(\sum_{s=1}^{m-1}\frac{(4^{m-1-s} + 1)\lambda_m}{8^{m-1-s}\lambda_s}D_s\right) \\
        &= 4\sum_{m=1}^M \left(\sum_{s=1}^{m-1}((7/12)^{m-1-s} + (7/48)^{m-1-s})D_s\right) \\
        &= 4\sum_{s=1}^{M-1} D_s \sum_{m=s+1}^M (7/12)^{m-1-s} + (7/48)^{m-1-s}\\
        &\leq 4\left(\sum_{m=1}^{M-1} D_m\right)\sum_{j=0}^{\infty} \left(7/12\right)^{j} + \left(7/48\right)^{j} 
        \leq 11\sum_{m=1}^{M-1} D_m.
    \end{split}
\end{equation}
Recalling the pseudo-code of SOG-BARBAT in Algorithm~\ref{algs:SOG-BARBAT}, we define the loop (Line 10-16) index as $a$. For ease of analysis, if removing multiple arms from $H^m$ in one loop, we also consider this case as multi loops where the $\overline H_a^m$ is set by zero. According to the above analysis, we have $n_k^m \leq \frac{16\lambda_m}{\Delta_k^2}$. Defining $b = K - |\cA^m|$, by Lemma \ref{lem:sdsog}, we can set a parameter $\gamma^m = \max_{s = 1,2,\cdots,b} |D_a^m|$. With loss of generality, for each arm $k \in \cA^m$, we set $\Delta_1 \geq \Delta_2 \geq \cdots \geq \Delta_{b}$. Recalling the process of Algorithm \ref{algs:SOG-BARBAT}, we only need to pull $\overline H_a^m$ times for all arms in the out-domination set $D_s^m$, where we can observe at least $\overline H_a^m$ times for all arms $k \in [K]$. By this way, to maximize the value of $\sum_{k \in \cA^m} R_k^m$, we should try to select the larger part of the suboptimal arm from the arms that have not been removed. In addition, we can have the following inequality:
\[\overline H_1^m \leq n_1^m = \frac{16\lambda_m}{\Delta_1^2}, \overline H_2^m - \overline H_1^m \leq \frac{16\lambda_m}{\Delta_2^2}, \cdots , \overline H_{b}^m - \overline H_{b - 1}^m \leq \frac{16\lambda_m}{\Delta_{b}^2}.\]
By this way, we can get the following equation:
\begin{equation}\label{eq:sog-mab-scale}
\begin{split}
    \sup \sum_{k \in \cA^m} R_k^m &= \sup \sum_{s=1}^{b} \overline H_a^m \left(\sum_{k \in \cA^m, k \in D_s^m}\Delta_k \right)\\
    &= \sum_{s = 1}^{b - \gamma^m + 1} \left(\frac{16\lambda_m}{\Delta_s^2} - \frac{16\lambda_m}{\Delta_{s-1}^2}\right) \left(\sum_{k = s}^{s + \gamma^m} \Delta_k\right) 
    + \sum_{s = b - \gamma^m + 2}^{b} \left(\frac{16\lambda_m}{\Delta_s^2} - \frac{16\lambda_m}{\Delta_{s-1}^2}\right) \left(\sum_{k=s}^{b} \Delta_k\right)\\
    &=\sum_{k=1}^{\gamma^m} \frac{16\lambda_m}{\Delta_k} + \sum_{k=\gamma^m+1}^{b}\left(\frac{16\lambda_m}{\Delta_k^2} - \frac{16\lambda_m}{\Delta_{k-\gamma^m}^2}\right)\Delta_k\\
    &\leq \sum_{k=\gamma^m + 1}^{2\gamma^m} \frac{20\lambda_m}{\Delta_k} + \sum_{k=2\gamma^m+1}^{b}\left(\frac{16\lambda_m}{\Delta_k^2} - \frac{16\lambda_m}{\Delta_{k-1}^2}\right)\Delta_k 
    \leq \cdots \leq \sum_{k=b-\gamma^m +1}^{b} \frac{32\lambda_m}{\Delta_k}.
\end{split}
\end{equation}

In the second inequality, for ease of write, we set $\frac{16\lambda_m}{\Delta_0^2} = 0$. For the above inequalities, we use the inequality $\frac{\beta_0}{\Delta_k} + \left(\frac{1}{\Delta_{\gamma^m+k}^2} - \frac{1}{\Delta_k^2}\right)\Delta_{\gamma^m + k} \leq \frac{\beta_1}{\Delta_{\gamma^m + k}}$ holds for all $2\geq \beta_1 > \beta_0 \geq 1$. For example, in the first inequality, we have
\begin{flalign*}
    & \sum_{k=1}^{\gamma^m} \frac{16\lambda_m}{\Delta_k} + \sum_{k=\gamma^m + 1}^{2\gamma^m}\left(\frac{16\lambda_m}{\Delta_k^2} - \frac{16\lambda_m}{\Delta_{k-\gamma^m}^2}\right)\Delta_k & \\
    &\qquad \qquad \qquad \qquad = 16\lambda_m \sum_{k=1}^{\gamma^m} \left(\frac{1}{\Delta_k} + \left(\frac{1}{\Delta_{\gamma^m+k}^2} - \frac{1}{\Delta_k^2}\right)\Delta_{\gamma^m + k}\right)
    \leq \sum_{k=\gamma^m + 1}^{2\gamma^m} \frac{20\lambda_m}{\Delta_k}. &
\end{flalign*}
Combining Eq.~\eqref{eq:sog-mab-regret}, Eq.~\eqref{eq:sog-mab-rho} and Eq.~\eqref{eq:sog-mab-scale}, by Lemma \ref{lem:tsl}, we can get
\begin{equation*}
    \begin{split}
        R(T)
        &\leq \sum_{m=1}^{M-1} 1440 \BE[D_m] + \sum_{m=1}^M \sum_{\Delta_k > 0}\frac{16\lambda_m}{\Delta_k} + \sum_{m=1}^{\log_2(8/\Delta)}\left(\frac{16K\lambda_m}{\Delta} + 1\right) \\
        &\leq \sum_{m=1}^{M-1} 1440 \BE[D_m] + \sum_{\Delta_k > 0}\frac{2^{12}(\log^2(T) + 3\log(T)\log(30K))}{\Delta_k} \\
        &\quad + \frac{2^{12}K(\log^2(8/\Delta) + 3\log(8/\Delta)\log(30K)))}{\Delta} + \log(8/\Delta)\\
        &\leq \sum_{m=1}^{M-1} 2880 C_m + 1440 + \sum_{\Delta_k > 0}\frac{2^{14}\log(T)\log(30KT)}{\Delta_k} \\
        &\quad+ \frac{2^{14}K\log(8/\Delta)\log(240K / \Delta))}{\Delta} + \log(8/\Delta) \\
        &\leq \sum_{m=1}^{M-1} 2880 C_m + 1440 + \sum_{k \in \cI^*}\frac{2^{15}\log(T)\log(30KT)}{\Delta_k} \\
        &\quad+ \frac{2^{14}K\log(8/\Delta)\log(240K / \Delta))}{\Delta} + \log(8/\Delta) \\ 
        &= O\left(C + \sum_{k \in \cI^*}\frac{\log(T)\log(KT)}{\Delta_k} + \frac{K\log(1 / \Delta)\log(K / \Delta)}{\Delta}\right).
    \end{split}
\end{equation*}
where $\cI^*$ is the set of at most $\lfloor \alpha(1 + 2\ln(K/\alpha)) \rfloor$ arms with the smallest gaps. Especially, for directed acyclic graphs (including undirected graphs), $\cI^*$ is the set of at most $\alpha$ arms with the smallest gaps.

\subsection{Proof of Theorem \ref{the:ds-erb}}
\label{ape:ds}
\subsubsection{Notations}
Following~\citep{zimmert2019beating,dann2023blackbox}, with loss of generality, we assume $\mu_1 \geq \mu_2 \geq \cdots \geq \mu_K$ and $\Delta_k = \mu_k - \mu_d$ for all arms $k > d$.

\subsubsection{Lemmas for Proving Theorem \ref{the:ds-erb}}

By the methods mentioned before, Lemma~\ref{lem:tne} still holds.
For Lemma \ref{lem:rkc}, we need to consider the impact of changing $k_m$ representing an arm to representing an arm set.

\begin{lemma}
\label{lem:rkcsemi}    
    For any epoch $m$, the length $N_m$ satisfies $d N_m \geq \sum_{k \in \mathcal{K}}n_k^m$. The actual expected pulling times $\widetilde{n}_k^m \geq n_k^m$ for all arms $k \in [K]$.
\end{lemma}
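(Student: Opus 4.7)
The plan is to derive both claims from the single elementary inequality $\Delta_k^{m-1} \geq 2^{-(m-1)}$, which holds by the update rule $\Delta_k^{m-1} \gets \max\{2^{-(m-1)}, r_\ast^{m-1}-r_k^{m-1}\}$. This immediately gives the per-arm bound
\[
n_k^m \;=\; \lambda_m (\Delta_k^{m-1})^{-2} \;\le\; \lambda_m\, 2^{2(m-1)}
\qquad\text{for every } k\in[K].
\]

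For the first claim, I would simply sum this over $k \in [K]$ to obtain $\sum_{k\in[K]} n_k^m \le K\lambda_m 2^{2(m-1)}$, and then observe that, by the definition of $N_m$ in Algorithm~\ref{algs:DS-BARBAT},
\[
d N_m \;=\; d\bigl\lceil K \lambda_m 2^{2(m-1)}/d\bigr\rceil \;\ge\; K\lambda_m 2^{2(m-1)} \;\ge\; \sum_{k\in[K]} n_k^m,
\]
which is exactly the first claim.

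For the second claim, the case $k\notin\cK_m$ is immediate since $\widetilde n_k^m = n_k^m$. For $k\in\cK_m$, from the definition $\widetilde n_k^m = N_m - \frac{1}{d}\sum_{k'\notin\cK_m} n_{k'}^m$, the inequality $\widetilde n_k^m \ge n_k^m$ is equivalent to $d N_m \ge d n_k^m + \sum_{k'\notin\cK_m} n_{k'}^m$. Using $n_k^m \le \lambda_m 2^{2(m-1)}$ for the $d$ copies of $n_k^m$ and the $K-d$ terms in the sum, I bound the right-hand side by $d\cdot\lambda_m 2^{2(m-1)} + (K-d)\lambda_m 2^{2(m-1)} = K\lambda_m 2^{2(m-1)}$, which is already $\le d N_m$ from the first claim.

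There is no real obstacle here; the lemma is purely an accounting check of the pull budget, and the only ingredient needed is the uniform lower bound $\Delta_k^{m-1}\ge 2^{-(m-1)}$ on the estimated gaps, parallel to Lemma~\ref{lem:rkc} but with the factor $d$ absorbed into the rescaled $N_m$ (since each round now observes $d$ arms instead of one).
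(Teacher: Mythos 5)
Your proof is correct and follows essentially the same route as the paper's: both arguments rest on the single bound $n_k^m \le \lambda_m 2^{2(m-1)}$ coming from $\Delta_k^{m-1}\ge 2^{-(m-1)}$, sum it over the arms for the first claim, and then use the definition of $\widetilde n_k^m$ for $k\in\cK_m$ together with $N_m \ge K\lambda_m 2^{2(m-1)}/d$ for the second. Your handling of the ceiling in $N_m$ is in fact slightly more careful than the paper's, which treats $dN_m$ as exactly equal to $K\lambda_m 2^{2(m-1)}$, but this does not change the substance of the argument.
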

\begin{proof}
    Since $\Delta_k^m = \max \{2^{-m}, r_*^m - r_k^m\} \geq 2^{-m}$, we can get
    \[n_k^m = \lambda_m (\Delta_k^{m-1})^{-2} \leq \lambda_m 2^{2(m-1)}.\]
    Therefore, we have
    \[\sum_{k \in \mathcal{K}}n_k^m \leq K \lambda_m 2^{2(m-1)} = d N_m.\]
    Since $N_m = K \lambda_m 2^{2(m-1)} / d$, so for arm $k \in \cK_m$, we have
    \[\widetilde{n}_{k}^m = N_m  - \sum_{k \not\in \cK_m} n_k^m /d \geq \lambda_m 2^{2(m-1)} \geq n_k^m.\]
    This proof is complete.
\end{proof}

Using the methods mentioned before, all lemmas in Appendix \ref{ape:mab} still holds. Maybe Lemma \ref{lem:bsg} will be different because we changed the way $r_{*}^m$ is assigned. Next we will prove that Lemma \ref{lem:bsg} still holds.
\begin{proof}
    Since $|r_k^m - \mu_k| \leq \sqrt{\frac{4\ln(4 /\beta_m)}{\widetilde{n}_k^m}} + \frac{D_m}{N_m}$, we have
    \[-\frac{D_m}{N_m} - \sqrt{\frac{4\ln(4 /\beta_m)}{\widetilde{n}_k^m}} \leq r_{k}^m - \mu_{k} \leq \frac{D_m}{N_m} + \sqrt{\frac{4\ln(4 /\beta_m)}{\widetilde{n}_k^m}}.\]
    Additionally, since
    \[r_{*}^m \leq \top_d\left(\left\{\mu_{k} + \frac{D_m}{N_m} + \sqrt{\frac{4\ln(4 /\beta_m)}{\widetilde{n}_k^m}} - \sqrt{\frac{4\ln(4 /\beta_m)}{\widetilde{n}_k^m}}\right\}_{k\in[K]}\right) \leq \mu_{d} + \frac{D_m}{N_m},\]
    \[r_{*}^m = \top_d\left(\left\{r_k^m - \sqrt{\frac{4\ln(4 /\beta_m)}{\widetilde{n}_k^m}}\right\}_{k\in[K]}\right) \geq r_{d}^m - \sqrt{\frac{4\ln(4 /\beta_m)}{\widetilde{n}_{d}^m}} \geq \mu_{d} - 2\sqrt{\frac{4\ln(4 /\beta_m)}{\widetilde{n}_{d}^m}} - \frac{D_m}{N_m},\]
    we can get
    \[-\frac{D_m}{N_m} - 2\sqrt{\frac{4\ln(4 /\beta_m)}{\widetilde{n}_{d}^m}} \leq r_{*}^m - \mu_{d} \leq \frac{D_m}{N_m}.\]
    According to Algorithm \ref{algs:BARBAT} and Lemma \ref{lem:rkc}, we have $\widetilde{n}_k^m \geq n_k^m$ for all arms $k$. Then we have the following inequality for all $k\in [K]$:
    \[\sqrt{\frac{4\ln(4 /\beta_m)}{\widetilde{n}_k^m}} \leq \sqrt{\frac{4\ln(4 /\beta_m)}{n_k^m}} = \frac{\Delta_k^{m-1}}{8}.\]

    We now establish the upper bound for $\Delta_k^m$ by induction on $m$.
    
    For the base case $m = 1$, the statement is trivial as $\Delta_k^1 = 1$ for all $k \in [K]$.
    
    Assuming the statement is true for the case of $m-1$, we have
    \begin{equation*}
    \begin{split}
        \Delta_k^m = r_*^m - r_k^m
        &= (r_*^m - \mu_{d}) + (\mu_{d} - \mu_k) + (\mu_k - r_k^m) \\
        &\leq \frac{D_m}{N_m}+ \Delta_k + \frac{D_m}{N_m} + \frac{1}{8}\Delta_k^{m-1} \\
        &\leq \frac{2D_m}{N_m} + \Delta_k + \frac{1}{8}\left(\frac{8 \Delta_k}{7} + 2^{-(m-2)} + 2\rho_{m-1}\right) \\
        &\leq \frac{8 \Delta_k}{7} + 2^{-(m-1)} + 2\rho_m,
    \end{split}
    \end{equation*}
    Where the second inequality follows from the induction hypothesis.
     
    Next, we provide the lower bound of $\Delta_k^m$. We can get
    \begin{equation*}
    \begin{split}
        \Delta_k^m =  r_*^m - r_k^m
        &= (r_*^m - \mu_{d}) + (\mu_{d} - \mu_k) + (\mu_k - r_k^m) \\
        &\geq -\frac{D_m}{N_m} - \frac{1}{4}\Delta_{k^*}^{m-1} + \Delta_k -\frac{D_m}{N_m} - \frac{1}{8}\Delta_k^{m-1} \\
        &\geq -\frac{2D_m}{N_m} + \Delta_k - \frac{3}{8}\left(\frac{8 \Delta_k}{7} + 2^{-(m-2)} + 2\rho_{m-1}\right) \\
        &\geq \frac{4}{7}\Delta_k - \frac{3}{2}2^{-m} - 6\rho_m.
    \end{split}
    \end{equation*}
    where the third inequality comes from the upper bound of $\Delta_k^{m-1}$.
\end{proof}

\subsubsection{Proof for Theorem \ref{the:ds-erb}}

Following to the work~\cite{zimmert2019beating}, we can define the regret $R_k^m$ generated by arm $k$ in epoch $m$ as 
\[R_k^m\triangleq\Delta_k\widetilde{n}_{k}^m=\begin{cases}
    \Delta_kn_{k}^m & k\not\in \cK_m\\
    \Delta_k\widetilde{n}_{k}^m & k \in \cK_m
\end{cases}.\]
Then we analyze the regret in following three cases:

\paragraph{Case 1:} $0<\Delta_k\le 64\rho_{m-1}$.\\ 
If $k\not\in \cK_m$, then we have \[R_k^m=\Delta_k n_{k}^m\le 64\rho_{m-1}n_k^m .\] 
If $k\in \cK_m$, then we have \[R_k^m=\Delta_k\widetilde{n}_{k_m}^m\le 64\rho_{m-1}dN_m.\]

\paragraph{Case 2:} $\Delta_k \leq 8 \cdot 2^{-m}$ and $\rho_{m-1} \leq \frac{\Delta_k}{64}$.\\
If $k\not\in \cK_m$, then we have \[R_k^m=n_{k}^m\Delta_k =\lambda_m(\Delta_k^{m-1})^{-2} \Delta_k \leq \lambda_m 2^{2(m-1)} \Delta_k  \leq \frac{16 \lambda_m}{\Delta_k}.\] 
If $k\in \cK_m$, since $\Delta_{k_m}^{m-1} = 2^{-(m-1)}$, we can get:
\begin{align*}
    R_k^m=\widetilde{n}_{k_m}^m \Delta_{k_m} \leq 
    dN_m\Delta_{k_m} = d\lceil K\lambda_m 2^{2(m-1)} /d \rceil \Delta_{k_m} \leq \frac{16Kd\lambda_m}{\Delta_{k_m}} + d\Delta_{k_m} \leq \frac{16Kd\lambda_m}{\Delta} + d.
\end{align*}

\paragraph{Case 3:} $\Delta_k > 8 \cdot 2^{-m}$ and $\rho_{m-1} \leq \frac{\Delta_k}{64}$.\\
By Lemma \ref{lem:bsg} we have
\[\Delta_k^{m-1} \geq \frac{4}{7}\Delta_k - \frac{3}{2}2^{-m} - \frac{6}{64}\Delta_k \geq \Delta_k\left(\frac{4}{7} - \frac{3}{16} - \frac{6}{64}\right) \geq 0.29 \Delta_k.\]
In this case, it is impossible that $k \in \cK_m$ because $\Delta_{k_m}^{m-1} = 2^{-(m-1)}<0.29\cdot 8\cdot 2^{-m}<0.29\Delta_k$.
So we can obtain
\begin{align*}
   R_k^m= n_k^m \Delta_k = \lambda_m(\Delta_k^{m-1})^{-2} \Delta_k
    \leq \frac{\lambda_m}{0.29^2 \Delta_k} 
    \leq \frac{16\lambda_m}{\Delta_k}.
\end{align*}

We define $\cA^m\triangleq\left\{ k\in[K]\,\big|\,0<\Delta_k\le 64\rho_{m-1} \right\}$ for epoch $m$. By combining all three cases, we can upper bound the regret as
\begin{equation}\label{eq:dsmab-regret}
\begin{split}
R(T)=&\sum_{m=1}^M\Bigg(\sum_{k \in \mathcal{A}^m} R_k^m + \sum_{k \notin \mathcal{A}^m} R_k^m\Bigg)\\    
\le& \sum_{m=1}^M\Bigg( 64\rho_{m-1}d N_m+\sum_{k \in \mathcal{A}^m,k\neq k_m} 64\rho_{m-1}n_k^m + \sum_{k \notin \mathcal{A}^m, \Delta_k>0} \frac{16\lambda_m}{\Delta_k}\\
&+ \left(\frac{16Kd\lambda_m}{\Delta} + d\right)\BI(0<\Delta_{k_m} \le 8 \cdot 2^{-m}) \Bigg)\\ 
\le& \sum_{m=1}^M\Bigg( 64\rho_{m-1}dN_m+\sum_{\Delta_k>0} 64\rho_{m-1}n_k^m + \sum_{\Delta_k>0} \frac{16\lambda_m}{\Delta_k} \\
&+ \left(\frac{16Kd\lambda_m}{\Delta} + d\right)\BI\left(m\le \log_2\left(8/\Delta\right)\right)\Bigg)\\ 
\le& \sum_{m=1}^M\Bigg( 128\rho_{m-1}dN_m + \sum_{\Delta_k>0} \frac{16\lambda_m}{\Delta_k}\Bigg)+ \sum_{m=1}^{\log_2(8/\Delta)}\left(\frac{16Kd\lambda_m}{\Delta} + d\right)
\end{split}
\end{equation}
where the last inequality uses the fact that $\sum_{\Delta_k>0} n_k^m\le N_m$. Notice that we can bound the expectation of the offset level as
\[\mathbb{E}[D_m] = 2(1-\delta_m)C_m + \delta_m N_m \leq 2C_m + 1\]
and we can bound $\sum_{m=1}^M \rho_{m-1} N_m$ as
\begin{equation}\label{eq:dsmab-rho}
    \begin{split}
        \sum_{m=1}^M \rho_{m-1} dN_m
        &\leq \sum_{m=1}^M \left(\sum_{s=1}^{m-1}\frac{D_s}{8^{m-1-s}N_s}\right)dN_m \\
        &\leq 4d\sum_{m=1}^M \left(\sum_{s=1}^{m-1}\frac{(4^{m-1-s} + 1)\lambda_m}{8^{m-1-s}\lambda_s}D_s\right) \\
        &= 4d\sum_{m=1}^M \left(\sum_{s=1}^{m-1}((7/12)^{m-1-s} + (7/48)^{m-1-s})D_s\right) \\
        &= 4d\sum_{s=1}^{M-1} D_s \sum_{m=s+1}^M (7/12)^{m-1-s} + (7/48)^{m-1-s}\\
        &\leq 4d\left(\sum_{m=1}^{M-1} D_m\right)\sum_{j=0}^{\infty} \left(7/12\right)^{j} + \left(7/48\right)^{j} 
        \leq 11d\sum_{m=1}^{M-1} D_m.
    \end{split}
\end{equation}
Combining Eq.~\eqref{eq:dsmab-regret} and Eq.~\eqref{eq:dsmab-rho}, by Lemma \ref{lem:tsl}, we can get
\begin{equation*}
    \begin{split}
        R(T)
        &\leq \sum_{m=1}^{M-1} 1440 \BE[dD_m] + \sum_{m=1}^M \sum_{\Delta_k > 0}\frac{16\lambda_m}{\Delta_k} + \sum_{m=1}^{\log_2(8/\Delta)}\left(\frac{16Kd\lambda_m}{\Delta} + d\right) \\
        &\leq \sum_{m=1}^{M-1} 1440 \BE[dD_m] + \sum_{\Delta_k > 0}\frac{2^{12}(\log^2(T) + 3\log(T)\log(30K))}{\Delta_k} \\
        &\quad + \frac{2^{12}Kd(\log^2(8/\Delta) + 3\log(8/\Delta)\log(30K)))}{\Delta} + d\log(8/\Delta) \\
        &\leq \sum_{m=1}^{M-1} 2880 dC_m + 1440d + \sum_{\Delta_k > 0}\frac{2^{14}\log(T)\log(30KT)}{\Delta_k} \\
        &\quad+ \frac{2^{14}Kd\log(8/\Delta)\log(240K / \Delta)}{\Delta} + d\log(8/\Delta) \\ 
        &= O\left(dC + \sum_{\Delta_k > 0}\frac{\log(T)\log(KT)}{\Delta_k} + \frac{dK\log(1 / \Delta)\log(K / \Delta)}{\Delta}\right).
    \end{split}
\end{equation*}

\newpage
\section{Experimental Details}
\label{ape:ed}

\subsection{Cooperative Multi-Agent Multi-Armed Bandits}
We first consider CMA2B, where the mean rewards $\{\mu_k\}_{k\in[K]}$ is uniformly distributed in the interval $[0.02, 0.96]$. For each arm $k$, we generate the stochastic rewards according to a truncated normal distribution with support $[0, 1]$, mean $\mu_k$, and variance $0.1$. We use a time horizon $T = 50000$, set the corruption level $C = 2000, 5000$ and the number of arms $K = 12, 16$. Following ~\cite{lu2021stochastic}, we set the two worst arms as target arms and corrupt the target arm's reward to $1$ and the other arms' rewards to zero until the corruption level is exhausted.

 As seen in Table~\ref{tab:cma2b}, the time cost by BARBAT is much shorter than that of the FTRL, which confirms that the computational efficiency of the BARBAT framework is much higher than that of the FTRL framework.

\begin{table*}[t]
    \centering
    \renewcommand\arraystretch{2}
    \begin{tabular}{|c|c|c|c|c|}
    \hline
     & BARBAT & DRAA & IND-BARBAR & IND-FTRL\\
    \hline
    K = 12
    & 0.13 s & 0.14 s & 0.12 s & 1.53 s \\
    \cline{1-5}
    K = 16
    & 0.13 s & 0.15 s & 0.14 s & 1.91 s \\
    \cline{1-5}
    \end{tabular}
    \caption{
    The cost time of each agent for BARBAT and all baseline methods.
    }
    \label{tab:cma2b} 
\end{table*}

\begin{figure*}[t]
    \centering
    \begin{tabular}{cc}
            \includegraphics[width = 0.45\textwidth]{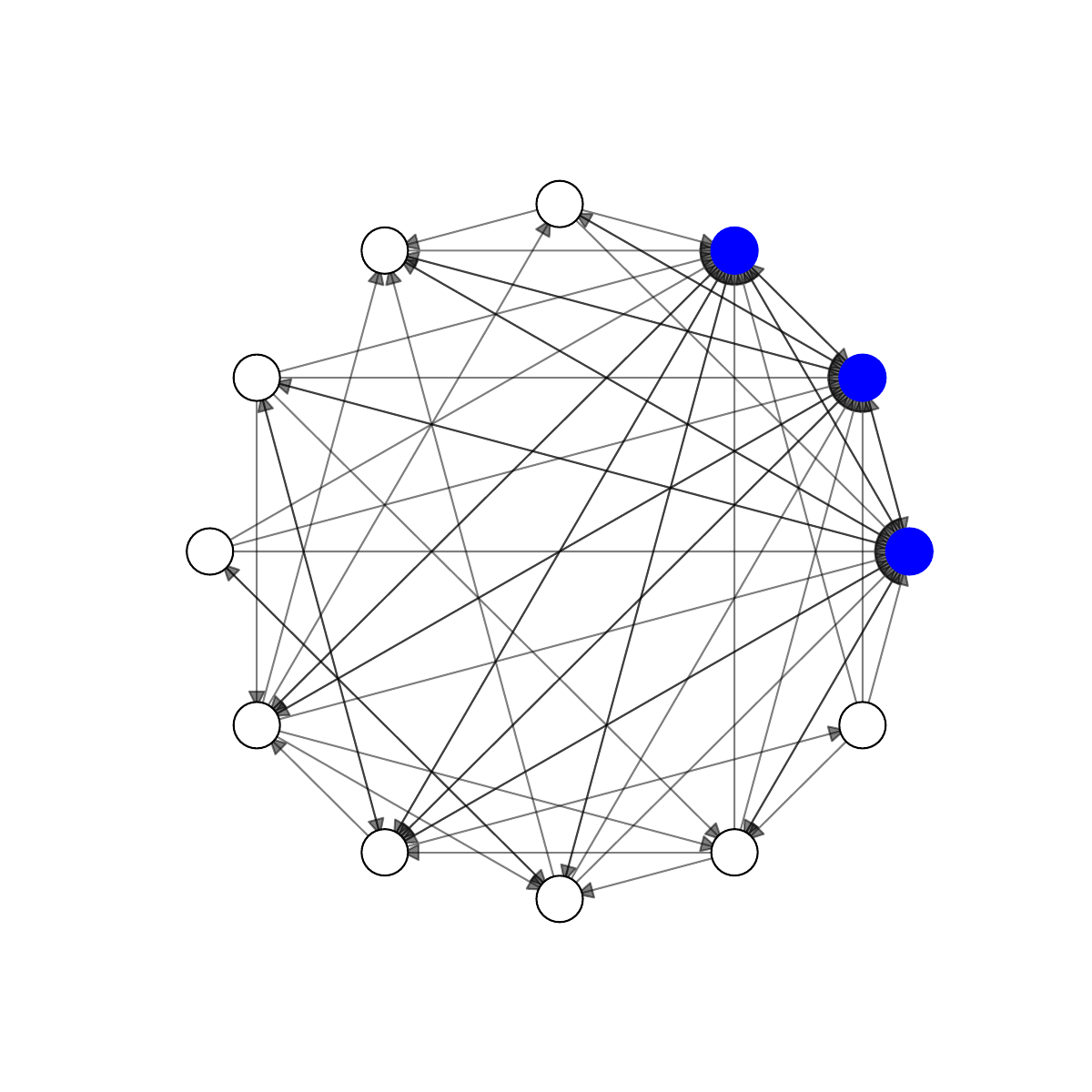} &  
            \includegraphics[width = 0.45\textwidth]{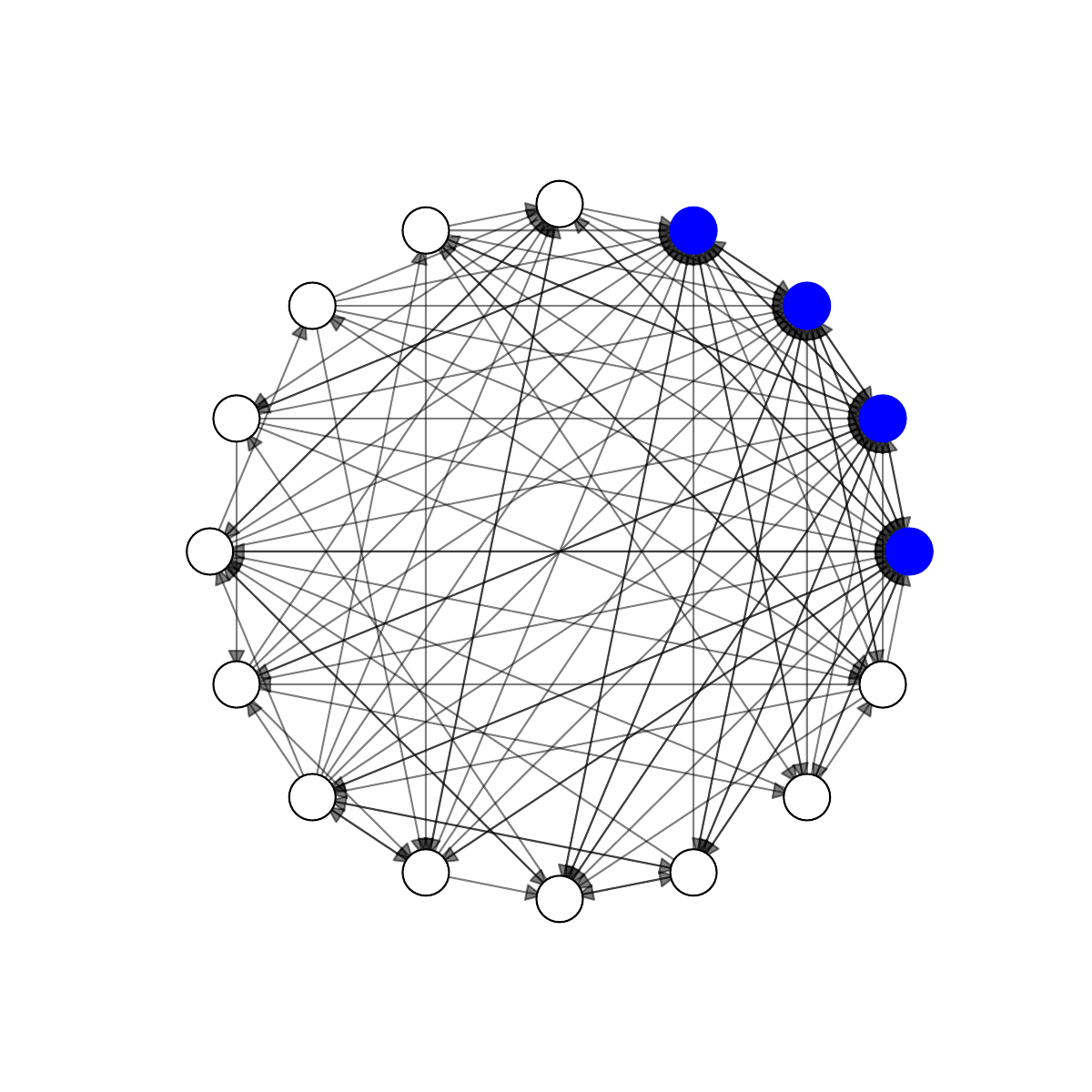} \\
        (a)  K = 12 &
        (b)  K = 16 
    \end{tabular}
    \caption{The feedback structure for the strongly observable graph bandits.}
    \label{fig:graph}
\end{figure*}

\begin{table*}[t]
    \centering
    \renewcommand\arraystretch{2}
    \begin{tabular}{|c|c|c|c|c|c|}
    \hline
     & BARBAT & HYBRID & LBINF & LBINF\_LS & LBINF\_GD\\
    \hline
    K = 12
    & 0.36 s & 14.63 s & 10.16 s & 10.21 s & 9.84 s \\
    \cline{1-6}
    K = 16
    & 0.37 s & 18.62 s & 12.37 s & 12.67 s & 12.69 s \\
    \cline{1-6}
    \end{tabular}
    \caption{
    The cost time of each agent for BARBAT and all baseline methods.
    }
    \label{tab:semi} 
\end{table*}

\subsection{Strongly Observable Graph Bandits} \label{app:exp_graph}
Following~\cite{lu2021stochastic}, we adopt the Erdos–Renyi model~\cite{erd6s1960evolution} to generate feedback graph in Figure~\ref{fig:graph}, with the independence number $\alpha = 4$. Specifically, for each pair of arms $(u, v) \in [K] \times [K]$ with $u = v$, we connect them with a fixed probability $0.5$. Blue circles represent vertices without self-loops, while white circles represent vertices with self-loops.

\subsection{$d$-Set Semi-bandits} \label{app:exp_ds}
In $d$-set semi-bandits, we set $d = 3$ for $K = 12$, and $d = 4$ for $K = 16$. Following the recent works~\cite{zimmert2019beating,tsuchiya2023further}, we generate the stochastic rewards according to a Bernoulli distribution, other settings are the same as cooperative standard MAB.

As seen in Table~\ref{tab:semi}, the time cost by BARBAT is much shorter than that of the FTRL, which confirms that the computational efficiency of the BARBAT framework is much higher than that of the FTRL framework.

\end{document}